\newif\iffinal
\newif\ifarxiv
\newif\iffuture
\newif\iflong
\newcounter{assumption}
\newenvironment{assumption}[1][]{\refstepcounter{assumption}\par\medskip
   \noindent \textbf{Assumption~\theassumption. #1} \rmfamily}{\medskip}
   \newcounter{observation}[chapter]
    \newcounter{lemma}[chapter]
\renewenvironment{lemma}[1][]{\refstepcounter{lemma}\par\medskip
   \noindent \textbf{Lemma~\thelemma. #1} \rmfamily}{\medskip}
    \newcounter{definition}[chapter]
\renewenvironment{definition}[1][]{\refstepcounter{lemma}\par\medskip
   \noindent \textbf{Definition~\thelemma. #1} \rmfamily}{\medskip}
\newcommand{\sref}[2]{\hyperref[#2]{#1 \ref*{#2}}}
\def\NewTheorem#1#2{%
  \newaliascnt{#1}{theorem}
  \newtheorem{#1}[#1]{#2}
  \aliascntresetthe{#1}
  \expandafter\def\csname #1autorefname\endcsname{#2}
}
 \newtheorem{theorem}{Theorem}[section]
\DeclareMathOperator{\lmax}{\ell_{max}}
\newcommand\eps{\ensuremath{\varepsilon}}
\definecolor{darkred}{rgb}{0.5,0,0}
\definecolor{lightblue}{rgb}{0,0.4,0.8}
\definecolor{darkgreen}{rgb}{0,0.5,0}
\renewcommand{\epsilon}{\varepsilon}
\newcommand\E[1]{\mathbb{E}\left[\,#1\,\right]}
\renewcommand{\Pr}[1]{\mathbb{P}\left[\,#1\,\right]}
\definecolor{blanchedalmond}{rgb}{1.0, 0.92, 0.9}
	\definecolor{Dandelion}{rgb}{0.95, 0.95, 0.7}
\newcommand{\nnote}[1]{\iffinal \else {\color{purple}[N: #1 ]} \fi}
\newcommand{\snote}[1]{\iffinal  \else \iffuture {\color{blue}[ #1 ]}\fi\fi}
\newcommand{\corr}[1]{{#1}}
\date{\today} 
\title{Learning Hierarchically-Structured Concepts}
\date{}
\author[1]{Nancy Lynch}
\author[2]{Frederik Mallmann-Trenn}
 \affil[1]{Massachusetts Institute of Technology, Cambridge, Massachusetts, USA}
 \affil[2]{King's College London, London, England}
\begin{document}
\maketitle 
\begin{abstract}
We use a recently developed synchronous Spiking Neural Network (SNN) model to study the problem of learning hierarchically-structured concepts.
We introduce an abstract data model that describes simple hierarchical concepts. 
We define a feed-forward layered SNN model, with learning modeled using Oja's local learning rule, a well known biologically-plausible rule for adjusting synapse weights.
We define what it means for such a network to recognize hierarchical concepts; our notion of recognition is robust, in that it tolerates a bounded amount of noise.

Then, we present a learning algorithm by which a layered network may learn to recognize hierarchical concepts according to our robust definition.
We analyze correctness and performance rigorously; the amount of time required to learn each concept, after learning all of the sub-concepts, is approximately 
$O\left(\frac{1}{\eta k} \left(\lmax \log(k) + \frac{1}{\epsilon} \right) + b \log(k)\right)$, where $k$ is the number of sub-concepts per concept, $\lmax$ is the maximum hierarchical depth, $\eta$ is the learning rate, $\epsilon$ describes the amount of uncertainty allowed in robust recognition, and $b$ describes the amount of weight decrease for "irrelevant" edges.\snote{I decided to be explicit about b.}
An interesting feature of this algorithm is that it allows the network to learn sub-concepts in a highly interleaved manner.
This algorithm assumes that the concepts are presented in a noise-free way; we also extend these results to accommodate noise in the learning process.
Finally, we give a simple lower bound saying that, in order to recognize concepts with hierarchical depth two with noise-tolerance, a neural network should have at least two layers.

The results in this paper represent first steps in the theoretical study of hierarchical concepts using SNNs.  The cases studied here are basic, but they suggest many directions for extensions to more elaborate and realistic cases.
\end{abstract}

{\bf Keywords:} 
Hierarchical Concepts, 
Representing Hierarchical Concepts,
Recognizing Hierarchical Concepts,
Learning Hierarchical Concepts, 
Spiking Neural Networks,
Brain-Inspired Algorithms

\setcounter{tocdepth}{1}

\section{Introduction}

We are interested in the general problem of \emph{how concepts that have structure are represented in the brain}.  What do these representations look like? How are they learned, and how do the concepts get recognized after they are learned?
We draw inspiration from recent experimental research on computer vision in convolutional neural networks (CNNs) by Zeiler and Fergus~\cite{Zeiler} and Zhou, et al.~\cite{Zhou19}.  
This research shows that CNNs learn to represent structure in visual concepts:  lower layers of the network represent basic concepts and higher layers represent successively higher-level concepts.
%
%
This observation is consistent with neuroscience research, which indicates that visual processing in mammalian brains is performed in a hierarchical way, starting from primitive notions such as position, light level, etc., and building toward complex objects; see, e.g.,~\cite{Hubel59, Hubel62, Felleman91}.
More generally, we consider the thesis that \emph{the structure that is naturally present in real-world concepts get mirrored in their brain representations, in some natural way that facilitates both learning and recognition}.

We approach this problem using ideas and techniques from theoretical computer science, distributed computing theory, and in particular, from recent work by Lynch, et al. on synchronous Spiking Neural Networks (SNNs) ~\cite{CamITCS,LMP19,CamComposition,SuCL19,Renaming}.
These papers began the development of an algorithmic theory of SNNs, developing formal foundations, and using them to study problems of attention and focus, neural representation, and short-term learning.
Here we continue that general development, by initiating the study of long-term learning within the same framework.

We focus here on learning hierarchically-structured concepts.
We capture these formally in terms of abstract \emph{concept hierarchies}, in which concepts are built from lower-level concepts, which in turn are built from still-lower-level concepts, etc.
Such structure is natural, e.g., for physical objects that are learned and recognized during human or computer visual processing.
An example of such a hierarchy might be the following model of a \emph{human}: 
A human consists of a \emph{body}, a \emph{head}, a \emph{left leg}, a \emph{right leg}, a \emph{left arm}, and a \emph{right arm}. 
Each of these concepts may consist of more concepts, allowing us to model a human to an arbitrary degree of granularity. 
Most concepts in the real world have additional structure, e.g., arms and legs are positioned symmetrically; however, we ignore such information for now and assume simply that each concept consists of sub-concepts.
For this initial theoretical study, we make some additional simplifications:  we fix a maximum level $\lmax$ for concept hierarchies, we assume that all non-primitive concepts have the same number $k$ of "child concepts", and we assume that our concept hierarchies are trees, i.e., there is no overlap in the composition of different concepts at the same level of a hierarchy.
We expect that these assumptions can be removed or weakened, but it seems useful to start with the simplest case. 


This paper demonstrates theoretically, in terms of simple hierarchies, how hierarchically-structured data can be represented, learned, and recognized in feed-forward layered Spiking Neural Networks.  
Specifically, we provide formal definitions for \emph{concept hierarchies} and \emph{layered neural networks}.  
We define precisely what it means for a layered neural network to \emph{recognize} a particular concept in a concept hierarchy.
Our notion of recognition is \emph{robust}:  a concept is required to be recognized if the input is close to the ideal concept, and is required not to be recognized if the input is far from the ideal.
We also define what it means for a layered neural network to \emph{learn to recognize} a concept hierarchy, according to our robust definition of recognition.

Next, we present two simple, efficient algorithms (layered networks) that learn to recognize concept hierarchies; the first assumes reliability during the learning process, whereas the second tolerates a bounded amount of noise.
An example of such learning is shown in \autoref{fig:main}.
We also provide a preliminary lower bound, saying that, in order to robustly recognize concepts with hierarchical depth $2$, a neural network should have at least $2$ layers.  We discuss possible extensions of this bound to concepts with larger depth.
We end with many directions for extending this work.
\setlength{\belowcaptionskip}{-10pt}
\setlength{\abovecaptionskip}{-9pt}
\begin{figure}
\centering
\begin{minipage}{.7\textwidth}
  \centering
 \includegraphics[page=4,width=0.99\textwidth]{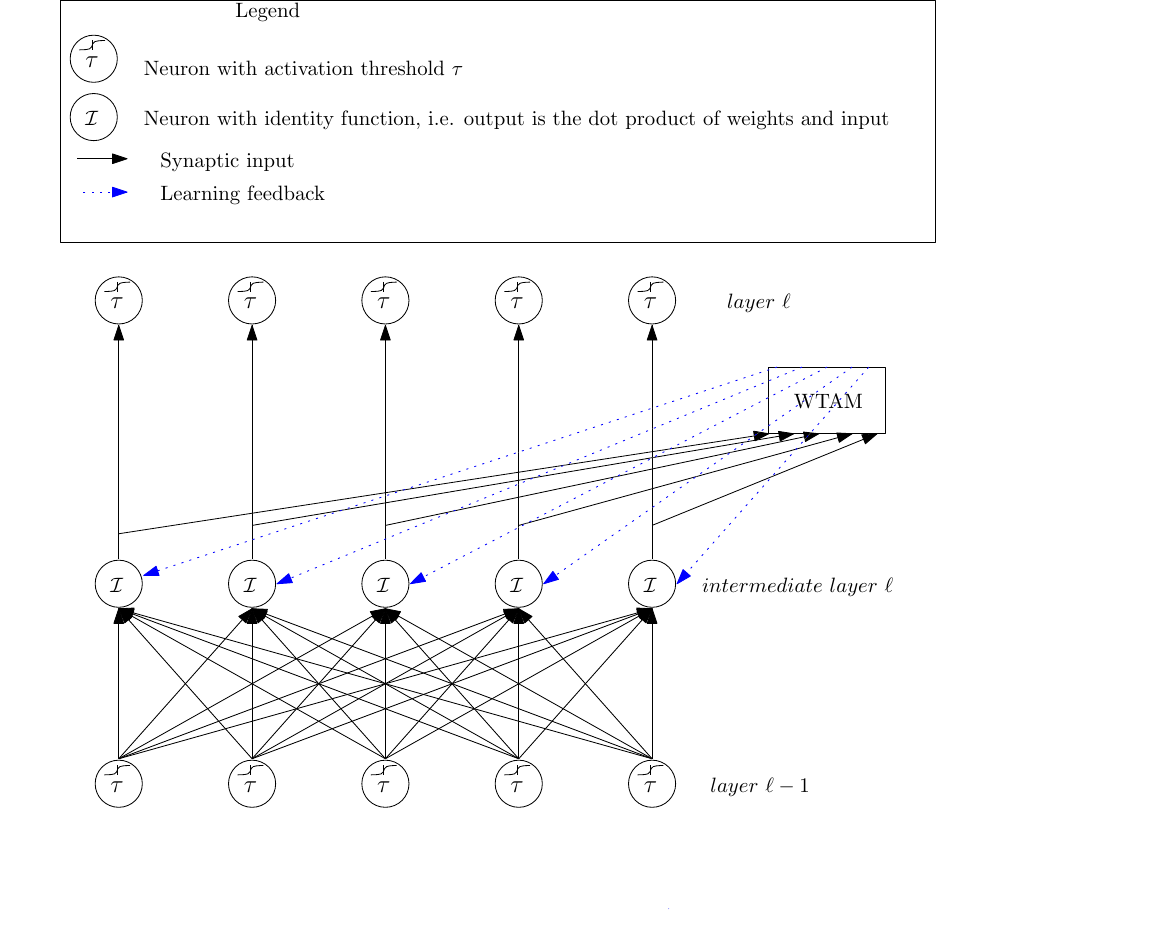}
\end{minipage}%
\begin{minipage}{.3\textwidth}
  \centering
    \includegraphics[width=0.95\textwidth]{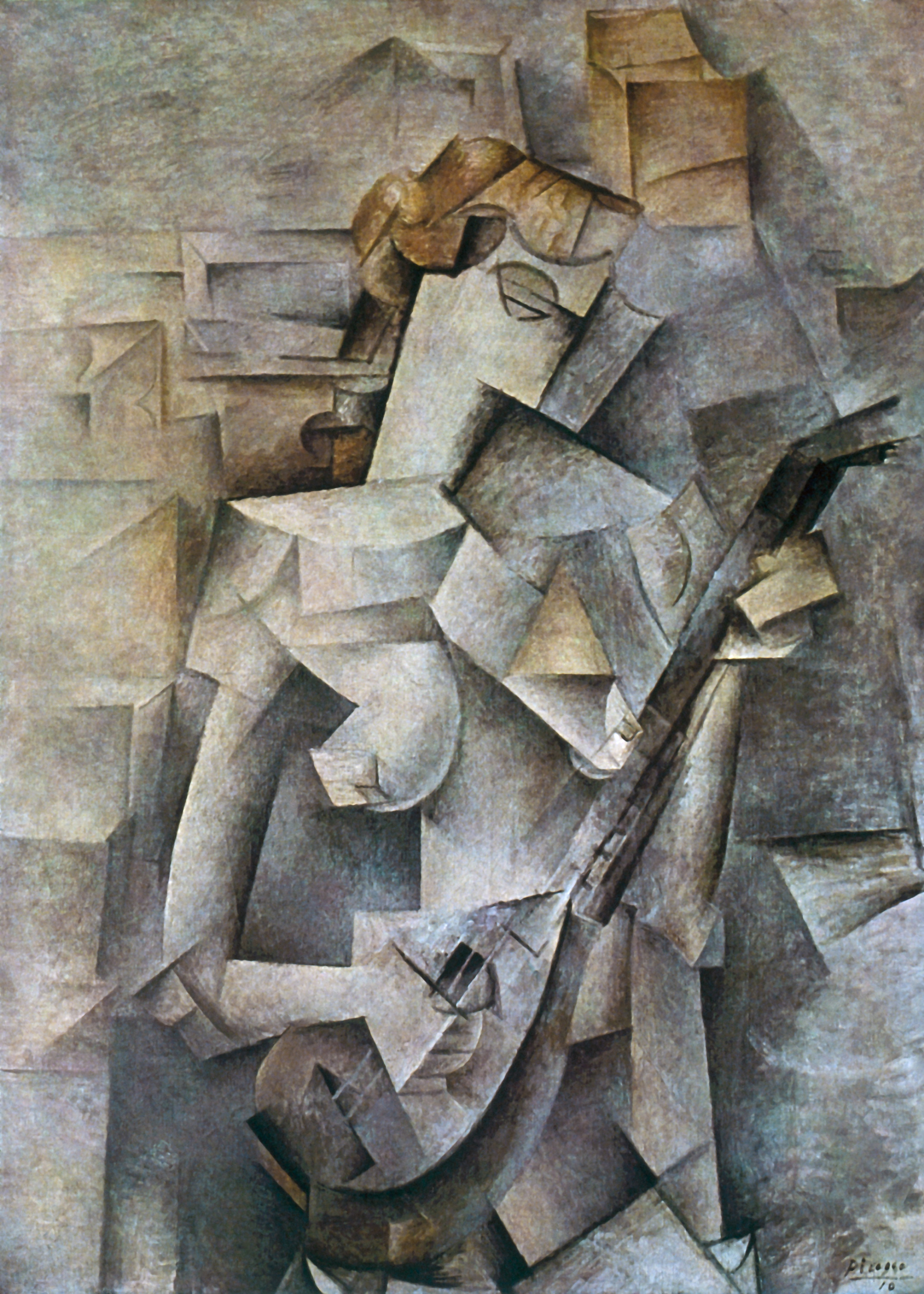}
\end{minipage}%
\caption{The leftmost figure shows the concept \emph{human}, which consists of two sub-concepts, and so on. The second figure shows a network that has "learned" the concept "human" in the sense that, when the neurons representing the basic parts \emph{eyes, mouth, arms, legs} are excited,
  then exactly one neuron $u$ on the top layer will fire.  
  Neuron $u$ should also fire when "most" of the basic parts are excited, and $u$ should not fire when few of the basic parts are excited. For example, the painting ``Girl  with a Mandolin'' by Picasso\protect\footnotemark should cause $u$ to fire despite the lack of a mouth and legs. 
  The network accomplishes this by strengthening relevant synapses (bold edges) and weakening others (thin edges). 
  }
  \label{fig:main}
\end{figure}

\footnotetext{\copyright \ Universal Color Slide Company.}

\emph{Note:}  We view this work as the first step in a general project to produce a theory for how logical concepts are represented, and learned, in the brain. 
Our general approach is to start with the simplest case, working out basic definitions, algorithms, and limitations for that case, and then to extend in many directions, step-by-step.  
We think such a stepwise approach will be effective in developing the theory. 
In addition, we hope that this first step, besides being of interest on its own, will provide a useful blueprint for later extensions.

\paragraph{In more detail:}
We describe our data model in \autoref{sec:datamodel}.
We assume a fixed maximum number $\lmax$ of levels in our concept hierarchies. Each concept hierarchy $\mathcal C$ has a fixed set $C$ of concepts, organized into levels $\ell$, $0 \leq \ell \leq \lmax$.
These are chosen from some universal set $D$ of \emph{concepts}.
Each concept at each level $\ell$, $1 \leq \ell \leq \lmax$ has precisely $k$ children, which are level $\ell-1$ concepts.
We assume here that each concept hierarchy is a tree, that is, there is no overlap among the sets of children of different concepts.
\iffuture
\footnote{
   Considering overlap introduces a complication in the method of selecting new representations; later we discuss some ways to get around the problem.\nnote{I assume we will say something about this, maybe give a couple of suggestions in the future work section about possibilities for an additional mechanism for avoiding re-selecting the same neuron.}}
   \fi
Each individual concept hierarchy represents the concepts and child relationships that arise in a particular execution of the network (or lifetime of an organism).
However, the chosen concepts and their relationships may be different in different concept hierarchies. Again we note that these assumptions are a considerable simplification of reality, but we regard them as a good starting point.

Next, in \autoref{sec:networkmodel}, we define a synchronous Spiking Neural Network model\footnote{
A word about our use of the Spiking Neural Network terminology:
Our model here is simpler than typical SNN models, in that neuron actions depend just on the previous state and not on a longer history.  In some of our prior work, such as~\cite{SuCL19}, we use a more elaborate version of the model in which neurons actions can depend on bounded history. This is useful for capturing aspects of neuron processing such as accumulating potential.  In future extensions of the present work, we expect to use such elaborations.
We use the SNN terminology here in an attempt to keep the terminology consistent across our papers.
}, 
derived from the one in~\cite{CamITCS,CamComposition}, but with additional structure to support learning.
Namely, the new model incorporates edge weights (representing synapse strengths) into neuron states; this provides a convenient way to describe how those weights change during learning.
We model learning using \emph{Oja's rule}, a biologically-inspired rule that can be regarded as a mathematical formalization of Hebbian learning~\cite{kempter1999hebbian}. 
Oja's rule was first introduced in~\cite{Oja}, and has since received considerable attention due its connections with dimensionality reduction; see, for example, \cite{oja1992principal, foldiak1989adaptive}.
Although there is no direct experimental evidence yet that Oja's precise rule is used in the brain, its core characteristics such as long-term potentiation, long-term depression, and normalization are known to occur in brain networks, and have been studied thoroughly (e.g., ~\cite{artola1993long,artola1990different}).
Interestingly, to the best of our knowledge, Oja's rule has so far been studied only in "flat" settings, where the network has only one layer. 
Moreover, previous work (e.g., \cite{Oja}) has allowed the learning parameter $\eta$ to be time-dependent, in order to achieve convergence.  
In this paper, we consider the multilayer setting, and we show convergence with a fixed learning rate.

In \autoref{sec:probstatement}, we present our definitions for the robust recognition and noise-free learning problems. 
Thus, we define how an SNN represents a concept hierarchy; here we use the simplifying assumption that each concept is represented by just one neuron.
We define what it means for an SNN to correctly recognize a concept hierarchy, including situations in which the network is required to recognize a concept $c$ and situations where it is required not to do so.
In particular, if a sufficiently large fraction $r_2$ of the children of concept $c$ are recognized, then $c$ should be recognized, whereas if fewer than a smaller fraction $r_1$ of the children of $c$ are recognized, then $c$ should not be recognized.
We also define what it means for an SNN to learn to recognize a concept hierarchy, in the noise-free setting. 

Then, in \autoref{sec:algorithms}, we present algorithms that allow a network, starting from a default configuration, to recognize and to learn the concepts in a particular concept hierarchy.
Our algorithms are efficient, in terms of network size and running time.
In particular, a network with max layer $\lmax$ suffices to recognize a concept hierarchy with max level $\lmax$.
Recognition happens within a very short time, proportional to the number of layers in the network.
For learning, our algorithm converges reasonably quickly to a configuration that supports robust recognition. Our convergence time bound result for noise-free learning is \autoref{thm:noisefreelearning}.
\snote{Later, give stability guarantees.}
Our algorithms require the examples to be shown several times and in a constrained order: roughly speaking, we require the network to "learn" the children of a concept $c$ first, before examples of $c$ are shown.
Thus, in our running example, we require enough examples of "head", "body", etc. to be able to learn those concepts before the network sees them all together as "human".
Except for this constraint, concepts may be shown in an arbitrarily interleaved manner.
In \autoref{sec:noisy}, we adapt our problem definitions and learning algorithm to a setting where the examples presented may be perturbed by noise.
The modified algorithm still works, but now convergence requires the network to see more examples, compared to the noise-free case, as we show in \autoref{thm:noisylearning}.
The detailed analysis needed to prove Theorems~\ref{thm:noisefreelearning} and~\ref{thm:noisylearning} appears in Sections~\ref{sec:noisefreeanalysis} and~\ref{sec:noisyanalysis}, respectively.

\iffuture
In \autoref{as:WTA}, we outline how this engagement assumption could be implemented, using a \emph{Winner-Take-All} strategy~\cite{CamITCS}.

\fi

Once we see that a network with max layer $\lmax$ can easily learn and recognize any concept hierarchy with max level $\lmax$, it is natural to ask whether $\lmax$ layers are actually necessary.  Certainly these networks yield natural and efficient representations, but it is still interesting to ask the theoretical question of whether shallower networks could accomplish the same thing.  
In \autoref{sec:lowerbounds}, we give a preliminary lower bound result, showing that a two-layer concept hierarchy requires a two-layer network in order to solve the noisy recognition problem.  We also discuss the possibility of extending this result to more levels and layers.
%


In summary, this paper is intended to show, using theoretical techniques, how structured concepts can be represented, recognized, and learned in biologically plausible neural networks.  
We give fundamental definitions and algorithms for particular types of concept hierarchies and networks.
This represents a first step towards a theory of representation and learning for hierarchically-structured concepts in SNNs; it opens up many follow-on questions, which we discuss in \autoref{sec:futurework}.

\paragraph{Related work:}
Immediate inspiration from this work came from experimental computer vision research on "network dissection" by Zhou, et al.~\cite{Zhou19}.
%
This work describes experiments that show that unsupervised learning of visual concepts in deep convolutional neural networks results in "disentangled" representations.
These include neural representations, not just for the main concepts of interest, but also for their components and sub-components, etc., throughout a concept hierarchy.
As in this paper, they consider individual neurons as representations for individual concepts.
They find that the representations that arise are generally arranged in layers so that more primitive concepts (colors, textures,...) appear at lower layers whereas more complex concepts (parts, objects, scenes) appear at higher layers.
Earlier work by Zeiler and Fergus~\cite{Zeiler} made similar observations.
%
%
As we described earlier, this work is consistent with neuroscience research, which indicates that visual processing in mammalian brains is performed hierarchically~\cite{Hubel59, Hubel62, Felleman91}.
Some of this work indicates that the network includes feedback edges in addition to forward edges; the function of the feedback edges seems to be to solidify representations of lower-level objects based on context~\cite{Hupe98,markov2014anatomy}.
While we do not yet address feedback edges in this paper, that is one of our main intended future directions.

{
Brain-like hierarchical models have been studied before (e.g., \cite{riesenhuber1999hierarchical} and \cite{synfire}).
The authors of \cite{riesenhuber1999hierarchical} propose a model consisting of different kinds of  cells to model image recognition in the brain.
Another biologically-motivated line of research concerns synfire chains, which are essentially a feed-forward network of neurons. These networks are a predecessor of spiking neural networks (SNNs).
An interesting work in this field is \cite{synfire}, which studies a hierarchical organization of synfire chains.

}

The SNN 
model~\cite{maass1996computational,maass1997networks,gerstner2002spiking,izhikevich2004model,habenschuss2013stochastic}, upon which all of our neural algorithms research is based, is a model for neural computation that balances biological plausibility with theoretical tractability.
Our work is influenced by research of Maass et al.~\cite{maass1997networks,maass1999neural,maass2000computational} on the computational power of SNNs, and by that of Valiant~\cite{valiant2000circuits,valiant2000neuroidal,valiant2005memorization,valiant2012hippocampus} on learning in the \emph{neuroidal model} of brain computation.
Recent research by Papadimitriou, et al.~\cite{PapadimitriouVempala-pmlr15,PapadimitriouVMCM19,LegensteinMPV18,PapadimitriouVempala-itcs19} on problems of learning and association of concepts is another source of inspiration.

Oja's learning rule~\cite{Oja,oja1992principal}. is a biologically plausible local rule for adjusting synapse weights during learning.   
As mentioned earlier, to the best of our knowledge, Oja's rule has so far been studied only in single-layered networks and with time-dependent learning rates (\cite{Oja, oja1992principal, foldiak1989adaptive}.
Other related learning rules include Hebbian variants~\cite{Hebb49,LowelS92} or BCM learning~\cite{bienenstock1982theory}.

The learning algorithms in this paper utilize a \emph{Winner-Take-All} sub-network~\cite{lazzaro1988winner,yuille1989winner,thorpe1990spike,coultrip1992cortical,maass2000computational,wang2003k,oster2006spiking,LynchMP17a}, to help in selecting which neurons to engage in learning.  Winner-Take-All is an important primitive in neural computation that is used to model visual attention and competitive learning. 

Work by Mhaskar et al.~\cite{Poggio16}
is related to ours in that they also consider embedding a tree-structured concept hierarchy in a layered network.
They also prove results saying that deep neural networks are better than shallow networks at representing a deep concept hierarchy,
However, their concept hierarchies differ mathematically from ours, since they are formalized as compositional functions.
\iflong
Also, their notion of representation corresponds to function approximation, and their proofs are based on approximation theory, rather than the limitations imposed by requiring robustness in recognizing hierarchical concepts.
Other results along the same lines appear in~\cite{Matus}.
\else
Also, their notion of representation is different, corresponding to function approximation, and their proofs are based on approximation theory.
\fi
Other related work appears in papers by Knoblauch and collaborators, e.g., \cite{rewC,rewD,rewF}.
These papers describe experimental work involving hierarchical concepts that are more general than ours (e.g., allowing overlap), networks that are more general (e.g., allowing feedback), and more robust types of representations (cell assemblies).  They present this work in the context of an integrated robot system combining processing of visual and language input, decisions, and action).
For us, this provides good inspiration for future theoretical work.

\iflong
There is also an interesting connection to circuit complexity (e.g., \cite{kopparty2012certifying}) with respect to the question of how many layers are required to solve the recognition problem (\autoref{sec:prob-recog}). The models studied are slightly different as neurons have the power of threshold gates.

Nonetheless, understanding the trade-off between the number of layers and the number of neurons per layer would be a very interesting question for future work.

\fi
\paragraph{Acknowledgments:}
We thank Brabeeba Wang for helpful conversations and suggestions. we also thank  an anonymous referee for much constructive feedback, and many suggestions for interesting extensions.
The authors were supported in part by NSF Award Numbers CCF-1810758, CCF-0939370, CCF-1461559, and CCF-2003830.
\section{Data Model}
\label{sec:datamodel}

In this section, we define an abstract notion of a \emph{concept hierarchy}, which represents all the concepts that arise in some particular "lifetime" of an organism, together with hierarchical relationships between them.  As noted above, our definition is restricted to tree-structured hierarchical relationships; extensions are left for future work.
We follow this with a definition for the notion of \emph{support}, which indicates which lowest-level concepts are sufficient to trigger the recognition of higher-level concepts.

\subsection{Preliminaries}

We begin by defining some general notation.  First, we fix four constants:
\begin{itemize}
    \item $\lmax$, a positive integer, representing the maximum level number for the concepts we consider.  
    \item $n$, a positive integer, representing the total number of lowest-level concepts.
    \item $k$, a positive integer, representing the number of top-level concepts in any concept hierarchy, and also the number of sub-concepts for each concept that is not at the lowest level.\footnote{Assuming the same number $k$ throughout is a simplification of what would be needed for applications; it should be easy to generalize this.}
    \item $r_1, r_2$, reals in $[0,1]$ with $r_1 \leq r_2$; these represent thresholds for noisy recognition.
\end{itemize}

We assume a predetermined universal set $D$ of \emph{concepts}, partitioned into disjoint sets $D_{\ell}, 0 \leq \ell \leq \lmax$.
We refer to any particular concept $c \in D_{\ell}$ as a \emph{level} $\ell$ \emph{concept}, and write $level(c) = \ell$.
Here, $D_0$ represents the most basic concepts and $D_{\lmax}$ the highest-level concepts.
We assume that $|D_0| = n$.

\subsection{Concept hierarchies}

A \emph{concept hierarchy} $\mathcal C$ consists of a subset $C$ of $D$, together with a $children$ function.  For each $\ell$, $0 \leq \ell \leq \lmax$, we define $C_{\ell}$ to be $C \cap D_{\ell}$, that is, the set of level $\ell$ concepts in $\mathcal C$.
For each concept $c \in C_{\ell}$, $1 \leq \ell \leq \ell_{max}$, we designate a nonempty set $children(c) \subseteq C_{\ell-1}$.
We call each $c' \in children(c)$ a \emph{child} of $c$.
We require the following three properties.
\begin{enumerate}
\item
$|C_{\lmax}| = k$.
\item
For any $c \in C_{\ell}$, where $1 \leq \ell \leq \lmax$, we have that $|children(c)| = k$; that is, the degree of any internal node in the concept hierarchy is exactly $k$.
\item
For any two distinct concepts $c$ and $c'$ in $C_{\ell}$, where $1 \leq \ell \leq \lmax$, we have that $children(c) \cap children(c') = \emptyset$; that is, the sets of children of different concepts at the same level are disjoint.
\end{enumerate}

It follows that ${\mathcal C}$ is a forest with $k$ roots and height $\lmax$.  Also, for any $\ell, 0 \leq \ell \leq \lmax$, $|C_{\ell}| = k^{\lmax - \ell + 1}$.
Note that our notion of concept hierarchies is quite restrictive, in that we allow no overlap between the sets of children of different concepts. Allowing overlap is an important next direction for future work.

We extend the $children$ notation recursively by defining a concept $c'$ to be a $descendant$ of a concept $c$ if either $c' = c$, or $c'$ is a child of a descendant of $c$.
%
We write $descendants(c)$ for the set of descendants of $c$.
Let $leaves(c) = descendants(c) \cap C_0$, that is, all the level 0 descendants of $c$.

\subsection{Support}
\label{sec:support}

Now we give a key definition that indicates which lowest-level concepts should be sufficient to trigger recognition of higher-level concepts.  

We fix a particular concept hierarchy $\mathcal C$, with its concept set $C$ partitioned into $C_0,\ldots,C_{\lmax}$.
For any given subset $B$ of the general set $D_0$ of level $0$ concepts, and any real number $r \in [0,1]$, we define a set $supported_r(B)$ of concepts in $C$.
This represents the set of concepts $c \in C$, at all levels, that have enough of their leaves present in $B$ to support recognition of $c$. 
The notion of "enough" here is defined recursively, based on having an $r$-fraction of children supported at every level.

\FloatBarrier
\begin{definition}[\textbf{Supported}]
\label{def:support}
Given $B \subseteq D_0$, define the following sets of concepts at all levels, recursively:
\begin{enumerate}
\item
$B_0 = B \cap C_0$. That is, we restrict attention to just the level $0$ concepts in $C$.
\item
$B_1$ is the set of all concepts $c \in C_1$ such that $|children(c) \cap B_0|  \geq r k$.  That is, we consider the level $1$ concepts in $C$ for which at least an $r$-fraction of their children appear in $B_0$.
\item
For $2 \leq \ell \leq \ell_{max}$, $B_{\ell}$ is the set of all concepts $c \in C_{\ell}$ such that $|children(c) \cap B_{\ell - 1}|  \geq r k$.  That is, we consider the level $\ell$ concepts in $C$ for which at least an $r$-fraction of their children appear in $B_{\ell-1}$.
 \end{enumerate}
Define $supported_r(B)$ to be $\bigcup_{0 \leq \ell \leq \lmax} B_{\ell}$.  
We sometimes also write $supported_r(B,\ell)$ for $B_{\ell}$.
\end{definition}
\FloatBarrier


\begin{figure}[ht!]
\centering
\includegraphics[width=0.6\textwidth]{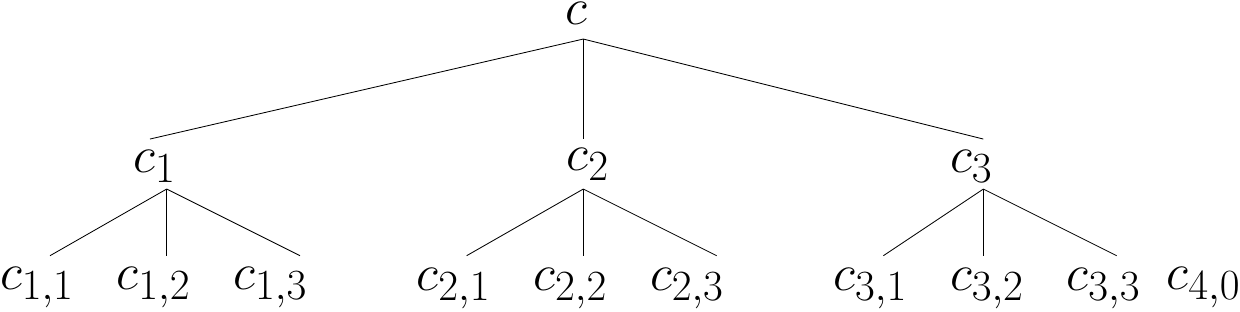}
\vspace{1cm}
\caption{This example illustrates the $supported_r(B)$ definition, with $k = 3$ and $r = \frac{2}{3}$.  We depict just a single level $2$ concept $c$ with children $c_1, c_2, c_3$ and grandchildren $c_{1,1}, c_{1,2}, c_{1,3}, c_{2,1}, c_{2,2}, c_{2,3}, c_{3,1}, c_{3,2}, c_{3,3}$.  The set $B$ consists of concepts $c_{1,1}$, $c_{1,2}$, $c_{3,1}, c_{3,3}$ plus an "extra" concept $c_{4,0}$ that is not a descendant of $c$. Then $B_0 = \{c_{1,1}, c_{1,2}, c_{3,1}, c_{3,3}\}$, $B_1 = \{c_1, c_3\}$, and $B_2 = \{c\}$.}
\end{figure}

\vspace{.2cm}
The special case $r=1$ is important as it corresponds to a "noise-free" notion of support, in which all the leaves of a concept must be present.  That is:
\begin{lemma}
\label{lem: support-special-case}
For any $B \subseteq D_0$, $supported_1(B)$ is the set of all concepts $c \in C$ (at all levels) such that $leaves(c) \subseteq B$.
\end{lemma}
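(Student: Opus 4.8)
The plan is to prove the equality of two sets by induction on the level $\ell$, showing for each $\ell$ that $B_\ell$ (as defined recursively in \autoref{def:support} with $r = 1$) coincides with the set of concepts $c \in C_\ell$ satisfying $leaves(c) \subseteq B$. Since $supported_1(B) = \bigcup_\ell B_\ell$ and the concept sets $C_\ell$ partition $C$, establishing this level-by-level equivalence immediately yields the lemma.

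For the base case $\ell = 0$, we have $B_0 = B \cap C_0$ by definition, and for $c \in C_0$ we have $leaves(c) = \{c\}$, so $leaves(c) \subseteq B$ iff $c \in B$, i.e. iff $c \in B \cap C_0 = B_0$. For the inductive step, fix $\ell$ with $1 \leq \ell \leq \lmax$ and assume the claim holds at level $\ell - 1$. Take $c \in C_\ell$. By definition, $c \in B_\ell$ iff $|children(c) \cap B_{\ell-1}| \geq rk = k$; since $|children(c)| = k$ and $children(c) \subseteq C_{\ell-1}$, this holds iff $children(c) \subseteq B_{\ell-1}$. By the inductive hypothesis, $children(c) \subseteq B_{\ell-1}$ iff every child $c'$ of $c$ satisfies $leaves(c') \subseteq B$. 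Finally I would use the observation that $leaves(c) = \bigcup_{c' \in children(c)} leaves(c')$, which follows directly from the recursive definition of $descendants$ (a level-$0$ descendant of $c$ other than $c$ itself—and $c \notin C_0$ since $\ell \geq 1$—is a level-$0$ descendant of some child of $c$). Hence $leaves(c) \subseteq B$ iff each $leaves(c') \subseteq B$, closing the induction.

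There is really no hard part here: the only thing requiring a moment's care is the clean equivalence "$|children(c) \cap B_{\ell-1}| \geq k \iff children(c) \subseteq B_{\ell-1}$", which hinges on the structural fact that $|children(c)| = k$ exactly (one of the three defining properties of a concept hierarchy), so a $k$-element subset of a $k$-element set must be the whole set. The decomposition $leaves(c) = \bigcup_{c' \in children(c)} leaves(c')$ is the other small ingredient and is immediate from unwinding the recursive definition of descendants. I would state both of these explicitly as one-line sub-claims inside the proof to keep it self-contained.
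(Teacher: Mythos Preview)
Your proposal is correct and is exactly the natural induction one would write; the paper itself does not give an explicit proof of this lemma (it treats it as immediate from the definitions), so your level-by-level argument is appropriate and, if anything, more detailed than what the paper provides.
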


%
\section{Network Model}
\label{sec:networkmodel}
\FloatBarrier

In this section, we define our network model.
We first describe the network structure, then the individual neurons, and finally the operation of the overall network.  

\subsection{Preliminaries}

We introduce four constants:
\begin{itemize}
     \item
     $\ell'_{max}$, a positive integer, representing the maximum number of a layer in the network.
     \item
     $n$, a positive integer, representing the number of distinct inputs the network can handle.  This is intended to match up with the parameter $n$ in the data model, where it represents the total number of level $0$ concepts, i.e., $|D_0|$.
     \item
     $\tau$, a real number, representing the firing threshold for neurons.
     \item
     $\eta$, a positive real, representing the learning rate for our learning rule.
\end{itemize}

\subsection{Network structure}

Our networks are directed graphs consisting of neurons arranged in layers, with edges directed from each layer to the next-higher layer; thus, they are feed-forward layered neural networks.

Specifically, a network $\mathcal{N}$ consists of a set $N$ of neurons, partitioned into disjoint sets $N_{\ell}, 0 \leq \ell \leq \ell'_{max}$, which we call \emph{layers}.
We refer to any particular neuron $u \in N_{\ell}$ as a \emph{layer} $\ell$ \emph{neuron}, and write $layer(u) = \ell$.
We assume (for simplicity) that each layer contains exactly $n$ neurons, that is, $|N_\ell |= n$ for every $\ell$.
We refer to the $n$ layer $0$ neurons as \emph{input neurons} and to all other neurons as \emph{non-input neurons}.
We assume total connectivity between successive layers, that is, each neuron in $N_{\ell}$, $0 \leq \ell \leq \ell'_{max} - 1$ has an outgoing edge to each neuron in  $N_{\ell+1}$, and these are the only edges.

We assume a one-to-one mapping $rep: D_0 \rightarrow N_0$,  where $rep(c)$ is the neuron corresponding to concept $c$.
That is, $rep$ is a one-to-one mapping from
 the full set of level $0$ concepts, $D_0$, to $N_0$, the set of layer $0$ neurons,
This will allow the network to receive an input corresponding to any level $0$ concept. 
See~\autoref{fig:network} for a depiction.
\begin{figure}[ht!]
  \centering
    \includegraphics[page=3,width=0.45\textwidth]{plan}
  \caption{The figure depicts the general structure of a feed-forward network. }
  \label{fig:network}
\end{figure}


We "lift" the definition of $rep$ to sets of level $0$ concepts as follows:
For any $B \subseteq D_0$, we define $rep(B) = \{rep(b) | b \in B \}$.
That is, $rep(B)$ is the set of all $reps$ of concepts in $B$.
(We will use analogous "lifting" definitions to extend other functions to sets.)

Since we know that $|C_0| = k^{\lmax+1}$, $C_0 \subseteq D_0$, and all elements of $D_0$ have $reps$ among the $n$ neurons of $N_0$, it follows that $n \geq k^{\lmax+1}$.  However, we imagine that $n$ is much larger than this, because we imagine that the total number of possible level $0$ concepts is much larger than the number that will arise in any particular execution of the network.

In \autoref{sec:probstatement}, we will consider extensions of the $rep()$ function from level $0$ concepts to higher-level concepts.  Establishing such higher-level $reps$ will be the job of a learning algorithm.

\subsection{Neuron states}

We assume that the state of each neuron consists of several \emph{state components}.  Here we distinguish between input neurons and non-input neurons.
Namely, each input neuron $u \in N_0$ has just one state component: 
\begin{itemize}
    \item \emph{firing}, with values in  $\{0,1\}$; this indicates whether or not the input neuron is currently firing.
\end{itemize}
We denote the \emph{firing} component of input neuron $u$ at integer time $t$ by $firing^u(t)$; we will sometimes abbreviate this in mathematical formulas as just $y^u(t)$.

Each non-input neuron $u \in N_{\ell}$, $1 \leq \ell \leq \ell'_{max}$, has three state components:
\begin{itemize}
\item
\emph{firing}, with values in $\{0,1\}$, indicating whether the neuron is currently firing.
\item
\emph{weight}, a real-valued column vector in $[0,1]^n$ representing current weights on incoming edges.
\item
\emph{engaged}, with values in $\{0,1\}$; indicating whether the neuron is currently prepared to learn.  As discussed in the intro, these model eligibility traces (see \cite{eligibility}).

\end{itemize}
We denote the three components of non-input neuron $u$ at time $t$ by $firing^u(t)$, $weight^u(t)$, and $engaged^u(t)$, respectively, and abbreviate these by $y^u(t)$, $w^u(t)$, and $e^u(t)$.

We also use the notation $x^u(t)$ to denote the column vector of \emph{firing} flags of $u$'s incoming neighbor neurons at time $t$.
That is,
\ifarxiv
$x^u(t)=
[ y^{v_1}(t)  y^{v_2}(t)  \dots  y^{v_n}(t) ]^{{\it T}}$,
\else
$x^u(t)=
[ y^{v_1}(t) , y^{v_2}(t) , \dots , y^{v_n}(t) ]^{{\it T}}$,
\fi
where $\{v_i\}_{i\leq n}$ are the incoming neighbors of $u$, which are exactly all the nodes in the layer below $u$. 

\subsection{Neuron transitions}

Now we describe neuron behavior, specifically, we describe how to determine the values of the state components of each neuron $u$ at time $t \geq 1$ based on values of state components at the previous time $t-1$ and on external inputs.  Again, we distinguish between input neurons and non-input neurons.

\paragraph{Input neurons:}
If $u$ is an \emph{input neuron}, then it has only one state component, the $firing$ flag.
Since $u$ is an input neuron, we assume that the value of the $firing$ flag is controlled by the network's environment and not by the network itself, that is, the value of $y^u(t)$ is set by some external input signal, which we do not model explicitly.

\paragraph{Non-input neurons:}
If $u$ is a \emph{non-input neuron}, then it has three state components, $firing$, $weight$, and $engaged$.
Whether or not neuron $u$ fires at time $t$, that is, the value of $y^u(t)$, is determined by its incoming \emph{potential} and its \emph{activation function}.

The potential at time $t$, which we denote by $pot^u(t)$ is given by the dot product of the weights and inputs at neuron $u$ at time $t-1$, that is, 
\[pot^u(t) = w^u(t-1)^T \cdot x^u(t-1) = \sum_{j=1}^n w^u_j(t-1) x^u_j(t-1).
\]
The activation function, which defines whether or not neuron $u$ fires at time $t$, is then defined by:
\[ y^u(t) =  \begin{cases}
1 & \text{if $pot^u(t) \geq \tau$}, \\
0 & \text{otherwise},
\end{cases}\]
where $\tau$ is the assumed firing threshold.

We assume that the value of the $engaged$ flag of $u$ is controlled by $u$'s environment, that is, for every $t$, the value of $e^u(t)$ is set by some input signal, which may arise from outside the network or from another part of the network.
For example, the $engaged$ flag could be used to ensure that, in any round, only one neuron is prepared to learn.\footnote{We use the term "round" to represent the activity between two consecutive times.  In particular, "round $t$" refers to the activity that takes the system from time $t-1$ to time $t$.  Thus, the potential in round $t$ means the same thing as the potential at time $t$, captured by $pot^u(t)$.}
\snote{We should make sure that we use the numbering correctly throughout, so "round t" takes us from time t-1 to time t.}
This neuron might be selected by a separate "Winner-Take-All" sub-network.

Finally, for the weights, we assume that each neuron that is engaged at time $t$ determines its weights at time $t$ according to Oja's learning rule.  
That is, if $e^u(t) = 1$, then
\begin{equation}\label{eq:Oja} 
\text{\emph{Oja's rule}:  $w^u(t) = w^u(t-1) + \eta\ z(t-1) \cdot  (  x^u(t-1) -   z(t-1)\cdot w^u(t-1) )$,} 
\end{equation}
where $\eta$ is the assumed learning rate and $z(t-1) = pot^u(t)$.\footnote{The $z(t-1)$ notation is standard for Oja's rule, so we use that in the rest of this paper when we analyze network behavior based on this rule.}
Thus, the weight vector is adjusted by an additive amount that is proportional to the learning rate and the potential, and depends on the input firing pattern, with a negative adjustment that depends on the potential and the prior weights.
%
\snote{Make sure we introduce such abbreviation conventions if/when we use them.}

\subsection{Network operation}

During execution, the network proceeds through a sequence of \emph{configurations}, $Con(0), Con(1), Con(2), \ldots$, where $Con(t)$ describes the configuration at nonnegative integer time $t$.  Each configuration specifies a state for every neuron in the network, that is, values for all the state components of every neuron.

As described above, the $y$ values for the input neurons are specified by some external source.  The $y$, $w$, and $e$ values for the non-input neurons are defined by the network specification at time $t = 0$.  For times $t > 0$, the $y$ and $w$ values are determined by the activation and learning functions described above.  The $e$ values (engagement flags) are determined by special inputs arriving from outside the network or from other sub-networks.
In our algorithms in Sections~\ref{sec:algorithms_learning} and~\ref{sec:noisylearningalgo}, they will arrive from Winner-Take-All sub-networks.

\section{Problem Statements}
\label{sec:probstatement}

In this section we define our two main problems: \emph{recognizing concept hierarchies}, and \emph{learning to recognize concept hierarchies}.  Our notion of recognition is robust to a bounded amount of noise.  The notion of learning we define in this section corresponds to noise-free learning; we extend this to noisy learning in \autoref{sec:noisy}.
In all cases, we assume that each item is represented by exactly one neuron; considering more elaborate representations is another direction for future work.

\subsection{Preliminaries}

Throughout this section, we fix constants $\lmax$, $n$, $k$, $r_1$, and $r_2$ according to the definitions for a concept hierarchy in \autoref{sec:datamodel}.  We consider a concept hierarchy $\mathcal C$, with concept set $C$ and maximum level $\lmax$, partitioned as usual into $C_0, C_1, \ldots, C_{\lmax}$. 
We also fix constants $\ell'_{max}$, $n$, $\tau$, and $\eta$ as in the definitions for a network in \autoref{sec:networkmodel}, and consider a network $\mathcal N$ as described earlier.
Thus, we allow the maximum layer number $\ell'_{max}$ for $\mathcal N$ to be different from the maximum level number $\lmax$ for $\mathcal C$, but the number $n$ of input neurons is the same as the number of level $0$ items in $\mathcal C$.

The following definition will be useful in defining our recognition and learning problems.  It expresses what it means for a particular subset $B$ of the level $0$ concepts to be "presented" as input to the network, at a certain time $t$.

\begin{definition}
[\textbf{Presented}]
\label{def:presented2}
If $B \subseteq D_0$ and $t$ is a non-negative integer, then we say that $B$ is \emph{presented at time} $t$ (in some particular execution) if, for every layer $0$ neuron $u$, the following hold:
\begin{enumerate}
\item
If $u \in rep(B)$ then $y^u(t) = 1$. 
\item
If $u \notin rep(B)$ then $y^u(t) = 0$.
\end{enumerate}
That is, all of the layer $0$ neurons in $rep(B)$ fire at time $t$, and no other layer $0$ neuron fires at time $t$.
\end{definition}

\subsection{Robust recognition}
\label{sec:prob-recog}

Here we define what it means for network $\mathcal N$ to recognize concept hierarchy $\mathcal C$.
We assume that every concept $c \in C$, at every level, has a unique representing neuron, $rep(c)$; this extends the $rep()$ function from level $0$ concepts to higher-level concepts.
For this definition, we also assume that, during the entire recognition process, the $engaged$ flags of all neurons are off, i.e., for every neuron $u$ with $layer(u) > 0$, and every $t$, $e^u(t) = 0$.

The following definition uses the two assumed values $r_1, r_2 \in [0,1]$, with $r_1 \leq r_2$.  
$r_2$ represents the fraction of children of a concept $c$ at any level that should be sufficient to support firing of $rep(c)$.  
$r_1$ is a fraction below which $rep(c)$ should not fire.

\begin{definition}
[\textbf{Robust recognition problem}]
\label{def:noisyrecognition}
Network $\mathcal N$ $(r_1,r_2)$-\emph{recognizes} a concept $c$ in concept hierarchy $\mathcal C$ provided that $\mathcal N$ contains a unique neuron $rep(c)$ such that the following holds.
Assume that $B \subseteq C_0$ is presented at time $t$.  
%

Then:
\begin{enumerate}
\item \emph{When $rep(c)$ must fire:}
If $c \in supported_{r_2}(B)$, then $rep(c)$ fires at time $t + layer(rep(c))$.
\item \emph{When $rep(c)$ must not fire:}
If $c \notin supported_{r_1}(B)$, then $rep(c)$ does not fire at time $t + layer(rep(c))$.
\end{enumerate}
We say that $\mathcal N$ $(r_1,r_2)$-\emph{recognizes} $\mathcal C$ provided that it $(r_1,r_2)$-\emph{recognizes} each concept $c$ in $\mathcal C$.
\end{definition}

The special case of $(1,1)$-recognition is interesting, since it is equivalent to the requirement that all level $0$ descendants of a concept must be present for recognition:


\begin{lemma}
\label{lem: recog-special-case}
Network $\mathcal N$ $(1,1)$-recognizes a concept $c$ in concept hierarchy $\mathcal C$ if and only if $\mathcal N$ contains a unique neuron $rep(c)$ such that the following holds.
If $B \subseteq D_0$ is presented at time $t$, then $rep(c)$ fires at time $t + layer(rep(c))$ if and only if $leaves(c) \subseteq B$.
\end{lemma}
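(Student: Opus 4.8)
The plan is to specialize \autoref{def:noisyrecognition} to the case $r_1 = r_2 = 1$ and then rewrite the resulting condition using \autoref{lem: support-special-case}. First I would unfold the definition: $\mathcal N$ $(1,1)$-recognizes $c$ exactly when there is a unique neuron $rep(c)$ such that, whenever $B \subseteq C_0$ is presented at time $t$, both of the following hold: (i) if $c \in supported_1(B)$ then $rep(c)$ fires at time $t + layer(rep(c))$, and (ii) if $c \notin supported_1(B)$ then $rep(c)$ does not fire at time $t + layer(rep(c))$.

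Next I would note the one slightly delicate point. For general $r_1 < r_2$ there is a "gap" of inputs $B$ for which the behavior of $rep(c)$ is left unspecified, but when $r_1 = r_2 = 1$ the hypotheses $c \in supported_1(B)$ and $c \notin supported_1(B)$ are exhaustive and mutually exclusive. Consequently, conditions (i) and (ii) together are logically equivalent to the single biconditional: $rep(c)$ fires at time $t + layer(rep(c))$ if and only if $c \in supported_1(B)$.

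Finally I would invoke \autoref{lem: support-special-case}, which identifies $supported_1(B)$ with the set of all concepts $c' \in C$ satisfying $leaves(c') \subseteq B$; in particular $c \in supported_1(B)$ iff $leaves(c) \subseteq B$. Substituting this equivalence into the biconditional above gives precisely the condition in the statement of the lemma, and since every step is an equivalence, both directions of the "if and only if" in the lemma hold simultaneously. I do not expect any genuine obstacle; the only thing worth stating carefully is the exhaustiveness observation in the middle step, since it is exactly what collapses the two one-directional implications of \autoref{def:noisyrecognition} into a clean biconditional.
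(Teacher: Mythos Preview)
Your proposal is correct and matches the paper's own approach exactly: the paper simply states that the result follows from the definitions and \autoref{lem: support-special-case}, and your write-up is a careful unpacking of precisely that. The only extra content you add is the explicit observation that the $r_1 = r_2 = 1$ case removes the ``gap'' and makes the two implications of \autoref{def:noisyrecognition} exhaustive, which is indeed the one point worth saying out loud.
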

\begin{proof}
By the definition of the robust recognition problem and \autoref{lem: support-special-case}.
\end{proof}

\subsection{Noise-free learning}
\label{sec:prob-learning}

In the learning problem, the network does not know ahead of time which particular concept hierarchy might be presented in a particular execution.  It must be capable of learning \emph{any} concept hierarchy.

In our algorithm in \autoref{sec:algorithms_learning}, in order for the network to learn a concept hierarchy $\mathcal C$, it must receive inputs corresponding to all the concepts in $C$.  Here we define how individual concepts are "shown" to the network, and then give constraints on the order in which the concepts are shown.  Such constraints are captured by the notion of a \emph{bottom-up training schedule}.
Then we state our learning guarantees, assuming a bottom-up training schedule for $\mathcal C$.

We begin by describing how an individual concept $c$ is "shown" to the network. Recall that $leaves(c)$ is defined to be $descendants(c) \cap C_0$.

\begin{definition}
[\textbf{Showing a concept}]
\label{def:shown}
Concept $c$  is \emph{shown} at time $t$ provided that the set $B = leaves(c)$ is presented at time $t$. That is, for every input neuron $u$, $y^u(t) = 1$ if and only if $u \in rep(leaves(c))$.
\end{definition}

Learning a concept hierarchy will involve showing all the concepts in the hierarchy.
Informally speaking, we assume that the concepts are shown "bottom-up".
For example, before the network is shown the concept of a head, it is shown the lower-level concepts of mouth, eye, etc.  And before it is shown the concept of a human, it is shown the lower-level concepts of head, body, legs, etc.
%
More precisely, to enable network $\mathcal N$ to learn the concept hierarchy $\mathcal C$, we assume that every concept in its concept set $C$ is shown at least $\sigma$ times, where $\sigma$ is a parameter to be specified by a learning algorithm.  
Furthermore, we assume that any concept $c \in C$ is shown only after each child of $c$ has been shown at least $\sigma$ times.
We allow the concepts to be shown in an arbitrary order and in an interleaved manner, provided that these constraints are observed.

\begin{definition}
[\textbf{$\sigma$-bottom-up training schedule}]
\label{def:bottom-up-training-schedule}
A \emph{training schedule} for $\mathcal C$ is any finite list $c_0,c_1,\ldots,c_m$ of concepts in $C$, possibly with repeats.
A training schedule is $\sigma$-\emph{bottom-up}, where $\sigma$ is a positive integer, provided that each concept in $C$ appears in the list at least $\sigma$ times, and no concept in $C$ appears before each of its children has appeared at least $\sigma$ times.
\end{definition}

Any training schedule $c_0, c_1,\ldots,c_m$ generates a corresponding sequence $B_0,B_1,\ldots,B_m$ of sets of level $0$ concepts to be presented in a learning algorithm.  Namely, $B_i$ is defined to be $rep(leaves(c_i))$.

\begin{definition}
[\textbf{$(r_1,r_2,\sigma)$-learning}]
\label{def: learningproblem}
Network $\mathcal N$ $(r_1,r_2,\sigma)$-\emph{learns} concept hierarchy $\mathcal C$ provided that the following holds.
At any time after a training phase in which all the concepts of $\mathcal C$ are shown according to a $\sigma$-bottom-up training schedule, network $\mathcal N$ $(r_1,r_2)$-recognizes $\mathcal C$.
\end{definition}

\section{Algorithms for Recognition and Noise-Free Learning}
\label{sec:algorithms}

We give algorithms for both of the problems described in \autoref{sec:probstatement}.

\subsection{Recognition}

Fix a concept hierarchy $\cal{C}$ with concept set $C$, and $r_1, r_2 \in [0,1]$, with $r_1 \leq r_2$. 
Recognition can be achieved by simply embedding the digraph induced by $\cal{C}$ in the network $\mathcal N$. See \autoref{fig:main} for an illustration.
For every $\ell$ and for every level $\ell$ concept $c$ of $\cal{C}$, we designate a unique representative $rep(c)$ in layer $\ell$ of the network.
Let $R$ be the set of all representatives, that is, $R = rep(C) = \{ rep(c)~|~c\in C\}$.
We use $rep^{-1}$ with support $R$ to denote the corresponding inverse function that gives, for every $u\in R$, the unique concept $c \in C$ with $rep(c) = u$.

If $u$ is a layer $\ell$ neuron and $v$ is a layer $\ell+1$ neuron, then we define
the edge weight $weight(u,v)$ by:
\[
weight(u,v) = 
\begin{cases}
1 & \text{ if $rep^{-1}(u) \in children(rep^{-1}(v))$},
\\
0 & \text{ otherwise.}
\end{cases}
\]
That is, we define the weights of edges corresponding to child relationships in the concept hierarchy to be $1$, and the weights of other edges to be $0$.

Finally, we set the threshold $\tau$ for every non-input neuron to be $\frac{(r_1 + r_2) k}{2}$.
It should be clear that the resulting network $\mathcal N$ solves the $(r_1,r_2)$-recognition problem:

\begin{theorem}
Network ${\cal N}$ $(r_1,r_2)$-\emph{recognizes} $\cal{C}$.
\end{theorem}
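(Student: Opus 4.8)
The plan is to prove, by induction on the level $\ell$ from $0$ to $\lmax$, that for every concept $c$ the representative $rep(c)$ behaves correctly at the single time step $t + level(c)$, which — since $rep(c)$ is placed at layer $level(c)$, so $layer(rep(c)) = level(c)$ — is exactly what \autoref{def:noisyrecognition} demands. First I would pin down the relevant invariants of the constructed network: by the assumption in \autoref{sec:prob-recog} all $engaged$ flags stay $0$ throughout recognition, so by Oja's rule \eqref{eq:Oja} no weight ever changes and every weight vector equals the value specified by the $weight$ function at all times. The key structural fact is then that, for $c \in C_\ell$ with $\ell \geq 1$ and $v = rep(c)$, the incoming neighbours of $v$ are exactly the $n$ neurons of layer $\ell-1$, and among the incoming edges the only ones of weight $1$ are $\{(rep(c'),v) : c' \in children(c)\}$ (all other incoming weights are $0$), so the potential is a sum of non-negative $\{0,1\}$ terms and
\[
pot^{rep(c)}(t+\ell) \;=\; \bigl|\{\, c' \in children(c) : rep(c') \text{ fires at time } t+\ell-1 \,\}\bigr| \;\in\; \{0,1,\dots,k\}.
\]

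Now fix $B \subseteq C_0$ presented at time $t$, and recall from \autoref{def:support} that $supported_r(B,\ell)$ is the set $B_\ell$ built recursively, and in particular $supported_{r_1}(B,0) = supported_{r_2}(B,0) = B$. The inductive claim, for each $\ell$ and each $c \in C_\ell$, is: (a) if $c \in supported_{r_2}(B,\ell)$ then $rep(c)$ fires at $t+\ell$; and (b) if $c \notin supported_{r_1}(B,\ell)$ then $rep(c)$ does not fire at $t+\ell$. The base case $\ell=0$ is immediate from \autoref{def:presented2}: the layer-$0$ neuron $rep(c)$ fires at $t$ iff $c \in B = supported_{r_2}(B,0) = supported_{r_1}(B,0)$. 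For the inductive step, assume the claim at level $\ell-1$. For (a): if $c \in supported_{r_2}(B,\ell)$ then $|children(c)\cap supported_{r_2}(B,\ell-1)| \geq r_2 k$, and by hypothesis (a) each such child's representative fires at $t+\ell-1$, so the potential formula gives $pot^{rep(c)}(t+\ell) \geq r_2 k \geq \frac{(r_1+r_2)k}{2} = \tau$ (using $r_1 \leq r_2$), hence $rep(c)$ fires at $t+\ell$. For (b): if $c \notin supported_{r_1}(B,\ell)$ then $|children(c)\cap supported_{r_1}(B,\ell-1)| < r_1 k$; by the contrapositive of hypothesis (b) every child $c'$ of $c$ whose representative fires at $t+\ell-1$ lies in $supported_{r_1}(B,\ell-1)$, so at most $|children(c)\cap supported_{r_1}(B,\ell-1)| < r_1 k$ children contribute, giving $pot^{rep(c)}(t+\ell) < r_1 k \leq \frac{(r_1+r_2)k}{2} = \tau$, so $rep(c)$ does not fire at $t+\ell$.

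Applying the claim to every $c \in C$ gives precisely conditions 1 and 2 of \autoref{def:noisyrecognition}, so $\mathcal N$ $(r_1,r_2)$-recognizes $\mathcal C$; and since $level(c) \leq \lmax$, all the relevant firing/non-firing events are determined within the window $[t, t+\lmax]$, giving the stated recognition time $\lmax$. There is no substantive obstacle in this argument — the content is entirely in organizing the induction so that the exact timing $t + level(c)$ propagates correctly layer by layer. The only points requiring a bit of care are bookkeeping: reading the $weight$ definition so that any edge not joining two representatives has weight $0$ (hence non-representative neurons cannot inflate any potential and their firing status is irrelevant), noting that the potential is genuinely a sum of non-negative terms so that "$\geq r_2 k$ firing children" forces "$pot \geq r_2 k$", and checking the two threshold inequalities $r_1 k \leq \tau \leq r_2 k$, all of which follow directly from $r_1 \leq r_2$.
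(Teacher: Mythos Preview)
Your proof is correct and is exactly the natural argument the paper has in mind: the paper itself offers no proof beyond ``It should be clear that the resulting network $\mathcal N$ solves the $(r_1,r_2)$-recognition problem,'' and the layer-by-layer induction you spell out is precisely the reasoning that makes it clear (and is the same induction structure the paper later writes out explicitly in the proof of \autoref{thm:noisefreelearning} in \autoref{sec: main-invariant}, for the harder case of approximate weights).
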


Recall that the definition of recognition, \autoref{def:noisyrecognition} says that each individual concepts $c$ in the hierarchy is recognized.  For a level $\ell$ concept $c$, the definition includes a time bound of $layer(rep(c)) = level(c) = \ell$ for recognizing concept $c$.

We note that our choice of weights in $\{0,1\}$ here is for simplicity.  Other combinations are possible, and in fact, our learning algorithm below results in different weights, approximating $\frac{1}{\sqrt{k}}$ and $0$. 

\subsection{Noise-free learning}
\label{sec:algorithms_learning}

Now we move from the simple recognition problem to the harder problem of learning.  Now we must design a network $\mathcal N$ that can learn an arbitrary concept hierarchy $\mathcal C$ with parameters as listed in \autoref{sec:datamodel} and \autoref{sec:networkmodel}, and with $\lmax \leq \ell'_{max}$.
Our algorithm utilizes \emph{Winner-Take-All (WTA)} sub-networks~\cite{lazzaro1988winner,yuille1989winner,thorpe1990spike,coultrip1992cortical,maass2000computational,wang2003k,oster2006spiking,LynchMP17a}.

\paragraph{Winner-Take-All sub-networks:}
Our algorithm uses Winner-Take-All sub-networks to select which neurons are prepared to learn at different points during the learning process. 
In this paper, we abstract from these sub-networks by simply describing their effects on the $engaged$ flags in the non-input neurons.
We give the precise requirements in~\autoref{as:WTA}.  

While the network is being trained, example concepts are "shown" to the network, one example at each time $t$, according to a $\sigma$-bottom-up training schedule as defined in \autoref{sec:prob-learning}.
We assume that, for every example concept $c$ that is shown, exactly one neuron at the appropriate layer will be engaged; this layer is the one with the same number as the level of $c$ in the concept hierarchy.
Furthermore, the neuron on that layer that is engaged is the one that has the largest potential $pot^u$. 
More precisely, in terms of timing, we assume:

\begin{assumption}[\textbf{Winner-Take-All assumption}]
\label{as:WTA}
If a level $\ell$ concept $c$ is "shown" at time $t$, then at time $t+\ell$, exactly one layer $\ell$ neuron $u$ has its \emph{engaged} state component equal to $1$, that is, it has $e^u(t+\ell) = 1$.
Moreover, $u$ is chosen so that $pot^u(t+\ell)$ is the highest potential at time $t+\ell$ among all the layer $\ell$ neurons.
\end{assumption}
\snote{This allows for the case of "ties", where more than one has the highest potential.  In that case, this says, technically, that any neuron with the highest potential could be selected.}
\snote{So our assumption is explicitly tying the layer in the network to the level of the concept.  How would a brain network guarantee this?}
\snote{By counting. If you see that $k^\ell$ concepts are shown, it's a level $\ell$ concept. Might be a little specific to our model, but in general you could also use some rule that if too much random stuff on the layer before fired, you don't update your weights.}

\snote{Frederik pointed out an algebra mistake in Section 7.  To fix that, we could include lower bounds on $r_2$ and $k$, and probably also start with slightly smaller initial weights, like $\frac{1}{k^{\lmax+1}}$.  Here is the first place where we would have to make changes, and there would be several adjustments throughout Section 7.

Besides putting in the new restrictions, the statement of this theorem would change slightly, replacing the lmax with (lmax+1).  I think.  But that doesn't change the order of magnitude.

I am making the changes now, but in green so they are easy to check or remove.}

\paragraph{Main algorithm:}
We assume that the network $\mathcal N$ starts in a clean state in which, for every neuron $u$ in layer $1$ or higher, 
$w^u(0 )= \frac{1}{k^{\corr{\lmax+1}}} \mathbf{1}$, 
where $\mathbf{1}$ is the $n$-dimensional all-one vector. 
We set the threshold $\tau$ for all neurons to be $\frac{(r_1+r_2) \sqrt{k}} {2}$, and the learning rate $\eta$ to be $\frac{1}{4k}$. 
The initial condition, threshold, learning rate, \autoref{as:WTA}, and the general model conventions for activation and learning suffice to determine how the network behaves, when shown a particular series of concepts.
Our main result is:

\snote{Several changes in the statement of the theorem.
It said that b geq 2 lmax but I don't think we used that.  I changed just to 2, which we need because of the way we have stated the lower bound on the difference $r_1 k - \lfloor r_1 k \rfloor$.
I added the assumption on r1 k.
I fixed a reversed inequality, seems like it was a typo.
I fixed up the def of sigma, which was incomplete.
Besides general cleanup, I'm ordering this consistently with the list of parameters in Section 7.2.}

\begin{theorem}[\textbf{Noise-Free Learning Theorem}]
\label{thm:noisefreelearning}
Let $\mathcal N$ be the network described above, with maximum layer $\ell'_{max}$. 
Let $b$ be an arbitrary positive real $\geq 2$.
Let $r_1, r_2$ be reals with $0 < r_1 < r_2 \leq 1$; assume that $r_1 k$ is not an integer, and $r_1 k - \lfloor r_1 k \rfloor \geq \frac{\sqrt{k}}{k^{b-1}}$.
\corr{Also assume that $r_2$ and $k$ satisfy the inequality $\frac{1}{\sqrt{k}} + \frac{1}{k} \leq \frac{r_2 \sqrt{k}}{2}$.}
\footnote{\corr{This last assumption can be satisfied by a variety of different combinations of assumptions on $r_2$ and $k$ individually, such as $r_2 \geq \frac{1}{2}$ and $k \geq 6$, or $r_2 \geq \frac{1}{4}$ and $k \geq 11$.}\snote{Double-check my arithmetic.}}
Let $\epsilon = \frac{r_2-r_1}{r_1+r_2}$.

Let $\mathcal C$ be any concept hierarchy, with maximum level $\lmax \leq \ell'_{max}$.
%
%
Let $\sigma = 
\frac{4}{3 \eta k}(\corr{(\lmax+1)} \log(k)) 
+ \frac{3}{\eta k\varepsilon} 
+ \frac{b \log(k)}{ \log(\frac{16}{15})}$.
Thus, $\sigma$ is
$O\left(\frac{1}{\eta k} \left(\lmax \log(k) + \frac{1}{\varepsilon}\right) + b \log(k)\right)$. 

Then $\mathcal N$ $(r_1,r_2,\sigma)$-learns concept hierarchy $\mathcal C$.

\end{theorem}
That is, unwinding the definition of $(r_1,r_2,\sigma)$-learning,
at any time after a training phase in which all the concepts of $\mathcal C$ are shown according to a $\sigma$-bottom-up training schedule, network $\mathcal N$ $(r_1,r_2)$-recognizes $\mathcal C$.
    
A rigorous analysis can be found in \autoref{sec:noisefreeanalysis}; the main idea of the analysis is as follows.
We first prove some direct consequences of Oja's rule (\autoref{lem:wincr}, \autoref{lem:invariants}, and \autoref{lem:structure}).  These quantify the weight changes for a single neuron involved in learning a single concept, assuming that all of its child concepts have already been learned.  
In particular, we show that the weights change quickly so that they approximate either $1/\sqrt{k}$ or $0$, depending on whether or not the weights correspond to neurons that represent child concepts.
\snote{I don't think we can say that the weights actually converge to 1/sqrt(k), at least not from the statement of Lemma 7.3.  That just claims that we wind up in a range that is near that bound, but the discrepancy depends on epsilon, which is fixed (in terms of r1 and r2) and not approaching 0.  Am I right?}
\snote{Lemma 7.3 shows this comes very close to 1/sqrt k for any fixed eps. So I think the statement above is quite all right.}

We next build on these lemmas to describe, in \autoref{lem: main-noise-free}, the learning (i.e., weight changes) that occur throughout the network in the course of the entire execution. 
What makes this challenging is that we allow "incomparable" concepts to be shown in an interleaved manner; the only constraint is that, for every concept $c$, child concepts of a concept $c$ must be shown sufficiently many times before $c$ is shown.
%
In order to prove that all concepts are learned correctly despite these challenges, we use an involved yet elegant five-part induction. 
Finally, in \autoref{sec: main-invariant} we put everything together and show that the network successfully $(r_1,r_2,\sigma)$-learns the concept hierarchy.

\section{Extension to Noisy Learning}
\label{sec:noisy}

We extend our model, algorithm, and analysis to noisy learning. The idea is that we should be able to learn a concept even if we do not see all the child concepts at every time. For example, we could expect to learn the concept of a "human" even if we sometimes see only the "legs" and  "body", and other times see only the "head" and "legs" etc.

To model this, we assume that, in order to show a concept $c$, we show a random $p$-fraction of its sub-concepts. 
Formally, we use the following recursive marking procedure to determine which inputs should be presented to the network:
We begin by marking $c$.
Then, proceeding recursively, for any marked concept, we mark a random $p$-fraction of the sub-concepts. 
The recursion terminates when a subset of the leaves of $c$ are marked.  The inputs presented to the network are the representations of the marked leaves of $c$.

\subsection{Modifications to the model}
\label{sec:noisylearningmodel}

Formally, our model is as follows. 
Recall that in \autoref{def:shown}, we assumed that when a concept $c$ is shown, that \emph{all} $reps$ of the leaves of $c$ fire. 
We now weaken this assumption, as follows.

\begin{definition}
[\textbf{$p$-noisy-showing a concept}]
\label{def:noisy-shown}
Concept $c$  is $p$-\emph{noisy-shown} at time $t$, where $p \in (0,1]$, provided that a subset $B \subseteq leaves(c)$ produced by the random function $mark(c,p)$ is presented at time $t$.  \\
Random function $mark(c,p)$ is defined recursively based on the level of $c$:  If $level(c) = 0$, then $mark(c,p) = \{ c \}$.  If $level(c) \geq 1$, then choose a subset $C'$ consisting of exactly $\lceil p k \rceil$ children of $c$, uniformly at random, 
and let $mark(c,p) = \bigcup_{c' \in C'} mark(c',p)$.
\end{definition}

In the noisy case, we need an upper bound ($\sigma_2$ in the following definition) on the number of times a concept is noisy-shown. See the discussion in the footnote before \autoref{thm:noisylearning} for more details.
\snote{?  This footnote is located before the statement of Theorem 6.4.  You are talking here about footnote 6, not 7.}
\snote{fixed. I meant that the text appears after it, but I guess your understanding is more natural.}

\begin{definition}
[\textbf{$(\sigma_1,\sigma_2)$-bottom-up training schedule}]
A training schedule is $(\sigma_1,\sigma_2)$-\emph{bottom-up}, where $\sigma_1$ and $\sigma_2$ are positive integers, $\sigma_1 \leq \sigma_2$, provided that each concept in $C$ appears in the list at least $\sigma_1$ times and no more than $\sigma_2$ times, and no concept in $C$ appears before each of its children has appeared at least $\sigma_1$ times.
\end{definition}

\begin{definition}
[\textbf{$(r_1,r_2,\sigma_1,\sigma_2,p)$-noisy learning}]
Network $\mathcal N$ $(r_1,r_2,\sigma_1,\sigma_2,p)$-\emph{noisy-learns} concept hierarchy $\mathcal C$ provided that the following holds.
At any time after a training phase in which all the concepts of $\mathcal C$ are $p$-noisy-shown according to a $(\sigma_1,\sigma_2)$-bottom-up training schedule, network $\mathcal N$ $(r_1,r_2)$-recognizes $\mathcal C$.
\end{definition}



\subsection{Noisy Learning Algorithm}\label{sec:noisylearningalgo}

The algorithm is exactly the same as in \autoref{sec:algorithms_learning}, except that here we use $p$-noisy showing (\autoref{def:noisy-shown})  instead of ordinary showing (\autoref{def:shown}).
We prove that our modified algorithm is robust in that it works even for our notions of noisy showing and noisy learning.

Our theorem for noisy learning, \autoref{thm:noisylearning}, differs from \autoref{thm:noisefreelearning} in that we guarantee "correctness" only in cases where each concept is noisy-shown at most $n^6$ times, that is, in cases where the network $(r_1,r_2,\sigma,n^6,p)$-noisy learns the concept hierarchy.
\footnote{Note that we assume that every concept is shown at most $n^6$ times.  This is natural since if we consider a number $T$ of rounds that is of order exponential in $n$, then at some point $t\leq T$ it is very likely that the weights will be unfavorable for recognition.
This can happen since in such a large time frame, it's very likely that there will be a long sequence of runs in which the same representatives are simply (due to bad luck) not shown. The network will forget about their importance.  %
This is also partly the reason why the learning rate in the following theorem is smaller than the one of the noise-free counterpart: the smaller learning rate guarantees that during the first $n^6$ rounds no unlikely sequence occurs that is very `bad'.
}
Let $\bar{w}=1/\sqrt{pk+1-p}$.
Our algorithm uses the learning rate $\eta = \frac{(\frac{\delta p \bar{w}}{20})^3}{ 64 T k^2 p^3 }$ and 
the firing threshold $\tau = r_2 k (\bar{w}-2\delta)$, where $\delta = \bar{w}(r_2-r_1)/50$.

We now state our main theorem in the noisy-learning setting. 

\begin{theorem}[\textbf{Noisy-Learning Theorem}]
\label{thm:noisylearning}
Let $\mathcal N$ be the network described  in \autoref{sec:networkmodel}, with maximum layer $\ell'_{max}$.
Let $r_1, r_2$ be reals with $0 < r_1 < r_2 \leq 1$; assume that $r_2-r_1 \geq 1/k$ and $k\geq 2$.
%
Let $\mathcal C$ be any concept hierarchy, with maximum level $\lmax \leq \ell'_{max}$ and a total of $|C|$ concepts.
Let $\sigma = c' \frac{k^6}{p^6\delta^3}\left(\lmax \log(k) + 
  \log(|C|n/\delta) \right)$, for some large enough constant $c'$.

Then, w.h.p., $\mathcal N$ $(r_1,r_2,\sigma, n^6,p)$-noisy-learns concept hierarchy $\mathcal C$.\footnote{We define w.h.p in this paper to be $1 - \frac{1}{n}$.}
\end{theorem}

\subsection{Proof idea}\label{sec:noisyproofidearough}

In the  presence of noise, many of the properties of the noise-free case no longer hold, rendering the proof significantly more involved.  Here we give a rough outline of our proof; details appear in \autoref{sec:noisyanalysis}.

In the analysis we only consider the learning of one concept, as the interleaving of different concepts is no different than in the noise-free case and hence we do not repeat that analysis. Therefore, in the reminder we fix one concept.

First, we bound the worst-case change of potential during a period of $T$ rounds (where the concept is shown), provided it is initially within certain bounds.  We later show that it will stay throughout the first $n^6$ rounds where the concept is shown.
\snote{This description is assuming an overall bound of $n^6$ rather than $n^6$ for each concept.  But I think we can make the stronger claim that it works provided we use the bound of $n^6$ for each concept.  Seems more natural to me, since it corresponds to the definition, and I suspect it is what you actually use.}
\snote{Made clear that this per concept}

We aim to derive bounds on the change of the weight of a single edge during such a period. 
It turns out that the way the weights change depends highly on the other weights, which makes the analysis non-trivial.
For this reason, we refrain from showing convergence of each weight separately. 
\snote{What?  The previous sentence doesn't make sense to me.}
\snote{Better?}
\snote{Fine.}
Instead we use the  following potential function $\psi$.
to show that the max and min weight convergence towards $\bar{w}=\frac{1}{\sqrt{ p k + 1-p   } }$ and $0$ respectively.
Fix an arbitrary time $t$ and let $w_{min}(t)$ and $w_{max}(t)$ be the minimum and maximum weights among $w_{1}(t),w_{k}(t), \dots, w_{k}(t)$, respectively. 
%
Let $\psi(t) = \max \left\{  \frac{w_{max}(t)}{\bar{w}}, \frac{\bar{w}}{w_{min}(t)}  \right\}. $

Note that, in contrast to the noise-free case, weights belonging to representatives of sub-concepts converge to 
$ \bar{w} $ instead to $1/\sqrt{k}$.

Our goal is to show that the above potential decreases quickly until it is very close to $1$.
Showing that the potential decreases is involved, since one cannot simply use a worst-case approach, due to the terms in Oja's rule being non-linear and potentially having a high variance, depending on the distribution of weights.  
Instead, the key to showing that $\psi$ decreases is to carefully use the randomness over the input vector and to carefully bound the non-linear terms.
Bounding these non-linear terms tightly presents a major challenge.
To overcome it, we show that the changes of the weights form a Doob martingale allowing us to use Azuma-Hoeffding inequality to get asymptotically almost tight bounds on the change of the weights during the $T$ rounds.
The proof can be found in  \autoref{sec:noisyanalysis}.


\section{A Lower Bound}\label{sec:lowerbounds} 

\snote{I think we probably could extend to a neuron model with strengths instead of just firing indicators.  Basically, by replacing the sums of weights by dot products.}

Our results so far demonstrate how concept hierarchies with $\lmax$ levels can be represented robustly by networks with the same number of layers, and how such representations can be learned, even in the presence of noise.
We would also like lower bound theorems saying that $\lmax$ layers are necessary for robust representation, under suitable restrictions. 

In this section, we give a first step toward such a result, \autoref{thm:imposs-2-vs-1}.
It says that a network $\mathcal N$ with maximum layer $1$ cannot recognize a concept hierarchy $\mathcal C$ with maximum level $2$.
This bound depends only on the requirement that $\mathcal N$ should recognize $\mathcal C$ according to our definition for noisy recognition in \sref{Definition}{def:noisyrecognition}. 
That definition says that the network must tolerate bounded noise, as expressed by the ratio parameters $r_1$ and $r_2$.  Our result assumes reasonable constraints on the values of $r_1$ and $r_2$.
Note that the bound does not involve learning, only recognition.

{A preliminary generalization of this result to more levels and layers appears in~\cite{arxiv-v3}.
However, in addition to the basic definition of noisy recognition, this generalization uses a strong technical assumption about disjointness of certain sets of triggered neurons.
This assumption might be reasonable, in that it is guaranteed by our learning algorithms in \autoref{sec:algorithms_learning}; however, we think it is too strong and would prefer to weaken it to, say, a simple limitation on the number of neurons at each layer in the network.  We leave this task for future work.}

\subsection{Assumptions for the lower bound}
\label{sec: lower-bound-assumptions}

Here we list explicitly the assumptions that we use for our lower bound result, \autoref{thm:imposs-2-vs-1}.
We state these assumptions in a general way, in terms of a particular concept hierarchy $\mathcal C$ with concept set $C$ and any number $\lmax$ of levels, and an arbitrary network $\mathcal N$ with any number $\ell'_{max}$ of layers.
However, our lower bound result, \autoref{thm:imposs-2-vs-1}, refers to just the special case of two levels and one layer.
These assumptions capture the idea that concept hierarchy $\mathcal C$ is $(r_1,r_2)$-recognized by network $\mathcal N$.

\begin{enumerate}
    \item  Every concept $c \in C$ has a unique designated neuron $rep(c)$ in the network.  (In general, it might be in any layer, regardless of the level of $c$.)
    \item  Let $B$ be any subset of $C_0$.  If $c \in supported_{r_2}(B)$, then presentation of $B$ at time $t$ results in firing of $rep(c)$ at time $t+layer(rep(c))$.
    \item  Let $B$ be any subset of $C_0$.  If $c \notin supported_{r_1}(B)$, then presentation of $B$ at time $t$ does not result in firing of $rep(c)$ at time $t+layer(rep(c))$.
\end{enumerate}

\snote{We assume throughout the paper that firing proceeds layer-by-layer, each layer at one time.  So when we talk about a neuron $u$ at level $\ell$ being triggered to fire, we mean that this occurs at time $t+layer(u)$, where $t$ is the time when the triggering set of level $0$ concepts is presented. We might need to clarify this timing issue in other definitions and claims. Here, I added a footnote.}


Throughout this section, we assume the model presented in \autoref{sec:datamodel} and~\autoref{sec:networkmodel}.
Furthermore, since we are considering recognition only, and not learning, we assume that the $engaged$ state components are always equal to $0$.
\snote{We are using a simplified version of our general model, simpler than what we allow for our algorithmic results.  Namely, neuron states are just binary `firing status" values, representing whether the neuron is firing or not at a particular time.  And we have no extra facilities such as WTA modules, just neurons arranged in layers.  This corresponds to the basic SNN model of Musco et al, ITCS paper 2017, new version in arXiv 2019, my composition paper 2018.}
Also throughout this section, we assume that $r_1$ and $r_2$ satisfy the following constraints:
\begin{enumerate}
    \item $0 \leq r_1 \leq r_2 \leq 1$.
    \item $r_1 k$ is not an integer; define $r_1'$ so that $r_1' k = \lfloor r_1 k \rfloor$.
    \item Define $r_2'$ so that $r_2' k = \lceil r_2 k \rceil$.
    \snote{I think here it's OK to allow the case where $r_2' = r_2$.  Check the details.}
    \item $(r_2')^2 \leq 2 r_1' - (r_1')^2$.
\end{enumerate}
we think these constraints are reasonable.  For example, for $k = 10$, $r_1 = .51$ and $r_2 = .8$ satisfy these conditions.  Or $r_1 = \frac{1}{3}$ and $r_2 = \frac{2}{3}$.

\subsection{Impossibility for recognition for two levels and one layer}
\label{sec:imposs-2-levels-1-layer}

We consider an arbitrary concept hierarchy $\mathcal C$ with maximum level $2$ and concept set $C$.
We assume a  (static) network $\mathcal N$ with maximum layer $1$, and total connectivity from layer $0$ neurons to layer $1$ neurons.
For such a network and concept hierarchy, we get a contradiction to the noisy recognition problem  in \autoref{sec:prob-recog}, for any values of $r_1$ and $r_2$ that satisfy the constraints given in \autoref{sec: lower-bound-assumptions}.
For the problem requirements, we use only Assumptions 1-3 from \autoref{sec: lower-bound-assumptions}.

\begin{theorem}
\label{thm:imposs-2-vs-1}
Assume that $\mathcal C$ has maximum level $2$ and $\mathcal N$ has maximum layer $1$.
Assume that $r_1, r_2, r_1', r_2'$ satisfy the constraints in \autoref{sec: lower-bound-assumptions}.
Then $\mathcal N$ does not recognize $\mathcal C$, according to Assumptions 1-3.
\end{theorem}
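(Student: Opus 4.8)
The plan is to fix an arbitrary level $2$ concept $c$ in $\mathcal C$, with level $1$ children $c_1,\dots,c_k$ and leaf sets $L_i = children(c_i)\subseteq C_0$ (pairwise disjoint, each of size $k$), let $u = rep(c)$, assume for contradiction that $\mathcal N$ recognizes $\mathcal C$ under Assumptions 1--3, and then play off the "must fire" and "must not fire" requirements against each other. Two quick reductions come first. Presenting $\emptyset$ never $r_1$-supports $c$, so by Assumption~3 the neuron $u$ must not fire on the empty presentation, which forces $\tau>0$ (otherwise potential $0\ge\tau$ would make every neuron fire). And if $u$ were in layer $0$, then, since $rep\colon D_0\to N_0$ is a bijection, $u=rep(b)$ for a unique $b\in D_0$ and $layer(u)=0$, so a presented set $B\subseteq C_0$ makes $u$ fire at the required time $t+layer(u)=t$ iff $b\in B$; using that the standing constraint $(r_2')^2\le 2r_1'-(r_1')^2<1$ forces $r_2'<1$ and hence $r_2'k<k$, I would present the union of the leaf sets of some $r_2'k$ children of $c$ chosen to avoid the child whose leaves contain $b$ (any $r_2'k$ children, if $b\notin leaves(c)$). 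This set $r_2$-supports $c$ but omits $b$, contradicting Assumption~2. So $u$ lies in layer $1$; its weights are frozen (the $engaged$ flags are $0$ throughout), and writing $w$ for that weight vector and $w_b$ for its component indexed by $rep(b)$, presenting $B\subseteq C_0$ makes $u$ fire at time $t+1$ exactly when $\sum_{b\in B}w_b\ge\tau$ (the incoming neighbors of $u$ being precisely the layer $0$ neurons).

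The core of the argument is to exhibit, for this frozen but otherwise arbitrary weight vector, one "must-fire" presentation of small total weight and one "must-not-fire" presentation of large total weight. Write $W_i=\sum_{b\in L_i}w_b$ and $W=\sum_{i=1}^k W_i$. For the must-fire set, I would take $S$ to be the $r_2'k$ indices with smallest $W_i$, and for each $i\in S$ include the $r_2'k$ lowest-weight leaves of $c_i$. Each chosen child then has $r_2'k=\lceil r_2 k\rceil\ge r_2 k$ of its leaves present, and $|S|=r_2'k\ge r_2 k$, so $c$ is $r_2$-supported; two applications of the elementary fact that the $m$ smallest of $n$ nonnegative reals sum to at most $\tfrac{m}{n}$ of their total give total weight at most $r_2'\sum_{i\in S}W_i\le (r_2')^2 W$, so Assumption~2 yields $\tau\le(r_2')^2W$. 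For the must-not-fire set, note that $r_1k$ not being an integer forces $0<r_1<1$, hence $r_1'<1$ and $k-r_1'k\ge 1$; I would start from $leaves(c)$ and delete, from each of the $k-r_1'k$ children with smallest $W_i$, its $k-r_1'k$ lowest-weight leaves. Each such child then retains only $r_1'k=\lfloor r_1k\rfloor<r_1k$ leaves, so it is not $r_1$-supported, whence at most $r_1'k<r_1k$ children of $c$ are $r_1$-supported and $c$ is not $r_1$-supported; the same averaging bound leaves total weight at least $W-(1-r_1')\sum W_i\ge W-(1-r_1')^2W=(2r_1'-(r_1')^2)W$, so Assumption~3 yields $\tau>(2r_1'-(r_1')^2)W$.

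Combining, $(2r_1'-(r_1')^2)W<\tau\le(r_2')^2W$. If $W>0$ this gives $(r_2')^2>2r_1'-(r_1')^2$, contradicting the standing constraint; if $W=0$ it gives $\tau\le 0$, contradicting $\tau>0$. Either way we reach a contradiction, so $\mathcal N$ does not recognize $\mathcal C$.

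I expect the only real difficulty to be this middle step: coping with a completely arbitrary, possibly highly non-uniform weight vector $w$. For uniform weights the comparison is transparent --- "all leaves of $\lceil r_2 k\rceil$ children" against "$leaves(c)$ with $k-\lfloor r_1k\rfloor$ leaves deleted from each of $k-\lfloor r_1k\rfloor$ children" --- and the constraint $(r_2')^2\le 2r_1'-(r_1')^2$ is exactly what makes those two cardinalities collide, so $\tau$ has nowhere to sit. The work is to certify, via the greedy choice of which children and which leaves to include together with the "$m$ smallest of $n$" inequality, that no choice of $w$ lets the network separate the firing threshold from the non-firing threshold; the layer-$0$ corner case and the sign of $\tau$ are the only other things to check.
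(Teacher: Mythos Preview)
Your proof is correct and follows essentially the same approach as the paper's: the two scenarios you construct---the ``must-fire'' set (the $r_2'k$ lowest-weight leaves from each of the $r_2'k$ lowest-weight children) and the ``must-not-fire'' set (all of $leaves(c)$ minus the $(1-r_1')k$ lowest-weight leaves from each of the $(1-r_1')k$ lowest-weight children)---are exactly the paper's Scenarios~A and~B, just described via deletion rather than inclusion, and the resulting bounds $(r_2')^2W$ and $(2r_1'-(r_1')^2)W$ match. Your treatment is in fact somewhat more careful than the paper's: you explicitly dispose of the possibility that $rep(c)$ sits in layer~$0$ (the paper dismisses this with ``layer~$0$ is reserved for level~$0$ concepts''), you explicitly establish $\tau>0$ via the empty presentation, and you separate out the degenerate case $W=0$, all of which the paper leaves implicit.
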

 
\begin{proof}
Assume for contradiction that $\mathcal{N}$ recognizes $\mathcal{C}$. 
Let $c$ denote any one of the concepts in $C_2$, i.e., a level $2$ concept in $C$.
Then $c$ has $k$ children, each of which has $k$ children of its own, for a total of $k^2$ grandchildren.

Each of the $k^2$ grandchildren must have a $rep$ in layer $0$, but neither $c$ nor any of its $k$ children do, because layer $0$ is reserved for level $0$ concepts.  
So in particular, $rep(c)$ is a layer $1$ neuron.
By the structure of the network, this means that the only inputs to $rep(c)$ are from layer $0$ neurons.
Since we assume total connectivity, we have an edge from each layer $0$ neuron to $rep(c)$.
We define:
\begin{itemize}
    \item  $W(b)$, for each child $b$ of $c$ in the concept hierarchy:  The total weight of all edges $(u,rep(c))$, where $u$ is a layer $0$ neuron that is the $rep$ of a child of $b$.
    \item  $W$:  The total weight of all the edges $(u,rep(c))$, where $u$ is a layer $0$ neuron that is a $rep$ of a grandchild of $c$.  In other words, $W = \Sigma_{b \in children(c)} W(b)$.
\end{itemize}

We consider two scenarios.
In Scenario A (the "must-fire scenario"), we choose input set $B$ to consist of enough leaves of $c$ to force $rep(c)$ to fire, that is, we ensure that $c \in supported_{r_2}(B)$, while trying to minimize the total weight incoming to $rep(c)$.  
Specifically, we choose the $r_2' k \geq r_2 k$ children $b$ of $c$ with the smallest values of $W(b)$.
And for each such $b$, we choose its $r_2' k$ children with the smallest
weights. 
Let $B$ be the union of all of these $r_2' k$ sets of $r_2' k$ grandchildren of $c$.
Since $r_2' k \geq r_2 k$, it follows that $c \in supported_{r_2}(B)$.

\noindent\emph{Claim 1:}
In Scenario A, the total incoming potential to $rep(c)$ is at most $(r_2')^2 W$.

In Scenario B (the "can't-fire scenario"), we choose input set $B$ to consist of leaves of $c$ that force $rep(c)$ not to fire, that is, we ensure that $c \notin supported_{r_1}(B)$, while trying to maximize the total weight incoming to $rep(c)$.
Specifically, we choose the $r_1' k < r_1 k$ children $b$ of $c$ with the largest values of $W(b)$, and we include all of their children in $B$.  
For each of the remaining $(1 - r_1') k$ children of $c$, we choose its $r_1' k < r_1 k$ children with the largest weights and include them all in $B$.
Since $r_1' k$ is strictly less than $r_1 k$, it follows that $c \notin supported_{r_1}(B)$.

\noindent
\emph{Claim 2:}  In Scenario B, the total incoming potential to $rep(c)$ is at least $(r_1') W + (1 - r_1') r_1' W = (2r_1' - (r_1')^2) W$. 

\noindent
\emph{Proof of Claim 2:}
We define:
\begin{itemize}
\item
$W_1$:  The total of the weights $W(b)$ for the $r_1' k$ children $b$ of $c$ with the largest values of $W(b)$.
\item
$W_2 = W - W_1$:  The total of the weights $W(b)$ for the remaining $(1 - r_1') k$ children of $c$.
\item
$W_3$:  We know that $W_1 \geq r_1' W$, since $W_1$ gives the total weight for the $r_1' k$ children of $c$ with the largest weights, out of $k$ children.  
Define $W_3 = W_1 - r_1' W$; then $W_3$ must be nonnegative.
\end{itemize}

Then the total incoming potential to $rep(c)$ is 
\begin{align*}
  &\geq W_1 + r_1' W_2,\\
&= r_1' W + W_3 + r_1'(W - W_1),\\
&= r_1' W + W_3 + r_1'(W - W_3 - r_1' W),\\
&= 2 r_1' W - (r_1')^2 W + (1 - r_1')W_3, \\
&\geq 2 r_1' W - (r_1')^2 W, \\
&= (2 r_1' - (r_1')^2) W, \\
\end{align*}
as needed.

\noindent
\emph{End of proof of Claim 2}

Now, Claim 1 implies that the threshold $\tau$ of neuron $rep(c)$ must be at most $(r_2')^2 W$,
since it must be small enough to permit the given $B$ to trigger firing of $rep(c)$.
On the other hand, Claim 2 implies that the threshold must be strictly greater than
$(2 r_1' - (r_1')^2) W$, since it must be large enough to prevent the given $B$ from triggering firing of $rep(c)$.  
So we must have
\[(2 r_1' - (r_1')^2) W < \tau \leq (r_2')^2 W,\]
which implies that
\[2 r_1' - (r_1')^2 < (r_2')^2.\]
But this contradicts our assumption that $(r_2')^2 \leq 2 r_1'  - (r_1')^2$.
\end{proof}

\section{Conclusions and Future Work}
\label{sec:futurework}


In this paper, we have proposed a theoretical model for recognizing and learning hierarchically-structured concepts in synchronous, feed-forward layered Spiking Neural Networks.
Our networks use Oja's learning rule for adjusting synapse weights.
Based on this model, we have presented two unsupervised learning algorithms, one for noise-free learning and one that allows bounded noise. 
Both algorithms learn concepts in a bottom-up manner, but allow arbitrary interleaving in learning of incomparable concepts.
We have analyzed both algorithms in detail.

The representations produced by these algorithms are certain types of embeddings of the hierarchical concept structure in the neural network.
These representations support robust concept recognition, even when some of the inputs are missing. 
We have also provided a preliminary lower bound on the number of layers, saying that two-level concepts cannot be recognized robustly in one-level networks.


This paper represents a first step towards a theory of representation and learning for hierarchically-structured concepts in SNNs.
Our representations and algorithms appear to be generally consistent with experimental results in computer vision and neuroscience. 
However, our model is highly abstract and makes several simplifying assumptions, in the interests of exposing the key ideas and simplifying the analysis:  for instance, we assume that concepts are strictly tree-structured, that every concept has the same number of children, and that the learning rule is applied without error.
To make the results more realistic, one should, of course, loosen these assumptions.


The results in this paper suggest numerous directions for future research:

\paragraph{Extensions to our results:}
One can consider more flexible orders in which concepts in a hierarchy can be learned, based on a larger class of training schedules.
Is it possible to learn higher-level concepts before learning low-level concepts? 
How does the order of learning affect the time required to learn?
Another interesting issue is robustness of the networks, for example, to presentation of a few "extraneous" inputs that are not part of the concept being shown, to noise in calculating potentials, or to failures of neurons or synapses.

Also, our algorithms use some auxiliary capabilities, such as Winner-Take-All, in order to select neurons for learning; it would be interesting to combine our algorithms with network implementations of these auxiliary capabilities in order to obtain complete, self-contained networks that solve the learning problem "from scratch".
Finally, we would like to strengthen the lower bound results to apply to many levels and layers.

\paragraph{Variations in the network model:}
Our networks have a simple layered structure; it would be interesting to consider some natural variations.
For example, instead of all-to-all connections between consecutive layers, what happens to the results if one assumes a smaller number of randomly-determined connections between layers?
Also, in our networks, all edges go from one layer $\ell$ to the next higher layer $\ell+1$.  How do the results change if one allows edges to go from layer $\ell$ to any higher layer? 

What would be the impact on the results of allowing feedback edges from each layer $\ell$ to the next-lower layer $\ell-1$?  How would the costs of recognizing and learning concepts change based on feedback from representations of higher-level concepts?
Finally, what would be the effect of using other variants of Hebbian learning rules besides Oja's rule?

\paragraph{Variations in the data model:}
Another interesting research direction is to consider variations on the structure of concept hierarchies.
How do the results change if we allow different numbers of children for different nodes, or allow a level $\ell$ concept to have children at any level smaller than $\ell$, rather than just level $\ell-1$?
What happens if a concept hierarchy need not be a tree, but may include a bounded amount of overlap between the sets of children of different concepts?

It would be interesting to understand more generally what kinds of logical structures can be learned by synchronous SNNs.
In our concept hierarchies, each level $\ell+1$ concept corresponds to the "and" of several level $\ell$ concepts.
What if we allow concepts that correspond to "ors", or "nors", of other concepts?
Similar questions were suggested by Valiant~\cite{valiant2000circuits}, in terms of a different model.
%
%
Also, in addition to learning individual concepts, it would be interesting to consider learning relationships between concepts, such as association, causality, or sequential order.
\snote{Linguistic structure?}

\paragraph{Different forms of representation:}
In this paper, each concept $c$ is represented by just one neuron $rep(c)$.  
An interesting extension, which may be more biologically plausible, would be to allow the representation of each concept $c$ to be a more elaborate "code" consisting of a particular set of neurons that fire.
What are the theoretical advantages and costs of such codes, compared to simpler single-neuron representations?

\bibliography{biblio}

\begin{thebibliography}{10}

\bibitem{artola1990different}
Alain Artola, S~Br{\"o}cher, and Wolf Singer.
\newblock Different voltage-dependent thresholds for inducing long-term
  depression and long-term potentiation in slices of rat visual cortex.
\newblock {\em Nature}, 347(6288):69, 1990.

\bibitem{artola1993long}
Alain Artola and Wolf Singer.
\newblock Long-term depression of excitatory synaptic transmission and its
  relationship to long-term potentiation.
\newblock {\em Trends in neurosciences}, 16(11):480--487, 1993.

\bibitem{bienenstock1982theory}
Elie~L Bienenstock, Leon~N Cooper, and Paul~W Munro.
\newblock Theory for the development of neuron selectivity: orientation
  specificity and binocular interaction in visual cortex.
\newblock {\em Journal of Neuroscience}, 2(1):32--48, 1982.

\bibitem{coultrip1992cortical}
Robert Coultrip, Richard Granger, and Gary Lynch.
\newblock A cortical model of winner-take-all competition via lateral
  inhibition.
\newblock {\em Neural Networks}, 5(1):47--54, 1992.

\bibitem{dubhashi2009concentration}
Devdatt~P Dubhashi and Alessandro Panconesi.
\newblock {\em Concentration of measure for the analysis of randomized
  algorithms}.
\newblock Cambridge University Press, 2009.

\bibitem{rewC}
Rebecca Fay, Ulrich Kaufmann, Andreas Knoblauch, Heiner Markert, and
  G{\"u}nther Palm.
\newblock Combining visual attention, object recognition and associative
  information processing in a neurobotic system.
\newblock In {\em Biomimetic neural learning for intelligent robots}, pages
  118--143. Springer, 2005.

\bibitem{Felleman91}
Daniel~J Felleman and DC~Essen Van.
\newblock Distributed hierarchical processing in the primate cerebral cortex.
\newblock {\em Cerebral cortex (New York, NY: 1991)}, 1(1):1--47, 1991.

\bibitem{foldiak1989adaptive}
Peter F{\"o}ldi{\'a}k and Peter Fdilr.
\newblock Adaptive network for optimal linear feature extraction.
\newblock 1989.

\bibitem{gerstner2002spiking}
Wulfram Gerstner and Werner~M. Kistler.
\newblock {\em Spiking neuron models: Single neurons, populations, plasticity}.
\newblock Cambridge University Press, 2002.

\bibitem{eligibility}
Wulfram Gerstner, Marco Lehmann, Vasiliki Liakoni, Dane Corneil, and Johanni
  Brea.
\newblock Eligibility traces and plasticity on behavioral time scales:
  experimental support of neohebbian three-factor learning rules.
\newblock {\em Frontiers in neural circuits}, 12:53, 2018.

\bibitem{habenschuss2013stochastic}
Stefan Habenschuss, Zeno Jonke, and Wolfgang Maass.
\newblock Stochastic computations in cortical microcircuit models.
\newblock {\em PLoS Computational Biology}, 9(11):e1003311, 2013.

\bibitem{Hebb49}
D.~O. Hebb.
\newblock {\em The Organization of Behavior}.
\newblock Wiley and Sons, New York, 1949.

\bibitem{Renaming}
Yael Hitron, Nancy~A. Lynch, Cameron Musco, and Merav Parter.
\newblock Random sketching, clustering, and short-term memory in spiking neural
  networks.
\newblock In {\em 11th Innovations in Theoretical Computer Science Conference,
  {ITCS} 2020, January 12-14, 2020, Seattle, Washington, {USA}}, pages
  23:1--23:31, 2020.
\newblock URL: \url{https://doi.org/10.4230/LIPIcs.ITCS.2020.23}, \href
  {http://dx.doi.org/10.4230/LIPIcs.ITCS.2020.23}
  {\path{doi:10.4230/LIPIcs.ITCS.2020.23}}.

\bibitem{Hubel62}
D.~Hubel and T.~Wiesel.
\newblock Receptive fields, binocular interaction, and functional architecture
  in the cat's visual cortex.
\newblock {\em Journal of Physiology}, 160:106--154, 1962.

\bibitem{Hubel59}
D.~H. Hubel and T.~N. Wiesel.
\newblock Receptive fields of single neurones in the cat's striate cortex.
\newblock {\em The Journal of Physiology}, 148(3):574--591, 1959.
\newblock URL:
  \url{https://physoc.onlinelibrary.wiley.com/doi/abs/10.1113/jphysiol.1959.sp006308},
  \href
  {http://arxiv.org/abs/https://physoc.onlinelibrary.wiley.com/doi/pdf/10.1113/jphysiol.1959.sp006308}
  {\path{arXiv:https://physoc.onlinelibrary.wiley.com/doi/pdf/10.1113/jphysiol.1959.sp006308}},
  \href {http://dx.doi.org/10.1113/jphysiol.1959.sp006308}
  {\path{doi:10.1113/jphysiol.1959.sp006308}}.

\bibitem{Hupe98}
JM~Hup{\'e}, AC~James, BR~Payne, SG~Lomber, P~Girard, and J~Bullier.
\newblock Cortical feedback improves discrimination between figure and
  background by v1, v2 and v3 neurons.
\newblock {\em Nature}, 394(6695):784, 1998.

\bibitem{izhikevich2004model}
Eugene~M. Izhikevich.
\newblock Which model to use for cortical spiking neurons?
\newblock {\em IEEE Transactions on Neural Networks}, 15(5):1063--1070, 2004.

\bibitem{kempter1999hebbian}
Richard Kempter, Wulfram Gerstner, and J~Leo Van~Hemmen.
\newblock Hebbian learning and spiking neurons.
\newblock {\em Physical Review E}, 59(4):4498, 1999.

\bibitem{rewD}
Andreas Knoblauch, Heiner Markert, and G{\"u}nther Palm.
\newblock An associative cortical model of language understanding and action
  planning.
\newblock In {\em International work-conference on the interplay between
  natural and artificial computation}, pages 405--414. Springer, 2005.

\bibitem{rewB}
Andreas Knoblauch, G{\"u}nther Palm, and Friedrich~T Sommer.
\newblock Memory capacities for synaptic and structural plasticity.
\newblock {\em Neural Computation}, 22(2):289--341, 2010.

\bibitem{lazzaro1988winner}
John Lazzaro, Sylvie Ryckebusch, Misha~Anne Mahowald, and Carver~A. Mead.
\newblock Winner-take-all networks of $o(n)$ complexity.
\newblock Technical report, DTIC Document, 1988.

\bibitem{LegensteinMPV18}
Robert~A. Legenstein, Wolfgang Maass, Christos~H. Papadimitriou, and Santosh~S.
  Vempala.
\newblock Long term memory and the densest $k$-subgraph problem.
\newblock In {\em 9th Innovations in Theoretical Computer Science (ITCS 2018)},
  pages 57:1--57:15, Cambridge, MA, January 2018.

\bibitem{LowelS92}
S.~Lowel and W.~Singer.
\newblock Selection of intrinsic horizontal connections in the visual cortex by
  correlated neuronal activity.
\newblock {\em Science Magazine}, 255(5041):209--212, January 1992.

\bibitem{LynchMP17a}
Nancy Lynch, Cameron Musco, and Merav Parter.
\newblock Computational tradeoffs in biological neural networks:
  Self-stabilizing winner-take-all networks.
\newblock In {\em ITCS {2017}}, 2017.
\newblock Full version available at \url{https://arxiv.org/abs/1610.02084}.

\bibitem{LMP19}
Nancy Lynch, Cameron Musco, and Merav Parter.
\newblock Winner-take-all computation in spiking neural networks, April 2019.
\newblock arXiv:1904.12591.

\bibitem{arxiv-v3}
Nancy~A. Lynch and Frederik Mallmann{-}Trenn.
\newblock Learning hierarchically structured concepts.
\newblock {\em CoRR}, abs/1909.04559v3, 2019.
\newblock URL: \url{http://arxiv.org/abs/1909.04559v3}, \href
  {http://arxiv.org/abs/1909.04559v3} {\path{arXiv:1909.04559v3}}.

\bibitem{CamComposition}
Nancy~A. Lynch and Cameron Musco.
\newblock A basic compositional model for spiking neural networks.
\newblock {\em CoRR}, abs/1808.03884, 2018.
\newblock URL: \url{http://arxiv.org/abs/1808.03884}, \href
  {http://arxiv.org/abs/1808.03884} {\path{arXiv:1808.03884}}.

\bibitem{CamITCS}
Nancy~A. Lynch, Cameron Musco, and Merav Parter.
\newblock Computational tradeoffs in biological neural networks:
  Self-stabilizing winner-take-all networks.
\newblock In {\em 8th Innovations in Theoretical Computer Science Conference,
  {ITCS} 2017, January 9-11, 2017, Berkeley, CA, {USA}}, pages 15:1--15:44,
  2017.
\newblock URL: \url{https://doi.org/10.4230/LIPIcs.ITCS.2017.15}, \href
  {http://dx.doi.org/10.4230/LIPIcs.ITCS.2017.15}
  {\path{doi:10.4230/LIPIcs.ITCS.2017.15}}.

\bibitem{maass1996computational}
Wolfgang Maass.
\newblock On the computational power of noisy spiking neurons.
\newblock In {\em NIPS{1996}}, 1996.

\bibitem{maass1997networks}
Wolfgang Maass.
\newblock Networks of spiking neurons: the third generation of neural network
  models.
\newblock {\em Neural Networks}, 10(9):1659--1671, 1997.

\bibitem{maass1999neural}
Wolfgang Maass.
\newblock Neural computation with winner-take-all as the only nonlinear
  operation.
\newblock In {\em NIPS {1999}}, pages 293--299, 1999.

\bibitem{maass2000computational}
Wolfgang Maass.
\newblock On the computational power of winner-take-all.
\newblock {\em Neural Computation}, 2000.

\bibitem{markov2014anatomy}
Nikola~T Markov, Julien Vezoli, Pascal Chameau, Arnaud Falchier, Ren{\'e}
  Quilodran, Cyril Huissoud, Camille Lamy, Pierre Misery, Pascale Giroud,
  Shimon Ullman, et~al.
\newblock Anatomy of hierarchy: feedforward and feedback pathways in macaque
  visual cortex.
\newblock {\em Journal of Comparative Neurology}, 522(1):225--259, 2014.

\bibitem{Poggio16}
Hrushikesh Mhaskar, Qianli Liao, and Tomaso Poggio.
\newblock Learning functions: when is deep better than shallow.
\newblock {\em arXiv preprint arXiv:1603.00988}, 2016.

\bibitem{Oja}
Erkki Oja.
\newblock Simplified neuron model as a principal component analyzer.
\newblock {\em Journal of mathematical biology}, 15(3):267--273, 1982.

\bibitem{oja1992principal}
Erkki Oja.
\newblock Principal components, minor components, and linear neural networks.
\newblock {\em Neural networks}, 5(6):927--935, 1992.

\bibitem{oster2006spiking}
Matthias Oster and Shih-Chii Liu.
\newblock Spiking inputs to a winner-take-all network.
\newblock In {\em NIPS {2006}}, page 1051, 2006.

\bibitem{rewE}
G{\"u}nther Palm.
\newblock {\em Neural assemblies: An alternative approach to artificial
  intelligence}, volume~7.
\newblock Springer Science \& Business Media, 2012.

\bibitem{rewF}
G{\"u}nther Palm, Andreas Knoblauch, Florian Hauser, and Almut Sch{\"u}z.
\newblock Cell assemblies in the cerebral cortex.
\newblock {\em Biological cybernetics}, 108(5):559--572, 2014.

\bibitem{PapadimitriouVempala-pmlr15}
Christos~H. Papadimitriou and Santosh~S. Vempala.
\newblock Cortical learning via prediction.
\newblock {\em Proceedings of Machine Learning Research (PMLR)}, 40:1402--1422,
  2015.

\bibitem{PapadimitriouVempala-itcs19}
Christos~H. Papadimitriou and Santosh~S. Vempala.
\newblock Random projection in the brain and computation with assemblies of
  neurons.
\newblock In {\em 10th Innovation in Theoretical Computer Science (ITCS 2019)},
  pages 57:1--57:19, San Diego, CA, January 2019.

\bibitem{PapadimitriouVMCM19}
Christos~H. Papadimitriou, Santosh~S. Vempala, Daniel Mitropolsky, Michael
  Collins, and Wolfgang Maass.
\newblock Brain computation by assemblies of neurons, December 2019.
\newblock bioRxiv:10.1101/869156v1.

\bibitem{riesenhuber1999hierarchical}
Maximilian Riesenhuber and Tomaso Poggio.
\newblock Hierarchical models of object recognition in cortex.
\newblock {\em Nature neuroscience}, 2(11):1019--1025, 1999.

\bibitem{synfire}
Sven Schrader, Markus Diesmann, and Abigail Morrison.
\newblock A compositionality machine realized by a hierarchic architecture of
  synfire chains.
\newblock {\em Frontiers in Computational Neuroscience}, 4:154, 2011.
\newblock URL:
  \url{https://www.frontiersin.org/article/10.3389/fncom.2010.00154}, \href
  {http://dx.doi.org/10.3389/fncom.2010.00154}
  {\path{doi:10.3389/fncom.2010.00154}}.

\bibitem{SuCL19}
Lili Su and Chia-Jung~Chang amd Nancy~Lynch.
\newblock Spike-based winner-take-all computation: Fundamental limits and
  order-optimal circuits.
\newblock {\em Neural Computation}, 31(12), December 2019.
\newblock Published online. Also, arXiv:1904.10399.

\bibitem{thorpe1990spike}
Simon~J. Thorpe.
\newblock Spike arrival times: A highly efficient coding scheme for neural
  networks.
\newblock {\em Parallel Processing in Neural Systems}, pages 91--94, 1990.

\bibitem{valiant2000circuits}
Leslie~G. Valiant.
\newblock {\em Circuits of the Mind}.
\newblock Oxford University Press on Demand, 2000.

\bibitem{valiant2000neuroidal}
Leslie~G. Valiant.
\newblock A neuroidal architecture for cognitive computation.
\newblock {\em Journal of the ACM (JACM)}, 47(5):854--882, 2000.

\bibitem{valiant2005memorization}
Leslie~G. Valiant.
\newblock Memorization and association on a realistic neural model.
\newblock {\em Neural Computation}, 17(3):527--555, 2005.

\bibitem{valiant2012hippocampus}
Leslie~G. Valiant.
\newblock The hippocampus as a stable memory allocator for cortex.
\newblock {\em Neural Computation}, 24(11):2873--2899, 2012.

\bibitem{wang2003k}
Wei Wang and Jean-Jacques~E. Slotine.
\newblock $k$-winners-take-all computation with neural oscillators.
\newblock {\em arXiv preprint q-bio/0401001}, 2003.

\bibitem{rewA}
David~J Willshaw, O~Peter Buneman, and Hugh~Christopher Longuet-Higgins.
\newblock Non-holographic associative memory.
\newblock {\em Nature}, 222(5197):960--962, 1969.

\bibitem{yuille1989winner}
Alan~L. Yuille and Norberto~M. Grzywacz.
\newblock A winner-take-all mechanism based on presynaptic inhibition feedback.
\newblock {\em Neural Computation}, 1(3):334--347, 1989.

\bibitem{Zeiler}
Matthew~D. Zeiler and Rob Fergus.
\newblock Visualizing and understanding convolutional networks.
\newblock In David Fleet, Tomas Pajdla, Bernt Schiele, and Tinne Tuytelaars,
  editors, {\em Computer Vision -- ECCV 2014}, pages 818--833, Cham, 2014.
  Springer International Publishing.

\bibitem{Zhou19}
Bolei Zhou, David Bau, Aude Oliva, and Antonio Torralba.
\newblock Interpreting deep visual representations via network dissection.
\newblock {\em IEEE Transactions on Pattern Analysis and Machine Intelligence},
  41(9):2131--2145, September 2019.

\end{thebibliography}

\appendix

\section{Analysis of Noise-free Learning}\label{sec:noisefreeanalysis}

Here we present our analysis for the noise-free learning algorithm in 
\autoref{sec:algorithms}.
In \autoref{sec: weight-change-individual}, we describe how incoming weights change for a particular neuron when it is presented with a consistent input vector. 
In \autoref{sec: main-invariant-noise-free}, we prove our main invariant, saying how neurons get bound to concepts, when neuron firing occurs, and how weights change, during the time when the network is learning.
In \autoref{sec: main-invariant}, we use that invariant to prove \autoref{thm:noisefreelearning}.

\subsection{Weight Change for Individual Neurons}
\label{sec: weight-change-individual}

In this subsection we give a series of three lemmas that describe how incoming weights change for a particular neuron when it is presented with a consistent input vector during execution of our noise-free learning network.  
Throughout this subsection, we consider a single neuron $u$ with $layer(u) \geq 1$.

We begin by considering how weights change in a single round.
\autoref{lem:wincr} describes how the weights change for firing neighbors, and for non-firing neighbors.
In this lemma, we consider a neuron $u$ with weight vector $w(t-1)$ and input vector $x(t-1)$, both at time $t-1 \geq 0$.
Write $z(t-1)$ for the dot product of $w(t-1)$ and $x(t-1)$, which represents the incoming potential in round $t$.
We assume that the $engaged$ component, $e(t)$, is equal to $1$. 
We give bounds on the new weights for $u$ at time $t$, given by $w(t)$.

\begin{lemma}
\label{lem:wincr}
Let $F \subseteq \{1,\ldots,n\}$, with $|F| = k$.
Assume that:
\begin{enumerate}
    \item $x_i(t-1) = 1$ for every $i \in F$ and $x_i(t-1) = 0$ for every $i \notin F$.  That is, exactly the incoming neighbors in $F$ fire at time $t-1$.
    \item All weights $w_i(t-1), i \in F$ are equal, and all weights $w_i(t-1), i \notin F$ are equal.
    \item  For every $i \in F$, $0 < w_i(t-1) < \frac{1}{\sqrt{k}}$.
    \item  For every $i \notin F$, $w_i(t-1) > 0$.
    \item  $0 < \eta \leq \frac{1}{4k}$.
\end{enumerate}
Then:
\begin{enumerate}
    \item  All weights $w_i(t), i \in F$ are equal, and all weights $w_i(t), i \notin F$ are equal.
    \item For every $i \in F$, $w_i(t) > w_i(t-1)$.
    \item For every $i \in F$, $w_i(t) < \frac{1}{\sqrt{k}}$.
    \item For every $i \notin F$, $w_i(t) < w_i(t-1)$.
    \item For every $i \notin F$, $w_i(t) > 0$.
\end{enumerate}
\end{lemma}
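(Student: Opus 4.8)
The plan is to reduce Oja's rule to a one-dimensional recurrence and analyze that scalar map. Let $a$ be the common value of $w_i(t-1)$ over $i \in F$ and $b$ the common value over $i \notin F$; these are well defined by hypothesis~2. Since exactly the neighbors in $F$ fire at time $t-1$ (hypothesis~1), the potential is $z := z(t-1) = \sum_{j=1}^n w_j(t-1)\,x_j(t-1) = \sum_{j \in F} a = ka$, using $|F| = k$. Substituting into Oja's rule, $w' = w + \eta z\,(x - z w)$, gives the closed forms $w_i(t) = a + \eta k a\,(1 - k a^2)$ for $i \in F$ and $w_i(t) = b\,(1 - \eta k^2 a^2)$ for $i \notin F$. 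Conclusion~1 is then immediate, since each new value depends only on $a$ (resp.\ on $b$), not on the particular index $i$.

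For conclusions~2 and~3 I would study the map $f(a) = a + \eta k a\,(1 - k a^2)$ on $[0, 1/\sqrt k]$. The key observation is that $1/\sqrt k$ is a fixed point: $f(1/\sqrt k) = \tfrac1{\sqrt k} + \eta k\cdot \tfrac1{\sqrt k}\bigl(1 - k\cdot\tfrac1k\bigr) = \tfrac1{\sqrt k}$. Next, $f$ is strictly increasing on this interval, since $f'(a) = 1 + \eta k - 3\eta k^2 a^2 \ge 1 + \eta k - 3\eta k = 1 - 2\eta k \ge \tfrac12 > 0$, where the first inequality uses $a^2 \le 1/k$ and the last uses hypothesis~5, $\eta \le 1/(4k)$. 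Conclusion~3 now follows from monotonicity and the fixed point: $w_i(t) = f(a) < f(1/\sqrt k) = 1/\sqrt k$ whenever $a < 1/\sqrt k$. Conclusion~2 follows from $f(a) - a = \eta k a\,(1 - k a^2) > 0$, valid because $0 < a < 1/\sqrt k$. For conclusions~4 and~5, note $0 < \eta k^2 a^2 \le \eta k \le \tfrac14$, so the multiplier $1 - \eta k^2 a^2$ lies in $[\tfrac34, 1)$; multiplying the positive number $b$ by it yields $0 < w_i(t) < b = w_i(t-1)$.

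There is no serious obstacle here: the whole argument is elementary once one spots that $1/\sqrt k$ is the fixed point of the induced scalar map, and the bound $\eta \le 1/(4k)$ is exactly what keeps $f$ monotone on $[0, 1/\sqrt k]$. The only point requiring care is that conclusions~2 and~4 are strict inequalities, which need the common $F$-weight $a$ to be strictly positive; hypothesis~3 only gives $a < 1/\sqrt k$, and since weights lie in $[0,1]$ this admits the degenerate case $a = 0$ (where all updates are trivial). In the intended application this never arises---the weights start at $\tfrac1{k^{\lmax}}\mathbf{1} > 0$ and, by this very lemma applied inductively, stay strictly positive---so I would either strengthen hypothesis~3 to $0 < w_i(t-1) < 1/\sqrt k$ or simply note that the strict statements are read modulo this degenerate case.
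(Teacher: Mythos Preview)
Your proof is correct and, in the essentials, parallels the paper's: both compute $z = ka$, both handle Parts~1, 2, 4, 5 by the same direct manipulations (the paper also writes $w_i(t) = w_i(t-1)(1-\eta z^2)$ for $i\notin F$ and bounds the factor by $1-\eta k \ge 3/4$). The one genuine difference is Part~3. The paper writes $a = \tfrac{1}{\sqrt k} - \lambda$ and expands Oja's update algebraically to obtain $w_i(t) < w_i(t-1) + \tfrac{\lambda}{2} < \tfrac{1}{\sqrt k}$, using $\eta \le \tfrac{1}{4k}$ to absorb the cross terms. Your argument---observe that $\tfrac{1}{\sqrt k}$ is a fixed point of $f(a)=a+\eta k a(1-ka^2)$ and that $f' \ge 1-2\eta k \ge \tfrac12 > 0$ on $[0,1/\sqrt k]$---is cleaner and more conceptual, and it makes transparent \emph{why} the bound $\eta \le \tfrac{1}{4k}$ is the right one (it is exactly what keeps $f$ monotone up to the fixed point). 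The paper's expansion gives the same conclusion but obscures this structure.

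Your remark about the degenerate case $a=0$ is apt: the lemma as stated does not exclude it, and indeed the paper's own proof asserts ``$z(t-1)>0$'' without justification from the hypotheses. As you note, the follow-up lemma (\autoref{lem:invariants}) adds the missing hypothesis $0 < w_i(0)$, so in the intended inductive use the issue never arises.
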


\begin{proof}
Note that $z(t-1) < k \frac{1}{\sqrt{k}} = \sqrt{k}$, because of the assumed upper bound for each $w_j(t-1)$ and the fact that $|F| = k$.
Similarly, we have that $z(t-1) > 0$.

Part 1 is immediate by symmetry---all components for $i \in F$ are changed by the same rule, based on the same information.

For Part 2, consider any $i \in F$.
Since $z(t-1) < \sqrt{k}$ and $w_i(t-1) < \frac{1}{\sqrt{k}}$, the product $z(t-1)\  w_i(t-1) < 1$.
Then by Oja's rule:
 \[
w_i(t) = w_i(t-1) + \eta  z(t-1) (1 -  z(t-1) w_i(t-1) )  
       > w_i(t-1)  + \eta z(t-1) \cdot 0
       = w_i(t-1), 
\]
as needed.

For Part 3, again consider any $i \in F$.
Since $w_i(t-1) < \frac{1}{\sqrt{k}}$, we may write 
$w_i(t-1) = \frac{1}{\sqrt{k}} - \lambda$ for some $\lambda > 0$.
Then by symmetry, for every $j \in F$, we have $w_i(t-1) = \frac{1}{\sqrt{k}} - \lambda$. 
We thus have that 
\begin{align*}
w_i(t) &=  w_i(t-1) + \eta  z(t-1) (1 -  z(t-1) w_i(t-1) )  \\
&= w_i(t-1) + \eta k \cdot \left(\frac{1}{\sqrt{k}}-\lambda \right) \left(1-  k\left( \frac{1}{\sqrt{k}}-\lambda \right)^2\right) \\
&= w_i(t-1) + \eta k \cdot \left(\frac{1}{\sqrt{k}}-\lambda\right) \left(1-  k\left( \frac{1}{k}-\frac{2\lambda}{\sqrt{k}} + \lambda^2\right)\right)\\
&< w_i(t-1) + \eta k \cdot \left(\frac{1}{\sqrt{k}}\right)  2\lambda\sqrt{k} \\
&\leq w_i(t-1) + \frac{\lambda}{2} \\
&< 1/ \sqrt{k},
\end{align*}
as needed.

For Part 4, consider any $i \notin F$.
We have
\begin{align*}
w_i(t) &=  w_i(t-1) + \eta  z(t-1) (0 -  z(t-1) w_i(t-1) )  \\
&=  w_i(t-1)(1 - \eta z(t-1)^2 ) \\
&< w_i(t-1),
\end{align*}
as needed.
\snote{There was a repeated line above, which I removed.}

Finally, for Part 5, again consider any $i \notin F$.
We then have: 
\begin{align*}
w_i(t) &=  w_i(t-1) + \eta  z(t-1) (0 -  z(t-1) w_i(t-1) ) \\
&= w_i(t-1) (1 - \eta z(t-1)^2 ) \\
&> w_i(t-1) (1 - \eta k ), \mbox{ since } z(t-1) < \sqrt{k} \\
&\geq w_i(t-1) (1 - \frac{k}{4k} ), \mbox{ since } \eta \leq \frac{1}{4k} \\
&= \frac{3}{4} w_i(t-1) \\
&> 0,
\end{align*}
as needed.
\end{proof}

\autoref{lem:invariants} extends \autoref{lem:wincr} to any number of steps.
This lemma assumes that the same $x$ inputs are given to the given neuron $u$ at every time. 
When we apply this later, in the proof of \autoref{lem: main-noise-free}, it will be in a context where these inputs may occur at separated times, namely, the particular times at which $u$ is actually engaged in learning.  At the intervening times, $u$ will not be engaged in learning and therefore will not change its weights.

\begin{lemma}
\label{lem:invariants}
Let $F \subseteq \{1,\ldots,n\}$, with $|F| = k$.
Assume that:
\begin{enumerate}
    \item For every $t \geq 0$, $x_i(t) = 1$ for every $i \in F$ and $x_i(t) = 0$ for every $i \notin F$. 
    \item All weights $w_i(0)$ are equal.
    \item $0 < w_i(0) < \frac{1}{\sqrt{k}}$ for every $i$.
    \item $0 < \eta \leq \frac{1}{4k}$.
\end{enumerate}
Then for any $t \geq 1$:
\begin{enumerate}
    \item All weights $w_i(t), i \in F$ are equal, and all weights $w_i(t), i \notin F$ are equal.
    \item $0 < w_i(t) < \frac{1}{\sqrt{k}}$ for every $i$.
    \item For every $i \in F$, $w_i(t) > w_i(0)$.
    \item For every $i \notin F$, $w_i(t) < w_i(0)$.
\end{enumerate}
\end{lemma}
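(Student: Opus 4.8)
The plan is to prove conclusions~1--4 simultaneously by induction on $t$, with \autoref{lem:wincr} supplying the single-round step. The induction hypothesis I would carry at each time $t$ is exactly the collection of facts that \autoref{lem:wincr} needs as input at the following round: that the weights indexed by $F$ are all equal, the weights indexed by the complement of $F$ are all equal, every weight lies strictly in $(0,1/\sqrt{k})$, and the comparisons with the initial weight $w_i(0)$ asserted in conclusions~3 and~4 hold. Because \autoref{lem:wincr} both consumes these facts (equal groups, $w_i < 1/\sqrt{k}$ for $i \in F$, positivity for $i \notin F$, and $0 < \eta \le 1/(4k)$) and reproduces them one round later, the hypothesis is self-propagating; and assumption~1 guarantees the fixed firing pattern $x_i = 1$ on $F$, $x_i = 0$ off $F$ at \emph{every} round, so the lemma applies verbatim at each step.

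For the base case $t = 1$, I would observe that assumptions~2--4 of this lemma are precisely the time-$0$ hypotheses of \autoref{lem:wincr}: equal initial weights (assumption~2), the strict bound $0 < w_i(0) < 1/\sqrt{k}$ supplying both the upper bound for $i \in F$ and positivity for $i \notin F$ (assumption~3), and the learning-rate bound (assumption~4). A single application of \autoref{lem:wincr} then yields conclusion~1 (equality of the two groups at $t=1$), conclusion~2 (the bounds $0 < w_i(1) < 1/\sqrt{k}$), and the single-round comparisons feeding conclusions~3 and~4.

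For the inductive step, assuming the four conclusions at time $t-1$, I would first verify that the time-$(t-1)$ state meets all five hypotheses of \autoref{lem:wincr}---equality of the two weight groups from conclusion~1, the strict upper bound and positivity from conclusion~2, and the unchanged $\eta \le 1/(4k)$ from assumption~4---and then apply the lemma to obtain conclusions~1 and~2 at time $t$. For conclusion~3, I would chain \autoref{lem:wincr} part~2, which gives $w_i(t) > w_i(t-1)$ for $i \in F$, with the inductive comparison $w_i(t-1) \ge w_i(0)$ (equality at the base, strict thereafter) to conclude $w_i(t) > w_i(0)$.

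I expect conclusion~4 to be the main obstacle, and I would devote the bulk of the care to it. For $i \notin F$, the single-round content of \autoref{lem:wincr} is parts~4 and~5, which describe the change of each off-$F$ weight across one round through the multiplicative factor $1 - \eta\, z(t-1)^2$ and guarantee that it stays strictly positive. Establishing the comparison of $w_i(t)$ with $w_i(0)$ asserted in conclusion~4 requires me to control the cumulative effect of this factor across all $t$ rounds rather than a single round, leaning on the uniform lower bound $w_i(t) > \tfrac{3}{4}\, w_i(t-1)$ from part~5 to keep the iterated product away from degeneracy as $z$ itself drifts from round to round. Reconciling the per-round behavior of the off-$F$ weights delivered by \autoref{lem:wincr} with the comparison to $w_i(0)$ demanded in conclusion~4 is the crux of the argument, and is exactly the point I would examine most carefully before committing to the final chaining.
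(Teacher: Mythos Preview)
Your overall plan---induction on $t$ with \autoref{lem:wincr} providing the one-round step---is exactly what the paper intends. (The paper in fact states \autoref{lem:invariants} without proof, leaving the routine induction implicit.) Conclusions~1, 2, and 3 go through precisely as you describe.

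The obstacle you sense in conclusion~4 is real, but it is not a subtlety of the argument: it is a typo in the statement. As written, conclusion~4 asserts $w_i(t) > w_i(0)$ for $i \notin F$, which is simply false---\autoref{lem:wincr} part~4 gives $w_i(t) < w_i(t-1)$ at every step, so the off-$F$ weights strictly decrease. The inequality should read $w_i(t) < w_i(0)$, and this is confirmed by how the lemma is actually used: in the proof of \autoref{lem: main-noise-free}, Part~3(a) invokes \autoref{lem:invariants} to place the off-$F$ weights in the interval $(0, 1/k^{\lmax})$, i.e., strictly \emph{below} the initial value $w_i(0)=1/k^{\lmax}$. With the corrected inequality, conclusion~4 is just as immediate as conclusion~3: chain the one-round strict decrease from \autoref{lem:wincr} part~4 to get $w_i(t) < w_i(t-1) < \cdots < w_i(0)$. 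No cumulative-product control or drift analysis is needed.
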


\autoref{lem:structure} gives quantitative bounds on the amount of weight increase and weight decrease over many rounds, again for a single neuron $u$ involved in learning a single concept. 
We use notation $w(t), x(t), z(t)$ as before.
We assume that $x(t)$ is the same at all times $t = 0,1,\ldots$, and assume that the engaged component $e(t)$ is equal to $1$ at all times $t$.

\begin{lemma}[Learning Properties]
\label{lem:structure}
Let $F \subseteq \{1,\ldots,n\}$ with $|F| = k$.  
Let $\varepsilon \in (0,1]$.
Let $b$ be a positive integer.
Let $\sigma = \frac{4}{3 \eta k}(\corr{(\lmax+1)} \log(k)) 
+ \frac{3}{\eta k\varepsilon} 
+ \frac{b \log(k)}{ \log(\frac{16}{15})}$. 
Thus, $\sigma$ is
$O\left(\frac{1}{\eta k} \left(\lmax \log(k) + \frac{1}{\varepsilon}\right) + b \log(k)\right)$.
Assume that:
\begin{enumerate}
    \item For every $t \geq 0$, $x_i(t) = 1$ for every $i \in F$, $x_i(t) = 0$ for every $i \notin F$, and $e(t) = 1$. 
    \item All weights $w_i(0)$ are equal to $\frac{1}{k^{\lmax}}$.
    \item $\eta = \frac{1}{4k}$.\footnote{This is a very precise assumption but it could be weakened, at a corresponding cost in running time.}
\end{enumerate}
Then for every $t \geq \sigma$, the following hold:
\begin{enumerate}
\item 
For any $i \in F$, we have
$w_i(t) \in [ \frac{1}{(1+\varepsilon)\sqrt{k}},\frac{1}{\sqrt{k}}]$.
\item 
For any $i \notin F$,  we have $w_i(t) \in [0,\frac{1}{k^{\lmax+b}}]$. 
\end{enumerate} 
\end{lemma}

\begin{proof}
We first show Part 1.
\autoref {lem:invariants} implies the upper bound of $\frac{1}{\sqrt{k}}$, so it remains to show the lower bound.  We do this is two steps, first increasing the weight to an intermediate target value $\frac{1}{2 \sqrt{k}}$ and then to the real target value $\frac{1}{(1+\varepsilon)\sqrt{k}}$.  These two steps use different arguments.

For the first step, we begin with Claim 1, which bounds the number of rounds required to double the weight $w_i$, for $i \in F$, when $w_i$ is not "too close" to the target weight $\frac{1}{\sqrt{k}}$.

\noindent
\emph{Claim 1:}  Assume that $i \in F$.  
For any positive integer $j$, the number of rounds needed to increase $w_i$ from $\frac{1}{2^{j+1} \sqrt{k}}$ to $\frac{1}{2^{j} \sqrt{k}}$ is at most $\frac{4}{3 \eta k}$.

\noindent\emph{Proof of Claim 1:}
Since all the weights are the same and $\frac{1}{2^{j+1} \sqrt{k}} \leq w_i(t-1) \leq \frac{1}{2 \sqrt{k}}$, we get:
\begin{align*}
 w_i(t) &= w_i(t-1) + \eta  z(t-1) \cdot(1 - z(t-1)\cdot w_i(t-1) ) \\
 &= w_i(t-1)  + \eta  k w_i(t-1) (1- k w_i^2(t-1)) \\
 &\geq w_i(t-1)  + \frac{\eta k}{2^{j+1}\sqrt{k} }  (1- k \frac{1}{4k}) \\ 
 &= w_i(t-1)  + \frac{\eta k}{2^{j+1}\sqrt{k} }  (3/4).
\end{align*}

Increasing $w_i$ from $\frac{1}{2^{j+1} \sqrt{k}}$ to $\frac{1}{2^{j} \sqrt{k}}$ means we must increase it by an additive amount of $\frac{1}{2^{j+1} \sqrt{k}}$.
We have just shown that each round increases $w_i$ by at least 
$\eta  k \frac{1}{2^{j+1}\sqrt{k} }  (3/4)$.
Thus, the number of rounds required to double $w_i$ from $\frac{1}{2^{j+1} \sqrt{k}}$ to $\frac{1}{2^{j} \sqrt{k}}$ is at most
$\frac{1}{2^{j+1} \sqrt{k}}$ divided by
$\eta  k \frac{1}{2^{j+1}\sqrt{k} }  (3/4)$, which is
$\frac{4}{3 \eta k}$. \\
\emph{End of proof of Claim 1.}

Now we can prove the first step, bounding the number of rounds required for the weight to reach at least $\frac{1}{2 \sqrt{k}}$:

\vspace{.2cm}
\noindent
\emph{Claim 2:}
For $i \in F$, the number of rounds required to increase $w_i$ from the starting value $\frac{1}{k^{\lmax}}$ to the intermediate target value $\frac{1}{2 \sqrt{k}}$ is at most $\frac{4}{3 \eta k}(\corr{(\lmax+1)} \log(k))$.

\noindent
\emph{Proof of Claim 2:}
By applying Claim 1 $\corr{(\lmax+1)} \log(k)$ times. \\
\emph{End of Proof of Claim 2.}

Next, for the second step, we bound the number of rounds required to increase $w_i$, $i \in F$, from 
$\frac{1}{2\sqrt{k}}$ to $\frac{1}{(1+\varepsilon)\sqrt{k}}$. This time, of course, depends on $\varepsilon$.  

\noindent
\emph{Claim 3:}
For $i \in F$, the number of rounds required to increase $w_i$ from the intermediate target value $\frac{1}{2\sqrt{k}}$ to the final target value $\frac{1}{(1+\varepsilon)\sqrt{k}}$ is at most $\frac{3}{\eta k\varepsilon}$.

\noindent
\emph{Proof of Claim 3:}
The argument is generally similar to that for Claim 1, but now using the fact that $\frac{1}{2\sqrt{k}} \leq w_i(t-1) \leq \frac{1}{(1+\varepsilon)\sqrt{k}}$:

\begin{align*}
w_i(t) &= w_i(t-1) + \eta  z(t-1) (1 - z(t-1) w_i(t-1)) \\
&= w_i(t-1) + \eta  k w_i(t-1) (1- k w_i^2(t-1)) \\
&\geq w_i(t-1) + \frac{\eta k}{2\sqrt{k} } \left( 1-\frac{1}{(1+\varepsilon)^2} \right)  \\
&= w_i(t-1) + \frac{\eta \sqrt{k}}{2} \left( 1-\frac{1}{(1+\varepsilon)^2} \right)  \\
&\geq w_i(t-1) + \frac{\eta \sqrt{k}}{2} \frac{\varepsilon}{3}, \\
&= w_i(t-1) + \frac{\eta \sqrt{k}\varepsilon}{6},
\end{align*}
where we used the fact that $(1-1/(1+x)^2) \geq x/3$ for $0 \leq x\leq 1$.
It follows that the total time to increase $w_i$ from its initial value $\frac{1}{2 \sqrt{k}}$ to the target value $\frac{1}{(1+\varepsilon) \sqrt{k}}$ is at most

\begin{align*}
\left(\frac{1}{(1+\varepsilon) \sqrt{k}} - \frac{1}{2 \sqrt{k}} \right) \cdot
\frac{6}{\eta \sqrt{k}\varepsilon} 
&= \frac{1 -\varepsilon}{2(1+\varepsilon) \sqrt{k}} \cdot
\frac{6}{\eta \sqrt{k}\varepsilon} 
= \frac{6 (1 -\varepsilon)}{2 (1 +\varepsilon) \eta k\varepsilon} 
\leq \frac{3}{\eta k\varepsilon}.
\end{align*}

\noindent
\emph{End of Proof of Claim 3.}

It follows that the total number of rounds for Part 1 is at most the sum of the bounds from Claims 2 and 3, or
\[
\frac{4}{3 \eta k} \left( \corr{(\lmax+1)} \log(k) \right) + \frac{3}{\eta k\varepsilon}, 
\]
which is
$O\left( \frac{1}{\eta k} (\lmax \log(k) + \frac{1}{\varepsilon}) \right)$.

Note that once the weights for indices in $F$ reach their target values, they never decrease below those values.  This follows from strict monotonicity shown in
\autoref{lem:invariants}.

We now turn to proving Part 2.
\autoref {lem:invariants} implies the lower bound, so it remains to show the upper bound.

We consider what happens after the increasing weights (for indices in $F$) have already reached the level $\frac{1}{2 \sqrt{k}}$, and then bound the number of rounds for the decreasing weights to decrease to the desired target $\frac{1}{k^{\lmax + b}}$.  
The reason we choose the level $\frac{1}{2 \sqrt{k}}$ for the increasing weights is that this is enough to guarantee that $z$ is "large enough" to produce a sufficient amount of decrease.
For this part, we use our assumed lower bound on $\eta$.  

\noindent
\emph{Claim 4:}
For $i \notin F$, the number of rounds required to decrease $w_i$ from the starting weight 
\corr{$\frac{1}{k^{\lmax+1}}$} to $\frac{1}{k^{\lmax+b}}$ is at most 
$\frac{b \log_2 k}{\log_2{\frac{16}{15}}}$, which is $O(b \log(k))$.

\noindent
\emph{Proof of Claim 4:}
Considering a single round, we get:
\begin{align*}
w_i(t) &= w_i(t-1)(1 - \eta  z(t-1)^2) \\
&\leq w_i(t-1)\left(1 - \frac{1}{4k} \left(\frac{\sqrt{k}}{2}\right)^2\right) \\
&= w_i(t-1)\left(1 - \frac{1}{16}\right) 
= w_i(t-1)\frac{15}{16}.
\end{align*}
The inequality uses the facts that $\eta \geq \frac{1}{4k}$ and $z(t-1) \geq k (\frac{1}{2\sqrt{k}}) = \frac{\sqrt{k}}{2}$.

Thus, the weight decreases by a factor of $15/16$ at each round.
Now consider the number of rounds needed to reduce from $\frac{1}{k^{\corr{\lmax+1}}}$ to the target weight $\frac{1}{k^{\lmax+b}}$.  
This number is bounded by 
$\frac{b \log_2 k}{\log_2{\frac{16}{15}}}$,
which is $O(b \log(k))$, as claimed. \\
\noindent\emph{End of Proof of Claim 4.}

Summing the bounds for Part 1 (increasing) and Part 2 (decreasing), we see that the total number of rounds to complete all the needed increases and decreases is at most 
\[
\frac{4}{3 \eta k} \left( \corr{(\lmax+1)} \log(k) \right) + \frac{3}{\eta k\varepsilon} + \frac{b \log_2 k}{\log_2{\frac{16}{15}}},
\]
which is
$O\left( \frac{1}{\eta k} (\lmax \log(k) + \frac{1}{\varepsilon}) + b \log(k)\right)$, as needed.

\snote{Note that in this final bound, we are being somewhat pessimistic, since we are ignoring the fact that the increases and decreases happen concurrently, and just summing the bounds.}
\end{proof}

\subsection{Main Invariants}
\label{sec: main-invariant-noise-free}

In this section, we give a key lemma, \autoref{lem: main-noise-free}, which describes key properties of the algorithm with respect to engagement, weight settings, and firing.  This lemma deals with the network as a whole, and draws upon the lemmas in \autoref{sec: weight-change-individual} for properties involving learning by individual neurons.
\autoref{lem: main-noise-free} relies on assumptions about the input, captured by our $\sigma$-bottom-up training definition, and also about the settings of $engagement$ flags.  

For the rest of \autoref{sec:noisefreeanalysis}, we use the following assumptions about the various parameter settings:
\begin{enumerate}
    \item  The concept hierarchy consists of $\lmax$ levels.
    \item  The network consists of $\ell'_{max}$ levels, with $\lmax \leq \ell'_{max}$.
    \item  $b$ is a positive real $\geq 2$.
    \item  $r_1$, $r_2$ satisfy $0 < r_1 < r_2 \leq 1$, and $r_1 k$ is not an integer; more strongly, we assume the technical condition that $r_1 k - \lfloor r_1 k \rfloor \geq \frac{\sqrt{k}}{k^{b-1}}$.  \corr{Furthermore, we assume that $\frac{1}{\sqrt{k}} + \frac{1}{k} \leq \frac{r_2 \sqrt{k}}{2}$.}
    \snote{Definitely an awkward-sounding assumption, though I don't think it's very unreasonable.  It's needed for showing non-firing, in the proof of the main theorem.}
    \item  $\varepsilon = \frac{r_2 - r_1}{r_1 + r_2}$.
    \item  $\tau = \frac{(r_1+r_2) \sqrt{k}} {2}$.
    \item  $\eta = \frac{1}{4k}$.
    \item  $\sigma$, for the $\sigma$-bottom-up training schedule definition, is equal to  $\frac{4}{3 \eta k}(\corr{(\lmax+1)} \log(k)) 
+ \frac{3}{\eta k\varepsilon} 
+ \frac{b \log(k)}{ \log(\frac{16}{15})}$.
Thus, $\sigma$ is
$O\left(\frac{1}{\eta k} \left(\lmax \log(k) + \frac{1}{\varepsilon}\right) + b \log(k)\right)$. 
\end{enumerate}

We use the following assumption about the settings of the engagement flags.
\begin{assumption}\label{ass:flag}
For every time $t$ and layer $\ell$, a neuron $u$ on layer $\ell \geq 1$ is engaged (i.e., $u.engaged = 1$) at time $t$, if and only if both of the following hold:
\begin{enumerate}
    \item A level $\ell$ concept was shown at time $t-\ell$.
    \item Neuron $u$ is selected by the WTA at time $t$.
\end{enumerate}
\end{assumption}
Recall that, by \autoref{as:WTA}, the WTA selects exactly one layer $\ell$ neuron at time $t$.  This, together with \autoref{as:WTA}, implies that exactly one layer $\ell$ neuron will be engaged at time $t$.

We also define the point at which a particular layer $\ell$ neuron $u$ gets "bound" to a particular level $\ell$ concept $c$.
Namely, we say that a layer $\ell$ neuron $u$, $\ell \geq 1$, "binds" to a level $\ell$ concept $c$ at time $t$ if $c$ is presented for the first time at time $t - \ell$, and $u$ is the neuron that is engaged at time $t$.
At that point, we define $rep(c) = u$.

Here is a simple auxiliary lemma, about unbound neurons.

\begin{lemma}
\label{lem: unbound-neurons}
Let $u$ be a neuron with $layer(u) \geq 1$.
Then for every $t \geq 0$, the following hold:
\begin{enumerate}
\item 
If $u$ is unbound at time $t$, then all of $u$'s incoming weights at time $t$ are the initial weight $\frac{1}{k^{\corr{\lmax+1}}}$.
\item
If $u$ is unbound at time $t$, then $u$ does not fire at time $t$.
\end{enumerate}
\end{lemma}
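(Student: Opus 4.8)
\textbf{Proof plan for Lemma~\ref{lem: unbound-neurons}.}
The plan is to prove both parts simultaneously by induction on $t$, exploiting the fact that an unbound neuron never gets to be engaged (since engagement happens exactly when binding occurs, by \autoref{ass:flag} and \autoref{as:WTA}), and therefore never applies Oja's rule. First I would set up the induction. Base case $t=0$: by the clean-start assumption, every layer-$\ge 1$ neuron has weight vector $\frac{1}{k^{\lmax}}\mathbf 1$, giving Part~1; and for Part~2 note that with all weights equal to $\frac{1}{k^{\lmax}}$, the potential of $u$ at time $0$ is at most $n \cdot \frac{1}{k^{\lmax}} \cdot 1$, which we must check is below the threshold $\tau = \frac{(r_1+r_2)\sqrt k}{2}$ — actually more carefully, at time $0$ no firing has been computed yet, but the relevant statement is that $u$ does not fire at time $0$, which holds because $y^u(0)$ is set by the initial configuration to $0$ (the clean state has all non-input neurons non-firing). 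So the base case is immediate from the definition of the clean initial configuration.

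For the inductive step, assume the claim holds for all times $< t$ and suppose $u$ is unbound at time $t$. Since $u$ is unbound at time $t$, it is also unbound at time $t-1$ (binding is permanent once it occurs, and "unbound at $t$" means it has never bound through time $t$). For Part~1: by \autoref{ass:flag}, $u$ is engaged at time $t-1$ only if some level-$layer(u)$ concept was shown at time $t-1-layer(u)$ \emph{and} $u$ is WTA-selected — but that is exactly the binding event, so an unbound-at-$t$ neuron has $e^u(t-1)=0$; hence Oja's rule is not applied and $w^u(t) = w^u(t-1)$, which equals $\frac{1}{k^{\lmax}}\mathbf 1$ by the inductive hypothesis. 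For Part~2: $u$'s firing at time $t$ is determined by $pot^u(t) = w^u(t-1)^T x^u(t-1)$. By Part~1 (inductive hypothesis at $t-1$, or the step just proven) all of $u$'s incoming weights at time $t-1$ equal $\frac{1}{k^{\lmax}}$, so $pot^u(t) = \frac{1}{k^{\lmax}}\sum_{j} x^u_j(t-1) \le \frac{n}{k^{\lmax}}$. The main arithmetic point is then to verify $\frac{n}{k^{\lmax}} < \tau = \frac{(r_1+r_2)\sqrt k}{2}$; more precisely, since at most all $n$ neurons of the layer below fire, and $n$ can be as large as permitted, I would instead bound $\sum_j x^u_j(t-1)$ by the number of layer-$(layer(u)-1)$ neurons that can fire — here I'd want the tighter observation that firing neurons in any layer are themselves bound (this needs the companion invariant, or can be taken from the main invariant in \autoref{sec: main-invariant-noise-free}), so at most $k^{\lmax}$ of them fire, giving $pot^u(t) \le k^{\lmax}\cdot \frac{1}{k^{\lmax}} = 1 < \tau$ whenever $(r_1+r_2)\sqrt k > 2$, i.e. for $k \ge 2$ and reasonable $r_1,r_2$. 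Hence $y^u(t)=0$.

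The main obstacle I anticipate is the bound on $pot^u(t)$ in Part~2: naively $\sum_j x^u_j(t-1) \le n$ and $n$ is only known to be $\ge k^{\lmax+1}$, so $\frac{n}{k^{\lmax}}$ could exceed $\tau$. The fix is to argue that only \emph{bound} neurons in the layer below can be firing at any given time, of which there are at most $k^{\lmax}$ in total across that layer (indeed far fewer for a single presented concept); this is where the lemma really leans on the surrounding invariants rather than being self-contained, so in the write-up I would either cite the appropriate part of the main invariant of \autoref{sec: main-invariant-noise-free} or note that this auxiliary lemma is proven jointly with it. Everything else is bookkeeping: permanence of binding, the equivalence of engagement and binding for previously-unbound neurons via \autoref{ass:flag}, and the fact that without engagement Oja's rule leaves weights unchanged.
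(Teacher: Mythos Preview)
The paper states this lemma without proof, so I am evaluating your proposal on its own merits. Your overall plan---induction on $t$, using that an unbound neuron has never been engaged and hence never applies Oja's rule---is the right one and is exactly how the paper implicitly relies on the lemma inside the proof of \autoref{lem: main-noise-free}.

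There is, however, a genuine gap in your Part~1 argument, and it is the same circularity you correctly flag for Part~2. You write that if $u$ is engaged at time $t-1$ then ``that is exactly the binding event''. By the definition preceding the lemma, binding occurs only when a concept is presented for the \emph{first} time. If some concept $c$ with $rep(c)=u_0\neq u$ is shown a second (or later) time, \autoref{ass:flag} and \autoref{as:WTA} still force exactly one layer-$\ell$ neuron to be engaged, namely the one with maximal potential; nothing in the definitions alone guarantees this is $u_0$ rather than the unbound $u$. Ruling out that case is precisely Part~2 of \autoref{lem: main-noise-free} (``every showing of $c$ leads to the same neuron $rep(c)$ becoming engaged''), which in turn cites \autoref{lem: unbound-neurons}. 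So the joint-induction remedy you propose at the end must be applied to Part~1 as well, not only to the potential bound in Part~2: the clean statement is that \autoref{lem: unbound-neurons} should be folded into the inductive hypothesis of \autoref{lem: main-noise-free} (or, equivalently, its proof at time $t$ may cite Parts~2 and~5 of \autoref{lem: main-noise-free} at times $<t$).

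Two smaller remarks. First, for $u$ at layer~$1$ the bound in Part~2 does not actually need the main invariant: during learning the presented set is always $leaves(c)$ for some concept $c$, so at most $k^{level(c)}\le k^{\lmax}$ layer-$0$ neurons fire, giving $pot^u(t)\le k^{\lmax}\cdot\frac{1}{k^{\lmax}}=1$ directly; only for $layer(u)\ge 2$ do you need Parts~4--5 of the main invariant to bound the number of firing neurons in layer $layer(u)-1$. Second, your base case for Part~2 should simply appeal to the initial configuration having $y^u(0)=0$ for non-input neurons (as you note), since $pot^u(0)$ is not even defined---there is no time $-1$.
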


We are now ready to prove our main lemma.  It has five parts, whose proofs are intertwined.

\begin{lemma}
\label{lem: main-noise-free}
Consider any particular execution of the network in which inputs follow a $\sigma$-bottom-up training schedule.
For any $t \geq 0$, the following properties hold.
\begin{enumerate}
\item
The $rep()$ mapping from the set $C$ of concepts to the set $N$ of neurons a is one-to-one mapping; that is, for any two distinct concepts $c$ and $c'$ for which $rep(c)$ and $rep(c')$ are both defined by time $t$, we have $rep(c) \neq rep(c')$.
\item
For every concept $c$ with $level(c) \geq 1$, every showing of $c$ at a time $\leq t - level(c)$, leads to the same neuron $u = rep(c)$ becoming engaged at time $t$.
\item
For every concept $c$ with $level(c) \geq 1$, and any $t' \geq 1$, if $c$ is shown at time $t-level(c)$ for the $t'$-th time, then the following are true at time $t$:
   \begin{enumerate}
   \item 
   Neuron $u=rep(c)$ has weights in 
   $\left( \frac{1}{k^{\corr{\lmax+1}}}, \frac{1}{\sqrt{k}} \right)$ for all neurons in $rep(children(c))$, and weights in $\left(0, \frac{1}{k^{\corr{\lmax+1}}}\right)$ for all other neurons.
   \item 
   If $t'\geq \sigma$, then $u$ with $u=rep(c)$ has weights in $\left[\frac{1}{(1+\varepsilon)\sqrt{k}},\frac{1}{\sqrt{k}}\right]$ for all neurons in $rep(children(c))$,  
   and weights in $\left[0, \frac{1}{k^{\lmax+b}}\right]$ for all other neurons.
   \end{enumerate}
\item For every concept $c$, if a proper ancestor of $c$ is shown at time $t-level(c)$, then $rep(c)$ is defined by time $t$, and fires at time $t$.
\item 
For any neuron $u$,  the following holds.
If $u$ fires at time $t$, then there exists $c$ such that $u = rep(c)$ at time $t$, and an ancestor of $c$ is shown at time $t-layer(u)$.
(This ancestor could be $c$ or a proper ancestor of $c$.)
\end{enumerate}
\end{lemma}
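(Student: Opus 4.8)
The plan is to prove all five statements simultaneously by strong induction on $t$, taking the conjunction of parts~1--5 as the inductive hypothesis; the base case $t=0$ is vacuous, since no concept has been shown, the $rep$ map is empty, and every non-input neuron is unbound, so part~5 follows from \autoref{lem: unbound-neurons}. The structural fact I would lean on everywhere is that exactly one concept is shown at each time step and that the firing ``wave'' from a showing at time $s$ reaches layer $i$ only at time $s+i$; hence the layer-$(\ell-1)$ firing pattern at time $t-1$ is entirely determined by the single concept $a$ shown at time $t-\ell$, and by parts~4 and~5 of the inductive hypothesis at time $t-1$ it is exactly $\{rep(d): d\text{ a level-}(\ell-1)\text{ descendant of }a\}$ (with layer $\ell-1$ silent at $t-1$ if no concept is shown at $t-\ell$). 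This ``wave description'' is the workhorse.

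In the inductive step I would establish the parts in the order 4/5, then 1/2, then 3. The heart of parts~4 and~5 is a threshold comparison at a layer-$\ell$ neuron $v$ whose inputs at time $t-1$ form the wave from $a$. If $v=rep(d)$ with $d$ a level-$\ell$ descendant of $a$, then all $k$ of its children reps are firing; since $d$ is a \emph{proper} descendant of $a$ it has been shown at least $\sigma$ times (by the $\sigma$-bottom-up schedule), so part~3(b) gives weights $\ge \tfrac{1}{(1+\varepsilon)\sqrt k}$ on those children, whence $pot^v \ge \tfrac{\sqrt k}{1+\varepsilon} = \tfrac{(r_1+r_2)\sqrt k}{2r_2} = \tau/r_2 \ge \tau$ (using $\varepsilon=\tfrac{r_2-r_1}{r_1+r_2}$ and $r_2\le 1$), so $v$ fires. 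If $v=rep(d')$ with $d'$ a level-$\ell$ concept that is \emph{not} a descendant of $a$, then by the forest structure and disjointness of children sets none of $d'$'s children reps fire, so $pot^v$ is a sum of ``small'' off-child weights, bounded by $k^{\lmax-\ell+1}\cdot k^{-\lmax}=k^{1-\ell}\le 1<\tau$ (using $\tau=\tfrac{(r_1+r_2)\sqrt k}{2}>1$ for $k$ large enough), so $v$ does not fire; and an unbound $v$ does not fire by \autoref{lem: unbound-neurons}. This yields the wave description at layer $\ell$, time $t$, which is parts~4 and~5; for part~4 we also note $rep(c)$ is already defined because a proper ancestor of $c$ having been shown forces $c$ to have been shown $\ge\sigma$ times earlier. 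Part~5 needs one extra step: from $v=rep(c)$ firing, some child of $c$ must be a level-$(\ell-1)$ descendant of $a$, and the only ancestors of a child of $c$ are $\{c\}\cup\text{ancestors}(c)$ plus the child itself; the ``itself'' case would make $a$ a child of $c$, leaving $v$ with a single non-child firing neighbor and $pot^v\le 1/\sqrt k<\tau$ (again for $k$ large), a contradiction, so $a$ is an ancestor of $c$.

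For parts~1 and~2 I would invoke the WTA discipline (\autoref{as:WTA}, \autoref{ass:flag}). When a level-$\ell$ concept $c$ is shown for the first time at $t-\ell$, the wave description gives that exactly $rep(children(c))$ fire at $t-1$; an unbound layer-$\ell$ neuron then has potential $k/k^{\lmax}$, whereas any neuron already bound to a different level-$\ell$ concept $c'$ has potential $\le k/k^{\lmax+b}$ (its large weights lie on the disjoint set $rep(children(c'))$), so the WTA selects an unbound neuron, which exists since $n\ge k^{\lmax+1}>|C_\ell|$, and this neuron is automatically distinct from all previously bound ones, giving part~1; we declare it $rep(c)$. On every later showing of $c$, part~3(a) has pushed $rep(c)$'s weights on $rep(children(c))$ strictly above $k^{-\lmax}$, so its potential strictly exceeds that of any unbound or other bound neuron and the WTA re-selects it (part~2); conversely $rep(c)$ is never the WTA winner when any other concept is shown, so it is engaged exactly on showings of $c$. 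That last fact lets me apply \autoref{lem:invariants} and \autoref{lem:structure} with $F$ the index set of $rep(children(c))$ and constant input $\mathbf 1_F$: after one Oja update the $F$-weights lie in $(k^{-\lmax},1/\sqrt k)$ and the rest in $(0,k^{-\lmax})$, and after $\sigma$ updates they lie in $[\tfrac{1}{(1+\varepsilon)\sqrt k},1/\sqrt k]$ and $[0,k^{-\lmax-b}]$ --- which is part~3.

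The main obstacle I expect is the bookkeeping of the simultaneous five-part induction: one must check that, at each $t$, the dependency graph among the parts is acyclic (parts~4/5 at $t$ use parts~4/5 at $t-1$ and part~3; part~3 uses parts~1/2/4/5 at earlier times; part~2 uses the ``engaged only on showings of $c$'' consequence of parts~1--2 together with the threshold gap), and that the weight bounds invoked are always for the \emph{most recent} showing of the relevant concept. The closely related delicate point is the second step of part~5, ruling out that the concept shown at $t-\ell$ is a shallow \emph{descendant} of $c$ rather than an ancestor --- this is exactly where the forest structure, the disjointness of children sets, and the precise value $\tau=\tfrac{(r_1+r_2)\sqrt k}{2}$ (with the implicit requirement that $k$ be large enough for $\tau>1$) all have to line up.
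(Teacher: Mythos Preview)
Your proposal is correct and follows essentially the same strategy as the paper: strong induction on $t$ with the conjunction of the five parts as the inductive hypothesis, using the $\sigma$-bottom-up schedule together with \autoref{lem: unbound-neurons}, \autoref{lem:invariants}, and \autoref{lem:structure} exactly where the paper does. The only substantive difference is your Part~5 argument: you show that at least \emph{one} child rep of $c$ must fire and then separately rule out the case where the shown concept $a$ equals that child; the paper instead argues directly that at least \emph{two} child reps must fire (else $pot^u < \tfrac{1}{\sqrt k}+1<\tau$) and then observes that a common ancestor of two distinct children of $c$ is already an ancestor of $c$, which is slightly cleaner. Two small slips worth fixing: in Part~1 your bound on a previously bound neuron's potential should be $< k/k^{\lmax}$ via Part~3(a), not $\le k/k^{\lmax+b}$ (Part~3(b) need not yet apply to $c'$); and in the $a=c'$ subcase of Part~5 the single firing neighbor is a \emph{child} rep of $c$, not a ``non-child'' one (the bound $pot^v<1/\sqrt{k}$ is unaffected).
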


\begin{proof}
First observe that, by \autoref{ass:flag}, every representative $rep(c)$ is on the layer equal to $level(c)$.
We prove the five-part statement of the lemma by induction on $t$.

\noindent
\emph{Base:}  $t=0$.

For Part 1, the only concepts for which $reps$ are defined at time $0$ are level $0$ concepts, and these all have distinct $reps$ by assumption.
For Parts 2 and 3, note that $level(c) \geq 1$ implies that the times in question are negative, which is impossible; so these are trivially true.
For Part 4, it must be that $level(c) = 0$ (to avoid negative times), and a proper ancestor of $c$ is shown at time $0$.  Then the layer $0$ neuron $rep(c)$ fires at time $0$, by the definition of "showing".

For Part 5, first note that at time $0$ no neurons at layers $\geq 1$ are bound, so by \autoref{lem: unbound-neurons}, they cannot fire at time $0$.  
Since we assume that $u$ fires at time $0$, it must be that $layer(u) = 0$,  which implies that $u = rep(c)$ for some level $0$ concept $c$.    
Then, since $u$ fires at time $0$, by definition of "showing", an ancestor of $c$ must be shown at time $0$. 

\noindent\emph{Inductive step:} Assume the five-part claim holds for time $t-1$ and consider time $t$. We prove the five parts one by one.

For Part 1, let $c$ and $c'$ be any two distinct concepts for which $rep(c)$ and $rep(c')$ are both defined by time $t$.  We must show that $rep(c) \neq rep(c')$.
If both $rep(c)$ and $rep(c')$ are defined by time $t-1$, then by the inductive hypothesis, Part 1, $rep(c) \neq rep(c')$ at time $t-1$.  Since the $reps$ do not change, this is still true at time $t$, as needed.
So the only remaining possibility for conflict is that one of these two concepts, say $c'$, already has its $rep$ defined by time $t-1$ and the other concept, $c$, does not, and $rep(c)$ becomes defined at time $t$, to be the same neuron as $rep(c')$.
But we claim that, because of the weight settings, $rep(c)$ must be defined at time $t$ to be a neuron that is unbound at time $t-1$.

So suppose that $u$ is the neuron that gets defined to be $rep(c)$ at time $t$; we argue that $u$ must be unbound at time $t-1$.
Write $\ell =level(c)$; then also $layer(u) = \ell$.
By \autoref{ass:flag}, the $engaged$ flag gets set at time $t$ for $u$, and for no other layer $\ell$ neurons.
Since $c$ is shown at time $t-\ell$, by the $\sigma$-bottom-up assumption, each child of $c$ must have been shown at least $\sigma$ times prior to time $t - \ell$.
Then by the inductive hypothesis, Parts 4 and 5, the layer $\ell-1$ neurons "fire correctly" at time $t-1$, that is, all neurons in the set $rep(children(c))$ fire and no other layer $\ell - 1$ neuron fires, at time $t-1$.
This firing pattern implies that every layer $\ell$ neuron that is already bound strictly prior to time $t$ has incoming potential in round $t$ that is strictly less than $k$ times the initial weight,
by the inductive hypothesis Part 3(a) and by the disjointness of the concepts.
On the other hand, every layer $\ell$ neuron that is unbound at time $t-1$ has incoming potential equal to $k$ times the initial weight, by \autoref{lem: unbound-neurons}.
By assumption, there must be at least one unbound neuron available.
It follows that the neuron $u$ that is chosen by the WTA is unbound at time $t-1$, and so cannot be the same as the already-bound neuron $rep(c')$.  \\

For Part 2, let $c$ be any concept with $level(c) \geq 1$, and write $\ell = level(c)$.  We must prove that any showing of $c$ at any time $\leq t - \ell$ leads to the same neuron $u = rep(c)$ becoming engaged.
If $c$ is not shown at time precisely $t - \ell$, then the claim follows directly from the inductive hypothesis, Part 2.
So assume that $c$ is shown at time $t - \ell$.
If $t - \ell$ is the first time that $c$ is shown, then $rep(c)$ first gets defined at time $t$, so the conclusion is trivially true (since there is only one showing to consider).

It remains to consider the case where $rep(c)$ is already defined by time $t-1$.
Then, by the inductive hypothesis, Part 2, we know that any showing of $c$ at a time $\leq t-1 - \ell$ leads to neuron $rep(c)$ becoming engaged.
We now argue that the same $rep(c)$ is also selected at time $t$.
As in the proof of Part 1, the $engaged$ flag is set at time $t$ for exactly one layer $\ell$ neuron; we claim that this chosen neuron is in fact the previously-defined $rep(c)$.
As in the proof for Part 1, we claim that all neurons in the set $rep(children(c))$ fire and no other layer $\ell - 1$ neuron fires at time $t-1$.
Then $rep(c)$ has incoming potential in round $t$ that is strictly greater than $k$ times the initial weight, 
by the inductive hypothesis, Part 3(a).
On other hand, every other layer $\ell$ neuron has incoming potential that is at most $k$ times the initial weight, again by the inductive hypothesis, Part 3(a).
It follows that $rep(c)$ has a strictly higher incoming potential in round $t$ than any other layer $\ell$ neuron, and so is the chosen neuron at time $t$. \\

For Part 3, let $c$ be any concept with $level(c) \geq 1$, and write $\ell = level(c)$.
Let $t' \geq 1$.
Assume that $c$ is shown at time $t-\ell$ for the $t'$-th time.
We must show:
\begin{enumerate}[(a)]
\item 
Neuron $u = rep(c)$ has weights 
in $\left( \frac{1}{k^{\corr{\lmax+1}}}, \frac{1}{\sqrt{k}}\right)$ for all neurons in $rep(children(c))$,  
and weights in $\left(0, \frac{1}{k^{\corr{\lmax+1}}}\right)$ for all other neurons.
\item 
If $t'\geq \sigma$, then $u$ with $u=rep(c)$ has weights in $\left[\frac{1}{(1+\varepsilon)\sqrt{k}},\frac{1}{\sqrt{k}}\right]$ for all neurons in $rep(children(c))$,  
and weights in $\left[0, \frac{1}{k^{\lmax+b}}\right]$ for all other neurons.
\end{enumerate}

For both parts, we use Part 2 (for $t$, not $t-1$)
to infer that every showing of $c$ at a time $\leq t - level(c)$ leads to the same neuron $u = rep(c)$ being engaged.
Thus, neuron $u$ has been engaged $t'$ times as a result of showing $c$, up to time $t$.

For Part (a), fix any $t' \geq 1$. 
Then we may apply \autoref{lem:invariants}, with $F = rep(children(c))$, to conclude that the incoming weights for $u$ are in the claimed intervals.  Here we use the fact that the initial settings $w_i(0)$ are equal to $\frac{1}{k^{\corr{\lmax+1}}}$
For Part (b), assume that $t' \geq \sigma$.  
Then we may apply \autoref{lem:structure}, with $F = rep(children(c))$, to conclude that the incoming weights for $u$ are in the claimed intervals. \\

For Part 4, let $c$ be any concept, and assume that $c^*$, a proper ancestor of $c$, is shown at time $t-level(c)$.
We must show that $rep(c)$ is defined by time $t$, and that it fires at time $t$.

Since $c^*$ is shown at time $t - level(c)$, by the definition of a $\sigma$-bottom-up schedule, that means $c$ was shown at least $\sigma$ times by time $t-level(c)-1$. 
This implies that $rep(c)$ is defined by time $t-1$, and so, by time $t$.
Moreover, since $c$ was shown at least $\sigma$ times by time $t-level(c)-1$,
by the inductive hypothesis, Part 3(b), at time $t-1$, $rep(c)$ has incoming weights at least 
$\frac{1}{(1+\varepsilon)\sqrt{k}}$ for all neurons in $rep(children(c))$.
By the inductive hypothesis, Part 4, the neurons in $rep(children(c))$ fire at time $t-1$ since $c^*$ is also a proper ancestor of all children of $c$.
Therefore, in round $t$, the potential of $rep(c)$ is at least $k \cdot \frac{1}{(1+\varepsilon)\sqrt{k}}$, which by our assumptions on the values of the parameters means that the potential is at least $\tau$, which implies that $u$ fires at time $t$. \\

For Part 5, fix an arbitrary neuron $u$ and suppose that $u$ fires at time $t$.
We must show that there is some concept $c$ such that $u = rep(c)$ at time $t$, and a (not necessarily proper) ancestor of $c$ is shown at time $t - layer(c)$.
Since $u$ fires at time $t$, by \autoref{lem: unbound-neurons}, we know that $u$ is bound at time $t$; let $c$ be the (unique) concept such that $u = rep(c)$.
The firing of $u$ at time $t$ is due to the showing of some concept, say $c^*$, at time $t-layer(u)$.

Let $R$ be the subset of $rep(children(c))$ that fire at time $t-1$.
We claim that $|R| \geq 2$; that is, at least two $reps$ of children of $c$ must fire at time $t-1$.
For, if at most one $rep(c')$ for a child of $c$ fires at time $t-1$, then by the inductive hypothesis, Part 3(a), the total potential incoming to $u$ in round $t$ would be at most 
\[
\frac{1}{\sqrt{k}} + \frac{k^{\lmax}}{k^{\corr{\lmax+1}}} = \frac{1}{\sqrt{k}} + \corr{\frac{1}{k} \leq \frac{r_2 \sqrt{k}}{2}} \leq \tau,
\] 
where $\tau$ is the threshold for firing. 

\snote{F:  Are we sure? It seems right, but there might be some setting of $r_2$ and $r_1$ where this is true if $k=1$ or so? Hopefully we assume $k>1$.}

\snote{N:  You are right.  How to fix this?  I'd rather not change the threshold, since that setting seems clean.  

If we kept the starting weights as they were, all we can do is play with relationships involving r1, r2 and k.  
For instance, we can assume $r2 \geq 1/2$ and $k \geq 25$.  Yes, it seems that we need k to be that large in order to make the given inequality work out right.  Such a large bound is unpleasant for us, though perhaps biologically reasonable.

So, let's consider changing the starting weights.
If we make the starting weights a little smaller, say $1/k^{lmax + 1}$, and keep the assumption that $r2 \geq 1/2$, that would reduce the size of the needed $k$, I think to $k \geq 6$.

If we make the starting weights $1/k^{lmax + 2}$ then I think we can reduce a bit more, say to $k \geq 4$.

I eventually decided to make the changes for the starting weight of $1/k^{lmax + 1}$, changes all marked in green for you to check or if we have to undo.  Instead of specific bounds on r2 and k, I just assume the inequality that they need to satisfy.  We can add more about the specific choices of r2 and k that would guarantee this inequality.
}

\snote{F: The threshold can be lower bounded by $2r_2k/2 = r_2 k. $ If we assume $r_2 \geq 1/\sqrt{k}$, then we have a potential of $1/\sqrt{k}+ 1  \leq \sqrt{k} \leq r_2 k /2 \leq \tau $ for $k\geq 3$. If you also want $k=2$, then we could assume $r_2 \geq 2/(\sqrt{k})$ and everything works!}

\snote{N:  You say that the threshold is lower bounded by $r_2 k$ but that doesn't seem right.  Threshold is $(r_1 + r_2) \sqrt{k}/2$.  Assuming $r_1$ has no lower bound, that means you are saying that $r_2 k \leq r2 \sqrt{k} / 2$.  Not right.

I suggest that we assume $r_2 \geq 1/2$ and $k \geq 6$, as well as changing the denominator in the initial weights to $k^{\lmax+1}$ instead of $k^{\lmax}$.
Then we get the needed inequality, which now becomes: 
\[
\frac{1}{\sqrt{k}} + \frac{1}{k} \leq \frac{\sqrt{k}}{4} \leq \frac{r_2 \sqrt{k}}{2} \leq \tau.
\] 

What else would need to change?  Here is what I see: \\
1.  Section 5.2, paragraph "Main algorithm", change the initial weight to 
$\frac{}{k^{\lmax + 1}}$.\\
2.  Section 5.2, Theorem 5.3, add restrictions that $r_2 \geq 1/2$ and $k \geq 6$.\\
3.  Change the concrete upper bound in that theorem to include the term (lmax+1)log(k), instead of just lmax log(k).  This doesn't change the O() bound.\\
4.  I don't think Lemmas 7.1 or 7.2 need to change.  Right? \\
5.  Lemma 7.3 statement should have a change to (lmax+1)log(k) in the concrete upper bound, no change in the O() bound. \\
6.  Claim 1 seems to be unchanged.  For Claim 2, change to "applying Claim 1 (lmax+1) log k times".  Claim 3 is unchanged.  \\
7.  Claim 4 statement:  Change the mention of the starting weight to the new starting weight.
The upper bound numerator could be changed to (b-1) log k, but we might as well just leave it, put a suppressed note there in case we ever care.\\
8.  Final concrete bound in the proof, change to (lmax+1) log(k).\\
9. Section 7.2, in the list of assumptions, add the lower bound on $r_2$ and the lower bound on $k$.  Change the bound sigma to include the $(lmax+1) log(k)$ term.\\
10.  Lemma 7.5, part 1, change mention of initial weight.\\
11.  Lemma 7.6, part 3(a), change the two mentions of $k^{\lmax}$ to $k^{lmax+1}$.\\
12.  Lemma 7.6 proof, no changes I can see in base case or Parts 1 and 2.  For Part 3, statement of 1 needs two changes to denominators for new starting weights \\
13.  In the proof of 3(a), change the mention of the starting weight.\\
14.  Part 5 proof, change the equation as I wrote just above. Add mention that we are using the bounds on r1 and k here. \\
15.  I don't see any changes needed in the final proof of Theorem 5.3.  Because this deals with the situation where learning is finished, which uses the final bounds and doesn't mention the starting bounds.

That sound like a lot of changes but they are easy and I can make them quickly, if you are willing to check them.

Based on all the chatter, I suggest that we start by lowering the initial weights to $k^{lmax+1}$  That will require many of the changes I list above.  Then in the list of assumptions for Section 7.2, I can just add the assumption:
$1/\sqrt{k} + 1/k \leq r_2 \sqrt{k} / 2$.  Then in a footnote, we can describe various combinations of lower bound assumptions on $r_2$ and $k$ that suffice to guarantee the needed inequality.  Sound OK?  I didn't hear back from you, am anxious to finish this, so I went ahead and made the changes, in green.
}

Therefore, $|R| \geq 2$; let $u'$ and $u''$ be any two distinct elements of $R$.
Since $u'$ and $u''$ fire at time $t-1$, by \autoref{lem: unbound-neurons}, we know that both are bound at time $t-1$; let $c'$ and $c''$ be the respective concepts such that $u' = rep(c')$ and $u'' = rep(c'')$.
We know that $c' \neq c''$ because each concept gets only one $rep$ neuron, by the way that $rep$ is defined.
Note that the firing of both $u'$ and $u''$ must be due to the showing of the same concept $c^*$ at time $(t-1) - (layer(u)-1) = t - layer(u)$.
Then by the inductive hypothesis, Part 5, applied to both $u'$ and $u''$, we see that $c^*$ must be an ancestor of both $c'$ and $c''$.
Therefore, $c^*$ must be an ancestor of the common parent $c$ of $c'$ and $c''$,
as needed.
 
This completes the overall proof of the lemma.  
\end{proof}

\subsection{Proof of \autoref{thm:noisefreelearning}}
\label{sec: main-invariant}

Now we use \autoref{lem: main-noise-free} to prove our main theorem about noise-free learning, \autoref{thm:noisefreelearning}.

\vspace{-.2cm}
\begin{proof}
By assumption, all the concepts in the hierarchy are shown according to a $\sigma$-bottom-up training schedule.
This implies, by \autoref{ass:flag}, that after the schedule, all the concepts in the hierarchy have $reps$ in the corresponding layers, that is, for each $c \in C$, $layer(rep(c)) = level(c)$.
Also, by \autoref{lem: main-noise-free}, Part 3(b), the weights after the schedule are set as as follows: 
For every concept $c$ with $level(c) \geq 1$, all incoming weights of $rep(c)$ from the $reps$ of its children, i.e., the neurons in $rep(children(c)$, are in the range $[\frac{1}{(1+\eps)\sqrt{k}}, \frac{1}{\sqrt{k}}]$, 
and weights from all other neurons (on layer $level(c)-1$) are in the range $[0, \frac{1}{k^{\lmax+b}}]$. 
\snote{You had changed b to lmax in some places.  I changed it back to b where I noticed it.  We should discuss the reasons for having particular values of b.  Until we understand this better, I'd rather keep the flexibility of having the parameter b.}

We must argue that the resulting network $\mathcal N$ $(r_1,r_2)$-recognizes the concept hierarchy $\mathcal C$, according to \autoref{def:noisyrecognition}.
This has two directions, saying that certain neurons must fire and certain neurons must not fire, at certain times, when a particular subset $B \subseteq C_0$ is presented.
So suppose that a particular subset $B \subseteq C_0$ is presented at time $t$.

\noindent
\emph{Neurons that must fire:}
We must show that the $rep$ of any concept $c$ in $supported_{r_2}(B)$ fires at time $t+level(c)$ (see \autoref{def:support} for the definition of $supported$).
We prove this by induction on the level number $\ell$, $1 \leq \ell \leq \lmax$, showing that the $rep$ of each level $\ell$ concept in $supported_{r_2}(B)$ fires at time $t+level(c)$.

For the base case, consider a level $1$ concept $c \in supported_{r_2}(B)$; then $rep(c)$ is in $layer$ $1$.
Since $c \in supported_{r_2}(B)$, it means that $|children(c) \cap B| \geq r_2 k$, that is, at least $r_2 k$ children of $c$ are in $B$.
As noted above,
the $rep$ of each of these children is connected to $rep(c)$ by an edge with weight at least 
$ \frac{1}{(1+\varepsilon) \sqrt{k} }$,
which yields a total incoming potential for $rep(c)$ in round $1$ of at least
\[ 
\frac{r_2 k}{(1+\varepsilon) \sqrt{k} } 
= \frac{r_2 \sqrt{k}}{1+\varepsilon}.
\]

To show that $rep(c)$ fires at time $t+1$, it suffices to show that the right-hand side is at least as large as the firing threshold $\tau = \frac{(r_1+r_2) \sqrt{k}}{2}$.
That is, we must show that
$
\frac{r_2}{1+\varepsilon} \geq \frac{r_1+r_2}{2}.
$
Plugging in the expression for $\varepsilon$, we get that:
\[
\frac{r_2}{1+\varepsilon}
= \frac{r_2}{1 + \frac{r_2-r_1}{r_1+r_2}}
= \frac{r_1 + r_2}{2},
\]
as needed. 

For the inductive step, consider $\ell \geq 2$ and assume by induction that the $rep$ of any level $\ell-1$ concept in $supported_{r_2}(B)$ fires at time $t+\ell-1$.
Consider a level $\ell$ concept $c \in supported_{r_2}(B)$.
Since $c \in supported_{r_2}(B)$, it means that $|children(c) \cap B_{\ell-1}| \geq r_2 k$, using notation from \autoref{def:support}, that is, at least $r_2 k$ children of $c$ are in $supported_{r_2}(B)$.
By the inductive hypothesis, the $reps$ of all of these children of $c$ fire at time $t+\ell-1$.
As noted above, the $rep$ of each of these children is connected to $rep(c)$ by an edge with weight at least 
$ \frac{1}{(1+\varepsilon) \sqrt{k} }$,
which yields a total incoming potential for $rep(c)$ in round $t+\ell$ of at least
\[ 
\frac{r_2 k}{(1+\varepsilon) \sqrt{k} } 
= \frac{r_2 \sqrt{k}}{1+\varepsilon}.
\]
Arguing as in the base case, this is at least as large as the firing threshold $\tau$, as needed to guarantee that $rep(c)$ fires at time $t+\ell$.

\snote{We have to be careful about the numbering of times, and the use of the term "rounds".  I found a few issues.  Recall that we have pot(t) meaning the same thing as z(t-1), that is, the potential (at time t) is calculated from a dot product of quantities defined at time $t-1$.  At some point, we switched, presumably for clarity, to using "potential in round t" for this quantity, meaning the "round" that goes from time t to time t+1.  I went back and added a footnote to explain this use of "round" the first time we do this.  And I hope that we have been careful enough throughout to use this notation consistently.  I will keep trying to check...}

\noindent
\emph{Neurons that must not fire:}  
We must show that the $rep$ of any concept $c$ that is not in $supported_{r_1}(B)$ does not fire at time $t+level(c)$.  
Again we prove this by induction on the level number $\ell$, $1 \leq \ell \leq \lmax$, showing that the $rep$ of each level $\ell$ concept that is not in $supported_{r_1}(B)$ does not fire at time $t + level(c)$.

For the base case,  consider a level $1$ concept $c \notin supported_{r_1}(B)$; then $rep(c)$ is in layer $1$.
Since $c \notin supported_{r_1}(B)$, it means that $|children(c) \cap B| < r_1 k$, which implies that $|children(c) \cap B| \leq \lfloor r_1 k \rfloor$.
As noted above,
the $rep$ of each of these children is connected to $rep(c)$ by an edge with weight at most $\frac{1}{\sqrt{k}}$.
Also, there are at most $k^{\lmax+1}$ other level $0$ firing neurons, since $B \subseteq C_0$, and all the weights on edges connecting these to $rep(c)$ are
at most $\frac{1}{k^{\lmax+b}}$.
Therefore, the total incoming potential for $rep(c)$ in round $t+1$ is at most
\[\frac{{\lfloor r_1 k \rfloor}}{\sqrt{k}} + \frac{k^{\lmax+1}}{k^{\lmax+b}} 
= \frac{{\lfloor r_1 k \rfloor}}{\sqrt{k}} + \frac{1}{k^{b-1}}.
\]

Now we use the technical assumption that $r_1 k - \lfloor r_1 k \rfloor \geq \frac{\sqrt{k}}{k^{b-1}}$.
Then the right hand side of the last inequality is at most
\[
\frac{r_1 k - \frac{\sqrt{k}}{k^{b-1}}}{\sqrt{k}} + \frac{1}{k^{b-1}} = r_1 \sqrt{k} < \frac{(r_1+r_2) \sqrt{k}}{2} = \tau,\]
which implies that $rep(c)$ does not fire.

\snote{Write $a$ as an abbreviation for $r_1 k - \lfloor r_1 k \rfloor$.
Then the right hand side of the last inequality is at most
\[r_1 \sqrt{k} - \frac{a}{\sqrt{k}} + \frac{1}{k^{b-1}}
\leq r_1 \sqrt{k} < \frac{(r_1+r_2) \sqrt{k}}{2} = \tau,\]
which implies that $rep(c)$ does not fire. }

\snote{
Hmm I'd have written
$\frac{{\lfloor r_1 k \rfloor}}{\sqrt{k}} + \frac{1}{k^{b-1}} \leq 
\frac{r_1 k - \frac{\sqrt{k}}{k^{b-1}}}{\sqrt{k}} + \frac{1}{k^{b-1}} = r_1 \sqrt{k} < \frac{(r_1+r_2) \sqrt{k}}{2} = \tau$
}

For the inductive step, consider $\ell \geq 2$ and assume by induction that 
the $rep$ of any level $\ell-1$ concept that is not in $supported_{r_1}(B)$ does not fire at time $t + \ell - 1$.
Consider a level $\ell$ concept $c \notin supported_{r_1}(B)$.
Since $c \notin supported_{r_1}(B)$, it means that
$|children(c) \cap B_{\ell-1}| < r_1 k$, that is, the number of children of $c$ that are in $supported_{r_1}(B)$ is less than $r_1 k$.
As noted above, the $rep$ of each of these children is connected to $rep(c)$ by an edge with weight at most $\frac{1}{\sqrt{k}}$.

Now consider the rest of the incoming edges to $rep(c)$.
They may come from the $reps$ of children of $c$ that are not in $supported_{r_1}(B)$, from layer $\ell-1$ neurons that are bound to concepts that are not children of $c$, and from unbound layer $\ell-1$ neurons.
However, the $reps$ of children of $c$ that are not in $supported_{r_1}(B)$ do not fire, by the inductive hypothesis, and the unbound neurons do not fire, by \autoref{lem: unbound-neurons}.
So that leaves us to consider the layer $\ell-1$ neurons that are bound to concepts in $C$ that are not children of $c$.  
There are at most $k^{\lmax} + 1$ such neurons.
Since the weights of the edges connecting them to $rep(c)$ are at most $\frac{1}{k^{\lmax + b}}$, the total incoming potential for $rep(c)$ in round $t+\ell$ is at most
\[
\frac{{\lfloor r_1 k \rfloor}}{\sqrt{k}} + \frac{k^{\lmax+1}}{k^{\lmax+b}} 
= \frac{{\lfloor r_1 k \rfloor}}{\sqrt{k}} + \frac{1}{k^{b-1}}.
\]
As in the base case, this is strictly less than $\tau$.  Therefore, $rep(c)$ does not fire at time $t+level(c)$.
\end{proof}

\section{Analysis of Noisy Learning}\label{sec:noisyanalysis}

Here we present our analysis for the noisy learning algorithm in
\autoref{sec:noisy}.
In \autoref{lem:noisystructure}, we describe how incoming weights change for a particular neuron when it is noisy-shown. The proof can be found in \autoref{sec:towards}.
Once we understand the weight changes of one neuron, we are able to use essentially the same invariants as in the noise-free case (\autoref{lem: main-noise-free}), describing how neurons get bound to concepts, when neuron firing occurs, and how weights change, during the time when the network is learning.
In \autoref{sec:analysisnoise}, we put everything together to prove \autoref{thm:noisylearning}.

We start by giving a slightly more detailed proof overview than the one in \autoref{sec:noisyproofidearough}.

\subsection{Proof Overview}

The overall proof 
of \autoref{thm:noisylearning} is at its core similar to the proof of \autoref{thm:noisefreelearning} presented in \autoref{sec:noisefreeanalysis}.
The main difference is that the weights of the neurons after learning are slightly different:
following the notation of \autoref{lem:wincr}, \autoref{lem:invariants} and \autoref{lem:structure}, we show that, for every $i
\in F$, the weight will eventually approximate  \[ \bar{w} =\frac{1}{\sqrt{ p k + 1-p   } } ,\]
and for every $i\not \in F,$ the weight will eventually be in the interval $[0,1/k^{2\lmax}]$. 
Note that, in this section, we set the parameter  $b$, governing the desired decrease of unrelated weights, to be $b = \lmax$.
Also note that we can recover the noise-free case by setting $p=1$.\footnote{In this case the probabilistic guarantees become deterministic guarantees.}

The main difficulty in the noisy case is to establish  a noisy version of \autoref{lem:structure}, which we do in
\autoref{lem:noisystructure}. Then, proving the main theorem is analogous to the noise-free case. 
This is because the behavior of this network is the same as that of the noise-free algorithm, except for how the weights of individual neurons are updated.
Nonetheless, the same arguments as in the proof \autoref{lem: main-noise-free} still hold. 
Therefore, the core of this section is to prove \autoref{lem:noisystructure}.
Due to the noise, main structural properties of the noise-free case, such as weights of neurons in $F$ changing monotonically, do not hold anymore.
To make matters worse, we cannot simply use Chernoff bounds and assume the worst-case distribution of the weight changes, since assuming worst-case in each round prevents the weights from converging.  Instead, we use a fine-grained potential analysis.

We first bound the worst-case change of any weight $w_i$ during a period of $T$ rounds (\autoref{lem:Eprimeholds}), 
assuming that the weight at the beginning of the period, $w_i(t)$, is in the interval $[ \frac{\sqrt{p}}{4k}, \frac{4}{\sqrt{p}}] $.
Namely, we show that for some small $\delta_1$ (defined in \autoref{sec:noisy-weight-change-individual}), we have
$(1-\delta_1) w_i(t) \leq w_i(t+T) \leq (1+\delta_1)w_i(t)$.
 We later show that this assumption holds w.h.p. throughout the first $n^6$ rounds.
It turns out that the way an individual weight changes depends strongly on the other weights in $F$ and on the neurons of the previous layer that fire. More precisely, it depends on $z(t)$, which can change dramatically between rounds, rendering the analysis non-trivial.
In order to show that the weights converge to $\bar{w}$, we use the  potential function $\psi(\cdot)$.
For any time $t$, let $w_{min}(t)$ and $w_{max}(t)$ be the minimum and maximum weight, respectively, among $\{ w_{i}(t) ~|~ i \in F\}$. 
%
Let 
\[\psi(t) = \max \left\{  \frac{w_{max}(t)}{\bar{w}}, \frac{\bar{w}}{w_{min}(t)}  \right\} . \]

Our goal is to show that this potential decreases quickly until it is very close to $1$.
Showing that the potential decreases is involved, since one cannot simply use a worst-case approach, due to the terms in Oja's rule being non-linear and potentially having a high variance, depending on the distribution of weights.  
Instead, we consider the terms
$\bar{w}/w_{min}(t)$  and  $w_{max}(t)/\bar{w}$ of the potential and consider four cases depending on whether these terms are small or large.

First, if the term $\bar{w}/w_{min}(t)$ is large and the term $w_{max}(t)/\bar{w}$ is small, then the minimum weight $w_{min}$ increases and since the maximum weight $w_{max}$ increases by at most a factor of $(1+\delta)$, the potential decreases.
The second case, where the term $w_{max}(t)/\bar{w}$ is large and the term  $\bar{w}/w_{min}(t)$ is small, can be bounded analogously.
Finally, if $\bar{w}/w_{min}(t)$ and $w_{max}(t)/\bar{w}$ are both large and close to each other, then we show that both terms decrease.
Note that if both terms are small, then the potential is small and we are done.

For example, to prove the first case, we first show that, for every $i \in F$ with $w_i(t) \geq (1+2\delta_1) w_{min}$, we have 
$w_{i}(t+T) \geq  (1+\delta/2) w_{min}$, using the previously established bounds. 
 As mentioned before, in order to prove that any such neuron $i^*$ increases its weight, we cannot use worst-case bounds. Instead, we  carefully use the randomness over the input vector $x$.
To this end we define, for every $t'\geq 0$,

\[ X(t')= z(t+t')\cdot \left( x_{i^*}(t+ t')-z(t+ t') \cdot w_{i^*}(t+t') \right) \]
and
\begin{equation}\label{eq:S} S= \sum_{t' = 1}^T X(t'). 
\end{equation}
Based on these terms we construct a Doob martingale (\autoref{lem:doopstep}), which allows us to  get asymptotically almost tight bounds on $S$, 
To do this, we use the Azuma-Hoeffding inequality (\autoref{pro:hoeff}).
Putting everything together, we see that $\psi(\cdot)$ decreases. 
This then allows us to prove  \autoref{thm:noisylearning}.

\subsection{Convergence of the Weights}
\label{sec:noisy-weight-change-individual}

We use the following assumptions about the various parameters:
\begin{enumerate}
\item
$\delta = \bar{w}(r_2-r_1)/50$, 
\item
$\delta_1 = \frac{\delta p \bar{w}}{20}$,
\item
$T=\frac{7 \log (|C|n)}{100 p^3 \delta_1^2}$,
\item 
The learning rate $\eta = \frac{\delta_1^3}{ 64 T k^2 p }$. 
\item
The firing threshold $\tau = r_2 k (\bar{w}-2\delta)$
\item $b = \lmax$.
\end{enumerate}

The following lemma is the noisy counterpart to \autoref{lem:structure}.
\begin{lemma}[Learning Properties, Noisy Case]
\label{lem:noisystructure}
Let $F \subseteq \{1,\ldots,n\}$ with $|F| = k$.  
Let $\varepsilon \in (0,1]$.

\snote{This seems to be wrong.  $b$ doesn't appear in the statement of Lemma 8.1.  I suspect that you intended to fix $b$ as you promised to in the proof overview section, Section 8.1.  However, I think that the place to actually fix $b$ would be just before Lemma 8.1, in the list where you formally fix the values of the other parameters.}
\snote{That b was just an artifact. Removed.}
\snote{But shouldn't we be fixing $b$ in the list of parameters before the lemma?  I know, we mentioned that earlier, but here seems to be where we are fixing things formally, and it's convenient for reference, to have all the restrictions in one convenient place.}
Let $\sigma = c' \frac{k^6}{p^6\delta^3}\left(\lmax \log(k) + 
  \log(|C|n/\delta \right)$, for some large enough constant $c'$. 

Assume that:
\begin{enumerate}
    \item For every $t \geq 0$, $x_i(t) = 0$ for every $i \notin F$, and $e(t) = 1$. 
    \item All weights $w_i(0)$ are equal to $\frac{1}{k}$.
    \item $\eta $ is defined above.\footnote{This is a very precise assumption but it could be weakened, at a corresponding cost in run time.}

\end{enumerate}
Then for every $t \in [ \sigma,n^6]$, the following with high probability:
\begin{enumerate}
\item 
For any $i \in F$, we have
$w_i(t) \in [ \bar{w}-2\delta,\bar{w} + 2\delta]$.
\item 
For any $i \notin F$,  we have $w_i(t) \leq \frac{1}{k^{2\lmax}}$.

\end{enumerate} 
\end{lemma}
Proving \autoref{lem:noisystructure} is the main goal of the section and we need a series of properties to prove it. 
We give the proof in \autoref{sec:proofnoisystructure}.  
We now proceed by showing how \autoref{thm:noisylearning} follows from this lemma.

\subsection{Proof of \autoref{thm:noisylearning}, assuming \autoref{lem:noisystructure}}\label{sec:analysisnoise}

\snote{We have to learn ALL the concepts in the hierarchy, and we can't start learning $c$ until its children are fully learned.  But this suggests a union bound, for the probabilities of fully learning each concept within time $\sigma$ after its children are fully learned.}
\snote{The only thing we need is that $\mathcal{E}$ holds. Once we condition on that, everything else (each child concept learns correct for example) holds w.p. 1}


As mentioned at the beginning of this section, it suffices to consider the learning of one concept. Generalizing to  a concept hierarchy is analogous to the noise-free case (in particular the proof of \autoref{lem: main-noise-free}).

We now argue how the learning of one concept follows from  \autoref{lem:noisystructure}.
By \autoref{lem:noisystructure}, all weights in $F$ are at least $\bar{w}-2\delta$ and most $\bar{w}+2\delta$.
Hence, if $c\in supported_{r_2}(B)$, then we can show by a similar induction as in the proof of \autoref{thm:noisefreelearning} that each $rep$ fires since, the potential  is at least $
r_2 k (\bar{w}-2\delta) = \tau$, which means that the corresponding $rep$ fires.
On other other hand,  if $c \not\in supported_{r_1}(B)$, then there will be a neuron that does not fire since all weights are, by  \autoref{lem:noisystructure}, at most $\bar{w}+2\delta$.

Note that, by definition of $\delta$, 
\begin{align*} r_1(\bar{w} +2\delta)  &=  (r_2-50\delta/\bar{w})(\bar{w} +2\delta)\\
&\leq r_2 \bar{w} + 2\delta r_2 - 50 \delta  \\
&\leq r_2 \bar{w} - 2\delta r_2 - 46 \delta,
\end{align*}
since $r_2 \leq 1.$
Therefore, the potential for $rep(c)$ will be at most

\[r_1 k (\bar{w}+2\delta) + k^{\lmax}\frac{1}{k^{2\lmax} } < r_2 k\left( \bar{w} - 2\delta  - \frac{46 \delta}{r_2} \right) +\frac{1}{k} \leq r_2 k (\bar{w}-2\delta)=\tau,\]

since $ k 46 \delta = k\frac{46}{50}\bar{w} (r_2-r_1) \geq  \frac{46}{50\sqrt{k}} \geq 1/k$, due to $\bar{w}\geq 1/\sqrt{k}$, $r_2-r_1 \geq 1/k$ and $k\geq 2$.
Thus, the neuron does not fire.

\subsection{Towards \autoref{lem:noisystructure}}\label{sec:towards}

In this subsection, we define a key property $\mathcal{E}_t$ that says that the weights remain within certain multiplicative bounds, for during the interval $[t,t+T]$ rounds.
We show in \autoref{lem:Eprimeholds} that $\mathcal{E}_t$ holds with  probability $1$.
Then we assume $\mathcal{E}$ and show \autoref{lem:bounds}, which bounds the expected change of the terms in Oja's rule.
To derive bounds on the actual change we first show how the changes form a Doob-martingale (\autoref{lem:doopstep}). Using this, we are finally able to show in
 in \autoref{lem:bounds1} and \autoref{lem:bounds2} that the potential decreases.

Let $\mathcal{E}_{t}$ be the event that for every $t' \in [t, t+T]$, we have  \[
\left(1-\delta_1 \right)w_i(t)
\leq w_i(t')\leq \left(1+\delta_1 \right)w_i(t) .\]
\snote{I changed several uses of "for all" to "for every" in English descriptions.  I know, the formal logical term is "for all", but really, we are talking about individual values, so "for every" seems clearer to me.}
\snote{Hmm...E has a free variable of $t$ inside, but that is not given as a parameter of E.  I don't think you want to fix the value of $t$ throughout this subsection.  So I suggest including the parameter $t$ as an additional subscript for E.}

\begin{lemma}\label{lem:Eprimeholds}
Assume $w_i(t) \in [ \frac{\sqrt{p}}{4k}, \frac{4}{\sqrt{p}}] $.
Then, $\mathcal{E}_{t}$ holds.
\end{lemma}
\snote{The statement of Lemma 8.2 would be clearer if we had the subscript t included for E.}
\snote{There are subscripts}
\snote{OK, you are instantiating the parameter tau here as capital-T.  Yes?}\snote{Let me actually make it easier.}

\begin{proof}
Let $w_{max}(t)$ denote the maximum weight at time $t$.
We have
$w_{max}(t+1)\leq w_{max}(t)+\eta z(t) \leq 
w_{max}(t)+\eta w_{max}(t) kp.
$
Thus,
$w_{max}(t+t')\leq w_{max}(t) (1+ \eta kp)^{T}=w_{max}(t)\left(
1+ \frac{\eta k p T}{T}
\right)^T=w_{max}(t)e^x$
for $x=\eta k p T$.
Since $p\geq 1/k$, we have $x<1$, we have

\[ w_{max}(t+t') \leq w_{max}(t)e^x \leq w_{max}(t)(1+x+x^2) \leq w_{max}(t)(1+2x). \]
this completes the upper bound of $\mathcal{E}_t$
since $2 \eta k p T \leq \delta_1$.

We now consider the lower bound of $\mathcal{E}_t$.
Similarly,
if $w_{min}(t)$ denotes the minimum weight at time $t$, then

$w_{min}(t+1)\geq w_{min}(t)-\eta z^2(t) \geq 
w_{min}(t)-\eta w^2_{max}(t) k^2p^2 \geq
w_{min}(t)-\eta16k^2 p.
$
Thus
$
w_{min}(t+1)\geq
w_{min}(t)- T\eta 16 k^2 p
\geq
w_{min}(t)- T\eta 16 k^2 p \frac{1}{\sqrt{p}/(4k)}w_{min}(t)\geq
w_{min}(t)\left(1- 64 \eta T k^2 \sqrt{p}
\right)\geq w_{min}(t)(1-\delta_1)
%
,$
since $w_{min}(t) \geq \frac{\sqrt{p}}{4k}$. 
\end{proof}

\

We define the following potential function
\[\phi(t)= \sum_{i\in F} w_i(t) .\]

The following bounds the expected change of the weights.
\begin{lemma}\label{lem:bounds}
Suppose $\mathcal{E}_{t}$ holds.
Then, we have

\begin{enumerate}
\item
$
 \E{z(t+t')~|~w(t+t'), \mathcal{F}_t} = p \phi(t+t')
$
\item
$
 \E{z(t+t')^2w_{i^*}(t+t')~|~\mathcal{F}_t} \leq (1+\delta_1)^3 p \phi(t)  \left((1-p) w_{max}(t)w_{i^*}(t)   + p w_{i^*}(t) \phi(t)  \right)\\
$ \item
 
$
\E{z(t+t')^2w_{i^*}(t+t')~|~\mathcal{F}_t}\geq
(1-\delta_1)^3p \phi(t)  \left((1-p) w_{min}(t)w_{i^*}(t)   + p w_{i^*}(t) \phi(t)  \right).
$ \end{enumerate}
\end{lemma}
\begin{proof}

In the following, the randomness is over $x_i(t+t')$.
%
We have,
\begin{align*}
  \E{z(t+t')~|~w(t+t'), \mathcal{F}_t} = p\sum_{i \in F}\E{x_i(t+t')}w_i(t+t') = p\sum_{i \in F}w_i(t+t')  = p \phi(t+t').
 \end{align*}

Moreover,
\begin{align*}
 \E{z(t+t')^2~|~w(t+t'),\mathcal{F}_t} &=     \sum_{i\in F}  \left( p w_i(t+t')^2+ p^2 w_i(t+t')   \sum_{j\in F, j\neq i } w_j(t+t')    \right)\\
 &= \sum_{i\in F}  \left( p w_i(t+t')^2-p^2 w_i(t+t')^2+ p^2 w_i(t+t')  \phi(t+t')  \right)\\
  &= (p-p^2) \sum_{i\in F}  w_i(t+t')^2  + p^2 \phi(t+t')^2  .
 \end{align*}

We suppose $\mathcal{E}_{t}$ holds, thus in every obtainable configuration it must hold that $ (1-\delta_1)w_i(t) \leq w_i(t+t') \leq (1+\delta_1)w_i(t)$.
Therefore, $(1-\delta_1)\phi(t) \leq \phi(t+t')\leq (1+\delta_1)\phi(t)$.
Thus,

\begin{align*}
&\E{z(t+t')^2w_{i^*}(t+t')~|~\mathcal{F}_t}=\\
&\phantom{xx}=
\sum_{w'~\land~\text{$w'$ obtainable}}
\E{z(t+t')^2w_{i^*}(t+t')~|~w(t+t')=w',\mathcal{F}_t} \Pr{w(t+t')=w'} \\
&\phantom{xx}= \sum_{w'~\land~\text{$w'$ obtainable}}
w'_{i^*}(t+t')
\E{z(t+t')^2~|~w(t+t')=w',\mathcal{F}_t} \Pr{w(t+t')=w'}\\
&\phantom{xx}\leq 
(1+\delta_1)w_{i^*}(t)
\sum_{w'~\land~\text{$w'$ obtainable}}\hspace{-0.2cm}
\left( (p-p^2) \sum_{i\in F}  w'_i(t+t')^2  + p^2 \phi(t+t')^2  \right)\Pr{w(t+t')=w'}\\
 &\phantom{xx}\leq
 (1+\delta_1)^3 w_{i^*}(t) \left((p-p^2) \sum_{i\in F}  w_i(t)^2  + p^2 \phi(t)^2  \right)
\\
  &\phantom{xx}\leq w_{i^*}(t)(1+\delta_1)^3  \left((p-p^2) w_{max}(t) \phi(t)  + p^2 \phi(t)^2 \right)\\
    &\phantom{xx}\leq (1+\delta_1)^3 p \phi(t)  \left((1-p) w_{max}(t)w_{i^*}(t)   + p w_{i^*}(t) \phi(t)  \right).
 \end{align*}

Similarly,

\begin{align*}
\E{z(t+t')^2w_{i^*}(t+t')~|~\mathcal{F}_t}&\geq
(1-\delta_1)^3p \phi(t)  \left((1-p) w_{min}(t)w_{i^*}(t)   + p w_{i^*}(t) \phi(t)  \right).
\end{align*}

\end{proof}

In the following, we define a sequence of random variables $Y_1, Y_2, \dots$ and show it forms a Doob martingale.

\begin{lemma}\label{lem:doopstep}
Fix neuron $i^*$.
Let $X_i$ be the random choices of the $pk$ children that fire in round $i$ (in the definition of the noisy learning).
Recall that $S= \sum_{t' \leq T} z(t+t')\cdot \left( X_{i^*}(t+ t')-z(t+ t') \cdot w_{i^*}(t+t') \right).$
Let $Y_i= \E{S~|~X_i,\dots, X_1}$.
Then the following holds

\begin{enumerate}
\item The sequence
 $Y_0, Y_1, \dots, Y_T$ is a (Doob) martingale with respect to the sequence
$X_0,X_1, \dots X_T$. 
\item For all $i$, $|Y_i-Y_{i+1}| \leq 8k^2 \sqrt{p}$.
\item
 $S=\E{S~|~X_T,\dots, X_1}=Y_T$. 
\end{enumerate}

\end{lemma}

\begin{proof}
For the first part, we have, using the tower rule,
\[\E{Y_i ~|~X_{i-1},\dots,X_1} =  \E{\E{S~|~X_i,\dots, X_1} ~|~X_{i-1},\dots,X_1}=\E{S~|~X_{i-1},\dots,X_1}=Y_{i-1}.  \]

For the second part, note that $w_i  \leq 2/\sqrt{p}$. Thus,
$|Y_i-Y_{i+1}| \leq z^2_{t+i} w_{i^*} \leq  k^2p^2 2^3/\sqrt{p}^3 $.

The third part follows trivially.

\end{proof}

Let  \[ \delta_2 = \left(k \frac{\sqrt{p}}{2k} \right)p^2 \left(\frac{20\delta_1}{p}\right) = 10 p^{3/2} \delta_1\]

The following lemma shows that if the potential is large due to $w_{min}$ being small, then the weight of the smallest neurons increases. 

\begin{lemma}\label{lem:bounds1}
Suppose $\mathcal{E}_{t}$ holds.
Consider the neurons $i^*$ with  $w_{i^*}(t) \in [w_{min},(1+2\delta_1) w_{min}]$
and
$\bar{w}-w_{i^*}(t) \geq  \delta.$ 
Assume
 \begin{align}\label{eq:klaskd}
 \frac{\bar{w}}{w_{min}(t)} \geq (1-2\delta_1)  \frac{w_{max}(t)}{\bar{w}}  .
 \end{align}
Then, with probability at least $1-1/n^6$,
\[ w_{i^*}(t+T) \geq w_{i^*}(t) + T \eta\delta_2/2
\]
\end{lemma}

\begin{proof}

By the second part of \autoref{lem:bounds}, for $t'\leq T$
\begin{align*}
 \E{z(t+t')^2w_{i^*}(t+t')} &\leq (1+\delta_1)^3 p \phi(t)  \left((1-p) w_{max}(t)w_{i^*}(t)   + p w_{i^*}(t) \phi(t)  \right).
 \end{align*}
We now bound the  terms in the parentheses.
First note that
\[
w_{i^*}(t) w_{max}(t)  \leq   (1+2\delta_1) w_{min}(t) w_{max}(t)   \leq \frac{1+2\delta_1}{1-2\delta_1} \bar{w}^2 \leq (1+4.5\delta_1) \bar{w}^2, \]
since $\delta_1 \in [0,1/18]$.
Furthermore, for $\delta_1 \in [0,1/9]$ we have
$(1+4.5\delta_1)(1+\delta_1) \leq (1+6\delta_1)$. Thus,
\begin{align*}
w_{i^*}(t) \phi(t) &\leq  (k-1)(1+\delta_1) w_{i^*}(t) w_{max} + (1+\delta_1)w_{i^*}(t)  w_{i^*}(t)\\
&\leq  (k-1)(1+\delta_1)(1+4.5\delta_1) \bar{w}^2 + (1+\delta_1)w_{i^*}(t)  w_{i^*}(t)\\
&\leq    (1+6\delta_1) \left( (k-1) \bar{w}^2 + w_{i^*}(t)^2 \right)\\
&=   (1+6\delta_1) \left( k \bar{w}^2 + w_{i^*}(t)^2 - \bar{w}^2 \right).
\end{align*}
Note that
$(1-p)    \bar{w}^2 + p k \bar{w}^2 = 1 $.
Thus,
\begin{align*}
(1-p) w_{max}(t)w_{i^*}(t)   + p w_{i^*}(t) \phi(t)   &\leq  (1+6\delta_1)   \left( (1-p)    \bar{w}^2 + p k \bar{w}^2 + p(w_{i^*}(t)^2 - \bar{w}^2)      \right)   \\
&= (1+6\delta_1) \left( 1 -  p(\bar{w}^2 -w_{i^*}(t)^2)      \right).
\end{align*}

Therefore,
\begin{align*}
\E{z(t+t')^2w_{i^*}(t+t')} \leq (1+10\delta_1) p \phi(t)  \left( 1 -  p(  \bar{w}^2 -w_{i^*}(t)^2  )  \right),
\end{align*}
where we used that
$(1+6x)(1+x)^3 \leq (1+10x)$ for $x\leq 0.045$.

Note that 
\begin{align}\label{eq:difftotarget}
\bar{w}^2 -w_{i^*}(t)^2\geq \bar{w}^2 -w_{i^*}(t)\bar{w}=\bar{w}(\bar{w}-w_{i^*}(t)) \geq \bar{w} \delta= \bar{w} \frac{20}{\bar{w}p}\delta_1 .
\end{align}

Finally, using the definition of $S$ (\autoref{eq:S}) and combining the above with 
 the first part of \autoref{lem:bounds},
\begin{align*}
\E{S} &\geq  T\left( \E{z(t+t')} - \E{z(t+t')^2w_{i^*}(t+t')}   \right) \\
&\geq  T \phi(t) p\left( 1 - (1+10\delta_1)  \left( 1 -  p(\bar{w}^2 -w_{i^*}(t)^2 )  \right)  \right)\\
&\geq  T \phi(t) p^2 \frac{\bar{w}^2 - w_{i^*}(t)^2 }{2} ,
\end{align*}
where we used that
$1-(1+z)(1-x) = 1-(1-x+z-zx) = x-z+zx \geq x/2 $ for  $z\leq x/2$.
%
%
We define the sequence $Y_{1}, Y_2, \dots$ of variabels as defined in \autoref{lem:doopstep}.
By \autoref{lem:doopstep}, this sequence is a  Doob martingale.
Thus, we can apply
\autoref{pro:hoeff} to the Doob martingale $Y_T,Y_{T-1},\dots, Y_1$ with  $|Y_i-Y_{i+1}| \leq \delta_3$ for $\delta_3=8k^2 \sqrt{p}$. 

We derive
using the lower bounds on the weights and \autoref{eq:difftotarget}.
\begin{align*}
		\Pr{|S - \E{S}| \geq \frac{\E{S}}{2} } &\leq  2\exp\left(-\frac{2\left(\frac{\E{S}}{2}\right)^2}{T\delta_3^2}\right)
		\leq 2\exp\left(-\frac{2 \left(T \phi(t) p^2 \frac{ \bar{w}^2 - w_{i^*}(t)^2 }{2} \right)^2   }{T\delta_3^2 }\right)\\
			&\leq 2\exp\left(-\frac{ T \left(\phi(t) p^2 \left( \bar{w}^2- w_{i^*}(t)^2\right) \right)^2   }{4\delta_3^2}\right)
			\leq 2\exp\left( -7 \lmax \log (|C|n)\right) \leq   \frac{1}{|C| n^{6}},
	\end{align*}
where the last inequality follows from
\begin{align*}
T \left(\phi(t) p^2 \left( \bar{w}^2- w_{i^*}(t)^2\right)\right)^2 
&\geq T \delta_2^2
= 
100 T p^3 \delta_1^2
= 7\log (|C| n).
\end{align*}

Thus 
\begin{align*}
w_{i^*}(t+T) &\geq w_{i^*}(t) + \eta S \geq
 w_{i^*}(t) + \eta \E{S}/2 \geq   w_{i^*}(t) + 
T \eta \delta_2/2
\end{align*}
\end{proof}

The following lemma is analogous to the previous one, with the difference that we analyse the case where $\psi$ is dominated by large weights (rather than small) and show that these large weights decrease. 

\begin{lemma}\label{lem:bounds2}
Suppose $\mathcal{E}_{t}$ holds.
Consider the neurons $i^*$ with  $w_{i^*}(t) \in [w_{max}(1-2\delta_1), w_{max}]$ and
 $w_{i^*}(t)-\bar{w} \geq  \delta. $
Assume
\begin{align}\label{eq:klaskd2}
 \frac{w_{max}(t)}{\bar{w}}  \geq (1-2\delta_1) 
  \frac{\bar{w}}{w_{min}(t)} 
 \end{align}
Then, with probability at least $1-1/n^6$,
\[ w_{i^*}(t+T) \leq w_{i^*}(t) - T\eta \delta_2/2
\]

\end{lemma}
\begin{proof}

We have for all $i \in F$ with $w_i(t) \geq (1+2\delta_1) w_{min}$, we have 
 $w_{i}(t+T) \geq  (1+\delta_1/2) w_{min}$, since each weight can only decrease by a factor of $(1-\delta_1)$ and since
 $(1+2\delta_1)(1-\delta_1)=1+\delta_1-2\delta_1\geq (1+\delta/2)$.
 Thus, we only consider the neurons $i^*$ with  $w_{i^*}(t) \in [w_{min},(1+2\delta_1) w_{min}]$.
By the third part of \autoref{lem:bounds}, for $t'\leq T$
\begin{align*}
 \E{z(t+t')^2w_{i^*}(t+t')} &\geq (1-\delta_1)^3 p \phi(t)  \left((1-p) w_{min}(t)w_{i^*}(t)   + p w_{i^*}(t) \phi(t)  \right).
 \end{align*}
We now bound the  terms in the parentheses.
First note that
\[w_{i^*}(t) w_{min}(t)  \geq   (1-2\delta_1) w_{min}(t) w_{max}(t)   \ge (1-2\delta_1)^2\bar{w}^2 \geq (1-4\delta_1) \bar{w}^2, \]
since $\delta_1 \geq 0$.

Thus,
\begin{align*}
(1-p) w_{min}(t)w_{i^*}(t)   + p w_{i^*}(t) \phi(t)   &\geq  (1-4\delta_1)   \left( (1-p)    \bar{w}^2 + p k \bar{w}^2 + p(w_{i^*}(t)^2 - \bar{w}^2)      \right)   \\
&= (1-4\delta_1) \left( 1 -  p(\bar{w}^2 -w_{i^*}(t)^2)      \right)
\end{align*}

Therefore,
\begin{align*}
\E{z(t+t')^2w_{i^*}(t+t')} \geq (1-10\delta_1) p \phi(t)  \left( 1 -  p(  \bar{w}^2 -w_{i^*}(t)^2  )  \right),
\end{align*}
where we used that
$(1-4x)(1-x)^3 \geq (1-10x)$ for $x\geq 0$.

Note that 
\begin{align}
\bar{w}(w_{i^*}(t)-\bar{w}) \geq \bar{w} \delta =\bar{w} \frac{20}{\bar{w}p}\delta_1 
\end{align}

Finally, using the definition of $S$ (\autoref{eq:S}) and combining the above with 
 the first part of \autoref{lem:bounds},
\begin{align*}
\E{S} &\leq  T\left( \E{z(t+t')} - \E{z(t+t')^2w_{i^*}(t+t')}   \right) \\
&\leq  T \phi(t) p\left( 1 - (1-10\delta_1)  \left( 1 -  p(\bar{w}^2 -w_{i^*}(t)^2 )  \right)  \right)\\
&\leq 2 T \phi(t) p^2 \bar{w}^2 - w_{i^*}(t)^2 
=- 2 T \phi(t) p^2(w_{i^*}(t)^2-\bar{w}^2)\\
&\leq 
- 2 T \phi(t) p^2\bar{w}(w_{i^*}(t)-\bar{w}),
\end{align*}
where we used that
$1-(1-z)(1-x) = 1-(1-x-z+zx) = x-z+zx \leq 2x $ for  $z\leq 1$.

This allows us to apply  \autoref{pro:hoeff} and the rest is analogous.

Thus 
\begin{align*}
w_{i^*}(t+T) &\leq w_{i^*}(t) + \eta S \leq
 w_{i^*}(t) +\eta \E{S}/2 \leq   w_{i^*}(t) - 
T \eta \delta_2/2\end{align*}
\end{proof}

We have for all $i \in F$ with $w_i(t) \geq (1+2\delta_1) w_{min}$, we have 
 $w_{i}(t+T) \geq  (1+\delta_1/2) w_{min}$, since each weight can only decrease by a factor of $(1-\delta_1)$ and since
 $(1+2\delta_1)(1-\delta_1)=1+\delta_1-2\delta_1\geq (1+\delta/2)$.

 Note that if neither \autoref{eq:klaskd} nor \autoref{eq:klaskd2} applies, then both $w_{min}(t)$ and $w_{max}(t)$ must be close to $\bar{w}$ and the claim follows easily.

\subsection{Proof of \autoref{lem:noisystructure}}\label{sec:proofnoisystructure}

We argue by induction on $j$, that
$\psi(j\cdot T) \leq  \max\left(\psi(0) - jT \eta \delta_2 /2, 
\bar{w} +2\delta \right)$
with probability at least $1-j/(|C|n^6)$. The base case is trivial.
Assume the claim holds up to $j-1$.
We have

 $w_i((j-1)T) \in [ \frac{\sqrt{p}}{4k}, \frac{4}{\sqrt{p}}] $. Therefore,  by \autoref{lem:Eprimeholds} $\mathcal{E}_{(j-1)T,T}$ holds.
This allows us to apply \autoref{lem:bounds1} and \autoref{lem:bounds2}.

Consider the following equations
\begin{align}\label{eq:klaskd3}
 \frac{\bar{w}}{w_{min}(t)} \geq (1-2\delta_1)  \frac{w_{max}(t)}{\bar{w}}  .
 \end{align}

\begin{align}\label{eq:klaskd4}
 \frac{w_{max}(t)}{\bar{w}}  \geq (1-2\delta_1) 
  \frac{\bar{w}}{w_{min}(t)} 
 \end{align}

We consider four cases based on whether or not the two equations \autoref{eq:klaskd3} and \autoref{eq:klaskd4} hold.
 In the first case
 \autoref{eq:klaskd3} holds and \autoref{eq:klaskd4} does not.
 In this case we can bound the drop of $\psi()$ by considering the the increase of $w_{min}()$ and we can disregard the increase of $w_{max}()$, since even if it increases  by a factor of $(1+\delta_1$), we have
 \[ \frac{w_{max}(jT)}{\bar{w}} \leq (1+\delta_1)\frac{w_{max}((j-1)T)}{\bar{w}} \leq (1+\delta_1)(1-2\delta_1) \frac{\bar{w}}{w_{min}((j-1)T)} \leq (1-\delta_1) \frac{\bar{w}}{w_{min}((j-1)T)}.
 \]

 In the second case
 \autoref{eq:klaskd4} holds and \autoref{eq:klaskd3} does not.
 This case is analogous to the first case.
 
 In the third case \autoref{eq:klaskd3} and \autoref{eq:klaskd4} hold. Here, one can show that both the minimum weight increases, and the maximum weight decreases.
 
 In the fourth case, none of the equations hold. This yields a contradiction 
 \[ 
  \frac{\bar{w}}{w_{min}(t)} <  (1-2\delta_1)  \frac{w_{max}(t)}{\bar{w}} < (1-2\delta_1)^2 
  \frac{\bar{w}}{w_{min}(t)}. 
  \]
 Thus we can disregard this case.
 
W.l.o.g.  we assume the first case holds.

Consider the neurons $i^*$ with  $w_{i^*}(t) \in [w_{min},(1+2\delta_1) w_{min}]$
and
$\bar{w}-w_{i^*}(t) \geq  \delta.$ 
Then, by \autoref{lem:bounds1}, with probability at least $1-1/n^6$,
\[ w_{i^*}(t+T) \geq w_{i^*}(t) + T \eta \delta_2/2 \geq  w_{i^*}(t) + w_{i^*}(t) \frac{T \eta \delta_2}{2 (4\sqrt{p}))}.  
\]

Note that in the analogous cases two and three we have
for any neurons $i^*$ with  $w_{i^*}(t) \in [w_{max}(1-2\delta_1), w_{max}]$ 
that
\[ w_{i^*}(t+T) \leq  w_{i^*}(t) - T \eta \delta_2/2 \leq  w_{i^*}(t) - w _{i^*}(t) \frac{T \eta \delta_2}{2 (4\sqrt{p}))} . \]

Let $\delta_4= T\eta\delta_2 / (8\sqrt{p})$.
Thus, either way

\[\psi(jT) \leq (1-\delta_4) \psi((j-1)T). \]

Using the fact that $\log(1+x)\geq 2x$ for $x \in (-1/2,0)$, we get that after \[j^* = \log_{1-\delta_4}(\delta/\psi(0))= \frac{\log(\delta/\psi(0))}{\log(1-\delta_4)} \leq \frac{\log(\delta/\psi(0)) }{-2\delta_4} = \frac{\log(\psi(0)/\delta) }{2\delta_4}
\] intervals of length $T$ the $\psi()$ is within an error of at most $2\delta$ and stays there by assumption for $n^6$ rounds.
Thus the total number of rounds is $Tj^*$.
The bound from the claim follows by observing that term $\eta T /\delta_4$ is a small polynomial in $p$ and $w$ and $\delta$.

Finally, we consider the time  required for weights $i \not \in F$ to decreases below $k^{-2\lmax}$.
After the weights in $F$ are close to there target, we have
that $z(t)\geq p k \bar{w}/2.$
Thus at this point, the weights decrease  changes as follows every round

 \[w_i(t)= w_i(t-1)(1 - \eta z(t-1)^2 )\geq w_i(t-1)(1 - \eta p^2 k^2 \bar{w}^2/4)) .
 \]
 Thus, the potential halves every $20/(\eta p^2 k^2 \bar{w}^2)$ rounds.
Since the potential only needs to drop by a factor of $k^{2\lmax}$, the bound follows.




\iffuture
\section{Extension: Overlap}\label{sec:overlap}

We would like to extend the work to allow limited overlap.  This section contains some notes collected from various places about this.  Other useful ideas may appear in a separate note overlap.tex.  

\paragraph{Tradeoff between tolerated noise and amount of overlap:}
It seems that there is some kind of tradeoff between the amount of noise that can be corrected and the amount of overlap that can be handled.  Pin this down.  Probably this can be stated in terms of the recognition problem, without learning.

Consider first the simple case of two concepts $c$ and $c'$, at the same level.
Then total presentation of the children of $c'$ should not trigger firing for $c$.
Let $x$ be the max overlap of these two sets of children.
In terms of concepts only (not networks), we need simply that $x < r_1 k$; here $r_1 k$ is the threshold below which the network should not recognize $c$.

In terms of the network, we can consider the situation after learning, so the learned edges all have weights near $1/sqrt(k)$.
Then firing of the neurons for $c'$ results in at most $x/sqrt(k)$ total weight incoming to $rep(c)$, which should be less than $r_1 k / sqrt(k)$ total weight.
That should be less than the threshold required for $c$ to fire.  That is consistent with our threshold setting.

But this was just a very simple case, just two concepts with overlapping sets of children.  In general, we would need to avoid overlapping among all the concepts at the same level that might be presented at the same time.  
Thus, the children of one concept cannot have overlap of $r_1 k$ with the union of the children of all the other concepts at the same level that might be presented (as part of presenting the same concept hierarchy).  It remains to define this non-overlapping condition.  It will involve multi-levels of the hierarchy.

\paragraph{Limiting the amount of overlap:}
Here is a candidate assumption for limiting the amount of overlap.
Let $c \in C_\ell$, where $1 \leq \ell \leq \lmax$.
Let $C' = \bigcup_{c' \in (C_{\ell} - \{ c \})} children(c')$ be the union of the sets of children of all the other concepts in $C_\ell$.
Then $|children(c) \cap C'| < r_1 k$.

\paragraph{Need for a new mechanism:}
To handle overlap, I think we will need a new mechanism, something that can be used to prevent a (considerably) different concept $c'$ from becoming bound to the same neuron that is already bound to a concept $c$.  In the presence of interleaving, this mechanism has to "kick in" as soon as the neuron first becomes engagesd in learning for $c$.  And it also has to work after the first concept $c$ is fully learned.

I don't think that comparing the potentials will be enough.  Consider the case where concept $c$ is fully learned, and there is only one child in the intersection of children(c) and children(c').  Then when $c'$ is first shown, $rep(c)$ will get approximately 1/sqrt(k) of total incoming potential.  But a new neuron will have only $k/k^{\lmax}$.  Can we assume this mechanism is built into the WTA somehow, i.e., that the WTA remembers when it has selected something already?  

Another idea:  Suppose that our model allowed a neuron to have two different kinds of incoming "potential":  one the ordinary potential, and the other just the number of incoming neighbors that fire (this can be viewed as potential, assuming the weights are all equal to 1).  Then we could have a WTA-style mechanism that chooses the highest potential neuron, \emph{from among those that have enough children firing}.  That seems like a simple way to rule out the bad cases of a few incoming neurons with high weights.  I bet this sort of think is implementable in our model, might even be bio-plausible, I don't know.

Slight correction, based on a conversation with Brabeeba:  The new potential can't just count the number of incoming neighbors that fire, since we have all-to-all connectivity, so the layer 0 neurons that fire for the new concept $c'$ will all be incoming neighbors of $rep(c)$, always leading to a count of $k$.  To correct for this, we can use a trick that others (Ila Fiete, e.g.) have used:  ignore all the firing neighbors whose weights are "too small", e.g., less than $\frac{1}{k^{\lmax + b}}$ for some suitable $b$  The idea is that once learning is complete, the irrelevant weights should have been reduced to near zero, so they will be ignored from then on by the WTA.  Note that this should give us a particular target for the decreasing weights.

Frederik also had an idea, about allowing neurons to remember whether their weights have changed, or something like that.

Brabeeba suggested that we might delay the invocation of WTA for a new learning instance until some weights have had a chance to increase (enough so that the old rep neuron is no longer the strongest one).  I don't see how that would work.

\nnote{
This is an old note.  I don't think it applies---it seems to be concerned about the case where a completely arbitrary $k$ inputs might arrive, and they can be enough to trigger both concepts' reps to fire.  But we have been assuming that the inputs aren't arbitrary, but are actual (parts of) particular concepts.

What I see now is just that, to avoid confusion, the number $x$ of overlapping children between two concepts should satisfy $x < r_1 k$.  Which does make sense for very small $r_1$.  Of course, we will want to extend this to more complicated types of overlap.

Old note starts here:

Consider, for example, two items $a_1$ and $a_2$ at level $1$ in the
data space, each with exactly $k$ children.  How many children might
they be allowed to have in common?
The problem is to avoid confusion, that is, a situation in which both
$a_1$ and $a_1$ might be recognized.

Let's consider the case where only $k$ inputs are presented.
So then the problem would be if the sets $children(a_1)$ and
$children(a_2)$ shared at least $r_1 k$ members.   If this doesn't
happen, then $k$ inputs aren't capable of supporting both $rep(a_1)$ and
$rep(a_2)$.

This seems to make sense only if $r_1$ is greater than $1/2$.
Let's say $r_1 = 1/2 +\varepsilon$.
To have confusion, it must be that $x$, the number in the overlap of
the two sets of children, should satisfy $2 r_1 k - x \leq k$.
So to avoid confusion, we need the opposite, that $2 r_1 k - x > k$, or
$x < (2 r_1 - 1)k = 2\varepsilon k$.

For example, if $r_2 = .7$, and $r_1 = .55$ then $\varepsilon = .05$ and
the overlap $x$ must be less than $.1 k$.
}
\else
\fi

\iffuture

\section{Sparser Connections}\label{sec:sparse}
It is easy to see that we do not require each neuron on layer $\ell$ to be connected to all neurons on layer $\ell-1$.
Our results and analysis also work with minor modifications for probabilistic connections, where each edge is present independently w.p. $p \in (0,1]$ for $p$ large enough. What is important is that there is at least one neuron per layer that that is connected to all $k$ children that belong to the concept that is being learned. In the most trivial version, we assume  we have $n$ neurons per layer, we require that $(1-p^k)^n$ is small enough so that we can take union bound over all sub-concepts. This is trivially the case for $p=\Omega(1)$ and even for smaller levels. Furthermore, it is not actually necessary for the neuron on layer $\ell$ to have connection to all $k$ children, for example if $r_2-r_1$ is sufficiently large and the neuron is connected to $r_2$ of the children, then that this suffices.

\FloatBarrier
\section{Guaranteeing the Winner-Take-All Assumption}
\label{sec:WTA}

In order to guarantee these assumption, we will generalize 
our model in various ways:
First, we introduce firing strengths that are  in $[0,1]$ (as opposed to binary values).

Second, we allow a second type of firing functions: the identity function. Thus we have two firing functions, the identity function and the threshold function.

The firing strength of a neuron is determined by the firing function $f(\cdot)$ and the potential, i.e., 
$y^{(u)}(t) = f(p^{(u)}(t-1))$.
We will use two functions in this paper
\begin{enumerate}
\item the \emph{identity function} $f(p^{(u)}) = p^{(u)}$, and
    \item the \emph{threshold function} $ f(p^{(u)}) =  \begin{cases}
1 & \text{if $p^{(u)} \geq \tau$}, \\
0 & \text{otherwise,}
\end{cases}$
 where $\tau$ is some threshold parameter.
\end{enumerate}

Third, we extend the feed-forward network by introducing some feed-backward connections that will be used to update the weights.

Fourth,
in order to allow for learning, we need to describe the timing.
As before, we assume synchronous rounds.
Fix a round $t$.
When a neuron receives the input $z$ in that round, we label it for convenience $z^{(u)}(t-1)$; as we will see in the network depicted in \autoref{fig:WTAnetwork}, the value $z$ received is in fact the one calculated in round $t-1$.
After receiving  $z^{(u)}(t-1)$, the neuron updates its weights according to Oja's rule. For this we assume that the neuron
has access to $x^{(u)}(t-1)$, i.e., the output vector of the neurons on the layer below in round $t-1$.
Observe that   $z^{(u)}(t-1)=0$ means that the weights remain unchanged, due to the definiton of Oja's rule.



A main building block we use is a Winner-Take-All Module (WTAM), that we will use as a black box.  
\begin{itemize}
\item Input is an $n$ dimensional  vector $\mathbf{z}= z_1,\dots z_n$ in $[0,1]^n$.
\item Output is an $n$ dimensional vector $\mathbf{z'} = z'_1,\dots z'_n$ in $[0,1]^n$ where $z'_i=z_i$, with $i=\arg\max_i\{ z_1,\dots z_n\}$ and for all $j\neq i$ we have $z'_j = 0$.
\end{itemize}
The above is an extension of \cite{CamITCS} from binary values to rates, we leave the exact implementation of this as an open problem.
Depending on the implementation,
some of the neurons of the WTAM may include inhibitory neurons.
We assume here that the WTAM computes the output in only one round.

Putting everything together yields \autoref{as:WTA}.

\begin{figure}[ht!]
\centering

\end{figure}

\begin{figure}
\centering
  \includegraphics[page=2,trim={0 8cm 12cm 0},clip,width=0.5\textwidth]{plan}
  \captionof{figure}{A neuron with activation function $f(\cdot)$ and three feed-forward synaptic inputs (black), and one feed-backward input for learning (blue).}
\label{fig:one_neuron}
\end{figure}

\begin{figure}
  \centering
 \includegraphics[page=1,width=0.89\textwidth]{plan}
  \caption{Model of the network with learning.}
  \label{fig:WTAnetwork}
\end{figure}

We are ready to describe the complete learning network, which is depicted in  \autoref{fig:WTAnetwork}.
The idea is that we take an arbitrary network $\cal{N}$ using the Winner-Take-All assumption and turn it into a network $\cal{N}'$ that ca be implemented in a model that requires only the simple adjustments described above.
For every layer $\ell\geq 1$, we add in $\cal{N}'$ an intermediate layer and a WTAM. The input at layer $0$ propagates one layer per round forward through the network.
The neurons on intermediate layer $\ell$ receive the input in round $2\ell -1$. In round $2\ell$, the neuron that had the largest potential on intermediate layer $\ell$ receives a learning feedback $z>0$. Again, if the learning feedback is zero, i.e, $z=0$, then the weights remain unchanged.

In order to make sure that only the neuron on the right level learn, we can simply count the number of input neurons firing and only activated the WTAM at layer $\ell$ if the number of ones in input is exactly $k^\ell$ (this can be done by using inhibitory neurons in addition with auxiliary neurons).

\else
\if


\iffuture
\snote{

\section{Thoughts on single-layer learning}

This is not edited from before so must be wildly inconsistent in
notation etc.  But it might have some useful ideas.

Restrict my data space to levels $0$ and $1$.  As usual, each item $a$
at level $1$ has a set of children, $children(a)$, at level $0$.  The
various sets $children(a)$ may overlap,  but we will probably want
sufficient separation (symmetric difference of sets, or Hamming
distance between binary representations).

Define some parameters:
Assume the data space has $n = n_0$ level $0$ items, and $n_1$ level $1$ items.
Assume each level $1$ item has exactly $m$ children, chosen from among
the level $0$ items.

Assume the network has $n = n_0$ input neurons, and $2k << n_1$ output
neurons.  It may also have some internal neurons.
We assume a 1-1 correspondence between the level $0$ data items and
the input neurons.
For a level $0$ item $a$, let $rep(a)$ denote its corresponding
(representing) input neuron.

We assume that a sequence of $k$ sets $B$ of the form
$children(a)$ are presented, each for sufficiently long (based on the
time needed to perform learning).
I think we want to assume sufficient separation between all the
presented inputs.

At the end, we want to have each such set `bound' to some unique
output neuron.  This means that a later presentation of any such set should
lead to its corresponding output neuron firing.
Probably the same for something `close' to such an input set.
And also, inputs that are `far' from such an input set should not
cause its output to fire.

Connections:  As above, consider both static and random.  First
consider static.  We will pin down the precise properties we need for
the connections.  Then in a second stage we will use JL-style random
projection and try to ensure the same properties.

Properties we assume (these will have to be loosened when we introduce
randomness):
\begin{enumerate}
\item
For every $a \in A_1$, assume a nonempty subset $R(a)$ of output
neurons.  These neurons will be the candidate sets for representations
for $a$.
\item
For every $a \in A_1$, every $y \in R(a)$, and every $b \in children(a)$,
the network contains an edge $(rep(b),y)$.
That is, there are edges from all the inputs representing children of
$a$ to every candidate representation neuron for $a$.
Say, the edges initially have weight $1$.
\item
No other edges are present.
\end{enumerate}

The network will work as follows:
This may be related to our renaming algorithm, step 3.
When the first input $B^1 = children(a_1)$ is presented, it triggers
all its possible representations $R(a_1)$ to fire.  Well, that is not
so good---we want only one to fire, and suppress the others.
Use a simple WTA?
The chosen one can inhibit the others.

Then $B^1$ continues firing for a long time, and the chosen
$rep(a_1)$ also fires, but no other neurons in $R(a_1) - \{ rep(a_)
\}$ fire.
Using some Hebbian rule, this has the effect of strengthening all the
edges of the form $(rep(b),rep(a_1))$ for every $b \in B^1$.
Strengthen them enough so some good fraction of the members of $B^1$
would be enough to trigger firing.
At the same time, this would weaken all the other edges into
$rep(a_1)$, which would prevent different input sets to bind to the
same neuron $rep(a_1)$.

We need to make sure that the other sets $B^2, \ldots$ have enough
alternative neurons so that they will always have something they can
bind to, even though some of the neurons are already bound to other
input sets.

Q:  What is `enough' here?  We need another assumption about this.
worst-case?  Like, each level $1$ item needs enough so that if all its
prececessors take one of yours you will still have another one left.   That
would be $k$ alternatives, for each level $1$ item in the data space.

But randomness might actually give us something better here.
Suppose that all of the level $1$ items choose a set of $k'$ reps
independently, uniformly among all the possible size-$k'$ subsets of
the $2k$ outputs.
Here, we look for $k' < k$.
Then how big does $k'$ have to be to get high probability that
everyone has a remaining choice?
Well, since there are only $k$ input sets, in the worst-case, half of
the possible outputs remain, and if the choices are random, it seems
that the expectation of number needed to find something unused is
roughly 2.
What do we need to get high probability $1 - \delta$?

A simplified version of the problem:
Try to minimize $k'$ such that:
Given a sequence of $k$ length-$k'$ sequences of elements from a set
of size $2k$.  Each sequence consists of distinct elements.  The
different sequences are chosen independently.
Choose the first element of the first sequence.
For subsequent sequences, choose the first element of that sequence
that hasn't yet been chosen.
How big does $k'$ have to be to get probability at least $1 - \delta$
that everyone gets a choice?

A bit of calculation needed here...
Use a union bound over all the $k$ sequences?
For one sequence, try to get probability of success at least $1 - \delta/k$.
Since half the choices are always correct, we need $(1/2)^{k'} <
\delta/k$.
That is, $k'  > \log(k) + \log(1/\delta)$, or something like that.

\paragraph{Noise-tolerance:}
Let's assume that, after learning is complete, for any level $1$ item
$a$, the firing of a fraction $r$ of the reps for $children(a)$ are
enough to trigger $rep(a)$'s firing.
We can achieve this by a Hebbian learning rule, given sufficient time.

Now suppose that we want to avoid confusion, that is, a situation
where $k$ input neurons fire, and cause both $rep(a)$ and $rep(b)$ to
fire, for two distinct level $1$ neurons $a$ and $b$.
For this, we need an assumption about the separation between children sets.
So assume that $|children(a) \cap children(b)| \leq r' k$, that is, at most
$r'$ fraction can overlap.
How large does $r'$ need to be to permit confusion to happen?

For confusion to occur, we need a set of $k$ inputs firing, among
which we have at least $r k$ of the $rep(children(a))$ neurons firing
and at least $r k$ of the $rep(children(b))$ neurons firing.
These two rep sets can have at most $r' k$ neurons in common.
Thus, in order to have confusion, we would have to have $k \geq 2rk - r'k$.
This implies that $r' \geq 2r - 1$.
Thus, we claim that if $r' < 2r - 1$, then confusion can't occur, even
with noise-tolerance.

For example, suppose that $r = 2/3$, that is, $2/3$ of the inputs are
enough to make a rep of a level $1$ item fire,
Then confusion can't occur if $r' < 1/3$, that is, if the input sets
have less than $1/3$ overlap.

TBD:  Work out all details.  See how JL-style randomness can ensure
the needed properties.   Continue this with more levels.

\section{Thoughts on more elaborate representations}
\label{sec:  more-reps}

All of these results produce a single representing neuron for each
individual item.
But we might sometimes want more neurons per item, perhaps to add
resiliency, or to allow us to `pack'' representations for more items
into a network of a given size.

For example, we might consider a set of neurons of some particular
size $k$.

We might want some clustering properties, saying that `similar''
items  (according to some  measure of closeness for data items)
should have `close'' representations (according to some measure of
closeness for sets of neurons).
And `dissimilar'' inputs should have `distant'' representations.

Interesting algorithmic issues may arise here.
We might want networks such that, as they learn a representation for
one item, adjusts weights so that later presentations of dissimilar
items become more likely to yield distant representations.
This will involve some inhibition mechanism.
Ideas may arise from Lili's inhibitory protocol, or the third stage
in the renaming paper.

\noindent
\emph{Note:}
This reminds me of something I heard in a talk in 2013, from a 
researcher at Radcliffe, about how the representations of odors in the
fruit fly brain adjust themselves over time to produce better
separation.

}
\else
\fi

\section{Auxiliary Content}

The following is a slightly modified version of Theorem 5.2 in \cite{dubhashi2009concentration}, which we use in \autoref{lem:bounds1} and \autoref{lem:bounds2}.
\begin{theorem}[Azuma-Hoeffding inequality - general version~\cite{dubhashi2009concentration}] \label{pro:hoeff}	
	Let $Y_0, Y_1,\dots$ be a martingale with respect ot the sequence $X_0,X_1,\dots$. Suppose also that $Y_i$ satisfies $a_i \leq Y_i -Y_{i-1}\leq b_i$ for all $i$.
As an example, the $engaged$ flag could be used to ensure that, in any round, only one neuron in the network is prepared to learn.
	\[
		\Pr{|Y_n - Y_0| \geq t } \leq  2\exp\left(-\frac{2t^2}{\sum_{i=1}^n (b_i -
		a_i)^2}\right).
		\]
\end{theorem}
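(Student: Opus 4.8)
The plan is to prove this by the standard exponential‑moment (Chernoff) argument applied to the martingale differences, with Hoeffding's lemma supplying the one‑step estimate. First I would write $D_i = Y_i - Y_{i-1}$, so that $Y_n - Y_0 = \sum_{i=1}^n D_i$; the martingale property gives $\E{D_i \mid \mathcal{F}_{i-1}} = 0$, where $\mathcal{F}_{i-1} = \sigma(X_0,\ldots,X_{i-1})$, while the hypothesis gives the pointwise bound $a_i \le D_i \le b_i$. Fix $\lambda > 0$. The goal is to control $\E{e^{\lambda(Y_n - Y_0)}}$, which I would do by peeling off the last term: since $\sum_{i=1}^{n-1} D_i$ is $\mathcal{F}_{n-1}$‑measurable,
\[
\E{e^{\lambda(Y_n - Y_0)}} = \E{ e^{\lambda \sum_{i=1}^{n-1} D_i}\, \E{e^{\lambda D_n} \mid \mathcal{F}_{n-1}} } .
\]

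The crux is \emph{Hoeffding's lemma}: if $Z$ satisfies $\E{Z} = 0$ and $a \le Z \le b$ almost surely, then $\E{e^{\lambda Z}} \le e^{\lambda^2(b-a)^2/8}$. I would prove this by convexity of $x \mapsto e^{\lambda x}$ on $[a,b]$: bounding $e^{\lambda Z}$ above by the chord through $(a,e^{\lambda a})$ and $(b,e^{\lambda b})$ and taking expectations (using $\E{Z}=0$) gives $\E{e^{\lambda Z}} \le \frac{b}{b-a}e^{\lambda a} - \frac{a}{b-a}e^{\lambda b} = e^{\varphi(\lambda)}$ for a suitable function $\varphi$. A direct computation shows $\varphi(0)=\varphi'(0)=0$, and $\varphi''(\lambda)$ equals the variance of a two‑point random variable supported on $\{a,b\}$, hence is at most $(b-a)^2/4$; Taylor's theorem then yields $\varphi(\lambda) \le \lambda^2(b-a)^2/8$. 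Applying this conditionally with $Z = D_n$ gives $\E{e^{\lambda D_n}\mid \mathcal{F}_{n-1}} \le e^{\lambda^2(b_n-a_n)^2/8}$, and iterating the peeling step $n$ times produces $\E{e^{\lambda(Y_n - Y_0)}} \le \exp\!\big( \tfrac{\lambda^2}{8}\sum_{i=1}^n (b_i - a_i)^2 \big)$.

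Finally I would apply Markov's inequality, $\Pr{Y_n - Y_0 \ge t} \le e^{-\lambda t}\,\E{e^{\lambda(Y_n - Y_0)}} \le \exp\!\big(-\lambda t + \tfrac{\lambda^2}{8}\sum_{i=1}^n (b_i - a_i)^2 \big)$, valid for every $\lambda > 0$, and optimize by choosing $\lambda = 4t / \sum_{i=1}^n (b_i - a_i)^2$, which makes the exponent equal to $-2t^2 / \sum_{i=1}^n (b_i - a_i)^2$. Running the same argument on the martingale $(Y_0 - Y_i)_i$ (whose increments $-D_i$ lie in $[-b_i,-a_i]$, still of width $b_i-a_i$) bounds $\Pr{Y_n - Y_0 \le -t}$ by the same quantity, and a union bound over the two tails gives the factor $2$, completing the proof. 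The only real obstacle is Hoeffding's lemma, and within it the estimate $\varphi''(\lambda) \le (b-a)^2/4$; the clean way to see it is to recognize $\varphi''(\lambda)$ as the variance of a Bernoulli‑type random variable taking values in $\{a,b\}$, after which the rest is routine manipulation of conditional expectations.
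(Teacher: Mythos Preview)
Your proof is correct and is the standard Chernoff--Hoeffding argument for this inequality. Note, however, that the paper does not supply its own proof of this statement: it is listed under ``Auxiliary Claims'' as a known result quoted (with slight modification) from Theorem~5.2 of Dubhashi and Panconesi, so there is no paper proof to compare against. Your derivation via Hoeffding's lemma and iterated conditioning is exactly the textbook proof one would find in that reference.
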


\ifarxiv

\else
\fi
\snote{Here is a reference for place cells.  I think O'Keefe is regarded as the pioneer of this research.
O'Keefe, John (1978). The Hippocampus as a Cognitive Map. ISBN 978-0198572060.
Also see the 1976 article:
https://www.sciencedirect.com/science/article/pii/0014488676900558}
\snote{I didn't know what you mean with place cells. These are the references however \cite{Okeefe78,Okeefe76}}

\snote{Here is a reference for Matt Wilson's motion coding research:
Dynamics of the hippocampal ensemble code for space
MA Wilson, BL McNaughton
Science 261 (5124), 1055-1058
}
\snote{Found it: \cite{wilson1993dynamics}}

\snote{  Maybe:
Peretz I, Zatorre RJ. Brain organization for music processing. AnnuRev Psychol2005;56:89–114.
https://www.ncbi.nlm.nih.gov/pubmed/15709930
} \snote{\cite{peretz2005brain}}

\end{document}